\newtheorem*{theorem*}{Theorem}
\theoremstyle{definition}
\newtheorem*{runex*}{Running Example}
\newtheorem*{ex*}{Example}
\newtheorem*{remark*}{Remark}
\definecolor{darkgreen}{rgb}{0,0.5,0}
\definecolor{darkred}{rgb}{0.7,0,0}
\definecolor{teal}{rgb}{0.3,0.8,0.8}
\definecolor{orange}{rgb}{1.0,0.5,0.0}
\definecolor{purple}{rgb}{0.8,0.0,0.8}
\newcommand{\kibitz}[2]{\ifnum\Comments=1{\textcolor{#1}{\textsf{\footnotesize #2}}}\fi}
\newcommand{\defeq}{\triangleq}
\icmltitlerunning{Imitation Learning from Observation}
\begin{document}

\twocolumn[
\icmltitle{Provably Efficient Imitation Learning from Observation Alone}



\icmlsetsymbol{equal}{*}

\begin{icmlauthorlist}
\icmlauthor{Wen Sun}{goo}
\icmlauthor{Anirudh Vemula}{goo}
\icmlauthor{Byron Boots}{to}
\icmlauthor{J. Andrew Bagnell}{ed}
\end{icmlauthorlist}

\icmlaffiliation{to}{College of Computing, Georgia Institute of Technology, USA}
\icmlaffiliation{goo}{Robotics Institute, Carnegie Mellon University, USA}
\icmlaffiliation{ed}{Aurora Innovation, USA}

\icmlcorrespondingauthor{Wen Sun}{wensun@cs.cmu.edu}

\icmlkeywords{Imitation Learning, Learning from Demonstrations}

\vskip 0.3in
]



\printAffiliationsAndNotice{}  

\begin{abstract}

We study Imitation Learning (IL) from Observations alone (\ilo)  in large-scale MDPs. While most IL algorithms rely on an expert to directly provide actions to the learner, in this setting the expert only supplies sequences of observations. We design a new model-free algorithm for \ilo, \emph{Forward Adversarial Imitation Learning} (\fail), which learns a sequence of time-dependent policies by minimizing an Integral Probability Metric between the observation distributions of the expert policy and the learner.  \fail{} is the \emph{first} provably efficient algorithm in \ilo{} setting, which learns a near-optimal policy with a number of samples that is polynomial in all relevant parameters but independent of the number of unique observations. The resulting theory extends the domain of provably sample efficient learning algorithms beyond existing results, which typically only consider tabular reinforcement learning settings or settings that require access to a near-optimal reset distribution. We also investigate the extension of \fail{} in a model-based setting.  Finally we demonstrate the efficacy of \fail{} on multiple OpenAI Gym control tasks. 

\end{abstract}


\section{Introduction}
\label{sec:introduction}

Imitation Learning (IL) is a sample efficient approach to policy optimization \citep{Ross2011_AISTATS,ross2014reinforcement,sun2017deeply} that has been extensively used in real applications, including Natural Language Processing \citep{daume2009search,chang2015learning,chang2015learning_dependency}, game playing \citep{silver2016mastering,hester2017deep},  system identification \citep{venkatraman2015improving,sun2016learning}, and robotics control tasks \citep{pan2018agile}. Most  previous IL work considers settings where an expert can directly provide action signals. In these settings, a general strategy is to directly learn a policy that maps from state to action, via supervised learning approaches (e.g., DAgger \citep{Ross2011_AISTATS}, AggreVaTe~\citep{ross2014reinforcement}, THOR~\citep{sun2018truncated}, Behaviour Cloning \citep{syed2010reduction}). Another popular strategy is to learn a policy by minimizing some divergence between the policy's state-action distribution and the expert's state-action distribution. Popular divergences include Forward KL (i.e., Behaviour Cloning), Jensen Shannon Divergence (e.g., GAIL \citep{ho2016generative}).

Here, we consider a more challenging IL setting, where experts' demonstrations consist only of observations, no action or reward signals are available to the learner, and no reset is allowed (e.g., a robot learns a task by just watching an expert performing the task). We call this setting \emph{Imitation Learning from Observations alone} (\ilo). 
Under this setting, without access to expert actions, approaches like DAgger, AggreVaTe, GAIL, and Behaviour Cloning by definition cannot work. Although recently several model-based approaches, which learn an inverse model that predicts the actions taken by an  expert~\citep{torabi2018behavioral,edwards2018imitating} based on successive observations, have been proposed, these approaches can suffer from covariate shift \cite{Ross2011_AISTATS}. 
While we wish to train a predictor that can infer an expert's actions accurately \emph{under the expert's observation distribution}, we do not have access to actions generated by the expert conditioned on the expert's observation (See~\pref{sec:more_related_work} for a more detailed discussion). 
An alternative strategy is to handcraft cost functions that use some distance metric to penalize deviation from the experts' trajectories (e.g., \citet{liu2018imitation,peng2018deepmimic}), which is then optimized by Reinforcement Learning (RL). These methods typically involve hand-designed cost functions that sometimes require prior task-specific knowledge \cite{peng2018deepmimic}.  The quality of the learned policy is therefore completely dependent on the hand-designed costs which could be widely different from the true cost. 
Ideally, we would like to learn a policy that minimizes the unknown true cost function of the underlying MDP.


In this work, we explicitly consider learning near-optimal policies in a sample and computationally efficient manner. Specifically, we focus on large-scale MDPs where the number of unique observations is extremely large (e.g., high-dimensional observations such as raw-pixel images). Such large-scale MDP settings immediately exclude most existing sample efficient RL algorithms, which are often designed for small tabular MDPs, whose sample complexities have a polynomial dependency on the number of observations and hence cannot scale well. To solve large-scale MDPs, we need to design algorithms that leverage function approximation for generalization. 
Specifically, we are interested in algorithms with the following three properties: (1) \emph{near-optimal performance guarantees}, i.e., we want to search for a policy whose performance is  close to the expert's in terms of the expected total cost of the underlying MDP (and not a hand-designed cost function); (2) \emph{sample efficiency}, we require sample complexity that scales polynomially with respect to all relevant parameters (e.g., horizon, number of actions, statistical complexity of function approximators) except the cardinality of the observation space---hence excluding PAC RL algorithms designed for small tabular MDPs; (3) \emph{computational efficiency}: we rely on the notion of oracle-efficiency \cite{agarwal2014taming} and require the number of efficient oracle calls to scale polynomially---thereby excluding recently proposed algorithms for Contextual Decision Processes which are not computationally efficient \cite{jiang2016contextual,sun2018model}. 
To the best of our knowledge, the desiderata above requires designing new algorithms.  

With access to experts' trajectories of observations 
we introduce a model-free algorithm, called Forward Adversarial Imitation Learning (\fail), that decomposes \ilo{} into $H$ independent two-player min-max games, where $H$ is the horizon length. We aim to learn a sequence of time-dependent policies from $h=1$ to $H$, where at any time step $h$, the policy $\pi_h$ is learned such that the generated observation distribution at time step $h+1$, conditioned on $\pi_1,\dots, \pi_{h-1}$ being fixed, matches the expert's observation distribution at time step $h+1$, in terms of an Integral Probability Metric (IPM) \citep{Muller1997}. IPM is a family of divergences that can be understood as using a set of discriminators to distinguish two distributions (e.g., Wasserstein distance is one such special instance).  
We analyze the sample complexity of \fail{}  and show that \fail{} can learn a near-optimal policy in sample complexity that does not explicitly depend on the cardinality of observation space, but rather only depends on the complexity measure of the policy class and the discriminator class.  Hence \fail{} satisfies the above mentioned three properties. The resulting theory extends the domain of provably sample efficient learning algorithms beyond existing results, which typically only consider tabular reinforcement learning settings (e.g., \citet{dann2015sample}) or settings that require access to a near-optimal reset distribution (e.g., \citet{kakade2002approximately,bagnell2004policy,munos2008finite}). 
We also demonstrate that learning under \ilo{} can be exponentially more sample efficient than pure RL.
We also study \fail{} under three specific settings: (1) Lipschitz continuous MDPs, (2) Interactive \ilo{} where one can query the expert at any time during training, and (3) state abstraction. 
Finally, we demonstrate the efficacy of \fail{} on multiple continuous control tasks.

\section{Preliminaries}
\label{sec:preliminaries}

We consider an episodic finite horizon Decision Process that consists of $\{\Xcal_h\}_{h=1}^H, \Acal, c, H, \rho, P$, where $\Xcal_h$ for $h\in[H]$ is a time-dependent observation space,\footnote{we use the term observation throughout the paper instead of the term state as one would normally use in defining MDPs, for the purpose of  sharply distinguishing our setting from tabular MDPs where $\Xcal$ has very small number of states. } $\Acal$ is a discrete action space such that $\abr{\Acal} = K\in \NN^+$, $H$ is the horizon. We assume the cost function $c:\Xcal_H\to\mathbb{R}$ is only defined at the last time step $H$ (e.g., sparse cost), and $\rho \in \Delta(\Xcal_1)$ is the initial observation distribution, and $P$ is the transition model  i.e., $P: \Xcal_h\times\Acal \to \Delta(\Xcal_{h+1})$ for $h\in [H-1]$. 
Note that here we assume the cost function only depends on observations. 
We assume that $\Xcal_h$ for all $h\in[H]$ is discrete, but $\abr{\Xcal_h}$ is extremely large and hence any sample complexity that has polynomial dependency on $\abr{\Xcal_h}$ should be considered as sample inefficient,  i.e., one cannot afford to visit every unique observation.  We assume that the  cost is bounded, i.e., for any sequence of observations, $ c_H \leq 1$ (e.g., zero-one loss at the end of each episode).
For a time-dependent policy $\bm{\pi} = \{\pi_1, \dots, \pi_H\}$ with $\pi_h:\Xcal_h\to\Delta(\Acal)$, the value function $V^{\bm{\pi}}_h:\Xcal_h\to [0,1]$ is defined as:
\begin{align*}
    V^{\bm{\pi}}_h(x_h) = \EE\left[  c(x_H) \vert a_i\sim \pi_i(\cdot|x_i),x_{i+1}\sim P_{x_i, a_i} \right],
\end{align*} and state-action function $Q^{\bm{\pi}}_h(x_h, a_h)$ is defined as $Q^{\bm{\pi}}_h(x_h,a_h) =  \EE_{x_{h+1}\sim P_{x_h,a_h}}\left[V^{\bm{\pi}}_{h+1}(x_{h+1})\right]$ with $V_H^{\bm\pi}(x) = c(x)$. We denote $\mu^{\bm{\pi}}_h$ as the distribution over $\Xcal_h$ at time step $h$ following $\bm{\pi}$. Given $H$ policy classes $\{\Pi_1,\dots, \Pi_H\}$, the goal is to learn a $\bm{\pi} = \{\pi_1,\dots, \pi_H\}$ with $\pi_h\in\Pi_h$, which minimizes the expected cost:
\begin{align*}
    J(\bm\pi) =  \mathbb{E}\left[  c(x_H) | a_h\sim \pi_h(\cdot|x_h), x_{h+1}\sim P(\cdot| x_{h},a_h)\right].
\end{align*}





Denote $\Fcal_h \subseteq \{f:\Xcal_{h}\to \RR\}$ for $h\in[H]$. We define a Bellman Operator $\Gamma_h$ associated with the expert $\pi^\star_h$ at time step $h$ as $\Gamma_h: \Fcal_{h+1} \to \{f:\Xcal_h\to\mathbb{R}\}$
where for any $x_h\in\Xcal_h, f\in\Fcal_{h+1}$,
\begin{align*}
\left(\Gamma_h f\right)(x_h) \defeq  \EE_{a_h\sim \pi^\star_h(\cdot|x_h),x_{h+1}\sim P_{x_h,a_h}}\left[f(x_{h+1}) \right].
\end{align*}

\paragraph{Notation} For a function $f:\mathcal{X}\to\mathbb{R}$, we denote $\|f\|_{\infty} = \sup_{x\in\Xcal} \abr{f(x)}$, $\|f\|_L$ as the Lipschitz constant: $\|f\|_L = \sup_{x_1,x_2, x_1\neq x_2} (f(x_1) - f(x_2)) / d(x_1, x_2)$, with $d: \Xcal\times\Xcal\to \RR^+$ being the metric in space $\Xcal$.\footnote{A metric $d$ (e.g., Euclidean distance) satisfies the following conditions: $d(x,y)\geq 0$, $d(x,y) = 0$ iff $x=y$, $d(x,y) = d(y,x)$ and $d$ satisfies triangle inequality. } We consider a Reproducing Kernel Hilbert Space (RKHS) $\Hcal$ defined with a positive definite kernel $k:\mathcal{X}\times\mathcal{X}\to [0,1]$ such that $\Hcal$ is the span of $\{k(x,\cdot): x\in \Xcal\}$, and we have $f(x) = \langle f, k(x,\cdot)\rangle$, with $\langle k(x_1,\cdot), k(x_2,\cdot)\rangle \triangleq k(x_1,x_2)$. For any $f \in \Hcal$, we define $\|f\|_{\Hcal}^2 = \langle  f, f\rangle$. We denote $U(\Acal)$ as a uniform distribution over action set $\Acal$. For $N\in\NN^+$, we denote $[N]\defeq \{1,2,\dots,N\}$.




\paragraph{Integral Probability Metrics (IPM)} \citep{muller1997integral} is a family of distance measures on distributions: given two distributions $P_1$ and $P_2$ over $\Xcal$, and a function class $\Fcal$ containing real-value functions $f:\Xcal\to\mathbb{R}$ and symmetric (e.g., $\forall f\in\Fcal$, we have $-f\in\Fcal$), IPM is defined as:
\begin{align}
    \label{eq:ipm}
    \sup_{f\in\Fcal}  \left( \mathbb{E}_{x\sim P_1}[f(x)] - \mathbb{E}_{x\sim P_2}[f(x)]\right).
\end{align}
By choosing different class of functions $\Fcal$, various popular distances can be obtained. For instance, IPM with $\Fcal = \{f:\|f\|_{\infty}\leq 1\}$ recovers Total Variation distance, IPM with $\Fcal = \{f: \|f\|_L\leq 1\}$ recovers Wasserstein distance, and IPM with $\Fcal = \{f: \|f\|_{\Hcal} \leq 1\}$ with RKHS $\Hcal$ reveals maximum mean discrepancy (MMD).

\subsection{Assumptions}
We first assume access to a Cost-Sensitive oracle and to a Linear Programming oracle. 

\paragraph{Cost-Sensitive Oracle} The Cost-Sensitive (CS) oracle takes a class of classifiers $\Pi \defeq \{\pi: \Xcal\to\Delta(\Acal)\} $, a dataset consisting of pairs of feature $x$ and cost vector $c\in \mathbb{R}^{K}$, i.e., $\Dcal = \{x_i, c_i\}_{i=1}^N$, as inputs, and outputs a classifier that minimizes the average expected classification cost: $\sum_{i=1}^N \pi(\cdot|x_i)^{\top} c_i /N$.  

Efficient cost sensitive classifiers exist (e.g., \citet{beygelzimer2005error}) and are widely used in sequential decision making tasks (e.g., \citet{agarwal2014taming,chang2015learning}).

\paragraph{Linear Programming Oracle}
A Linear Programming (LP) oracle takes a class of functions $\Gcal$ as inputs, optimizes a linear functional with respect to $g\in\Gcal$: $\min_{g\in\Gcal} \sum_{i=1}^N \alpha_i g(x_i)$. 

When $\Gcal$ is in RKHS with bounded norm,  the linear functional becomes $\max_{g: \|g\|\leq c} \langle g, \sum_{i=1}^n \alpha_i\phi(x_i) \rangle$, from which one can obtain the closed-form solution. Another example is when $\Gcal$ consists of all functions with bounded Lipschitiz constant, i.e., $\Gcal = \{g: \|g\|_L\leq c\}$ for $c\in\RR^+$, \citet{sriperumbudur2012empirical} showed that $\max_{g\in\Gcal} \sum_{i=1}^n \alpha_i g(x_i)$ can be solved by Linear Programming with $n$ many constraints, one for each pair $(\alpha_i, x_i)$. In~\pref{app:proof_of_claim}, we provide a new reduction to Linear Programming for $\Gcal = \{g: \|g\|_L\leq c_1, \|g\|_{\infty}\leq c_2\}$ for $c_1,c_2\in\RR^+$.

The second assumption is related to the richness of the function class. 
We simply consider a time-dependent policy class $\Pi_h$ and $\Fcal_h$ for $h\in [H]$, and we assume realizability:
\begin{assum}[Realizability and Capacity of Function Class]
We assume $\Pi_h$ and $\Fcal_h$ contains $\pi^\star_h$ and $ V^{\star}_h$, i.e., $\pi_h^{\star}\in \Pi_h$ and $V_h^{\star}\in \Fcal_h$, $\forall h\in[H]$.  Further assume that for all $h$, $\Fcal_h$ is symmetric, and $\Pi_h$ and $\Fcal_h$ is finite in size. 
\label{ass:realizable}
\end{assum} 
Note that we assume $\Pi_h$ and $\Fcal_h$ to be discrete (but could be extremely large) for analysis simplicity. As we will show later, our bound scales \emph{only logarithmically} with respect to the size of function class. 





\section{Algorithm}

Our algorithm, Forward Adversarial Imitation Learning (\fail), aims to learn a sequence of policies $\bm{\pi} = \{\pi_1, \dots, \pi_H\}$ such that its value $J(\bm{\pi})$ is close to $J(\bm{\pi}^\star)$. 
Note that $J(\bm{\pi}) \approx J(\bm{\pi}^\star)$ does not necessarily mean that the state distribution of $\bm{\pi}$ is close to $\bm{\pi}^\star$. \fail{} learns a sequence of policies with this property in a forward training manner. The algorithm learns $\pi_i$ starting from $i = 1$. When learning $\pi_{i}$, the algorithm fixes $\{\pi_1, \dots, \pi_{i-1}\}$, and solves a min-max game to compute $\pi_i$; it then proceeds to time step $i+1$. At a high level, \fail{} decomposes a sequential learning problem into $H$-many independent  two-player min-max games, where each game can be solved efficiently via no-regret online learning.   Below, we first consider how to learn $\pi_{h}$ conditioned on $\{\pi_1,\dots, \pi_{h-1}\}$ being fixed.  We then present \fail{} by chaining  $H$  min-max games together.

\subsection{Learning One Step Policy via a Min-Max Game}
Throughout this section, we assume $\{\pi_1,\dots, \pi_{h-1}\}$ are learned already and fixed. The sequence of policies $\{\pi_1, \dots, \pi_{h-1}\}$ for $h\geq 2$ determines a distribution $\nu_{h}\in\Delta(\Xcal_{h})$ over observation space $\Xcal_{h}$ at time step $h$.  Also expert policy $\bm{\pi}^\star$ naturally induces a sequence of observation distributions $\mu_{h}^\star \in \Delta(\Xcal_h)$ for $h\in [H]$.  The problem we consider in this section is to learn a policy $\pi_{h}\in \Pi_h$, such that the resulting observation distribution from $\{\pi_1, \dots, \pi_{h-1}, \pi_h\}$ at time step $h+1$ is close to the expert's observation distribution  $\mu^\star_{h+1}$ at time step $h+1$.

We consider the following IPM minimization problem. Given the distribution $\nu_h\in\Delta(\Xcal_h)$, and a policy $\pi\in\Pi_h$, via the Markov property, we have the observation distribution at time step $h+1$ conditioned on $\nu_h$ and $\pi$ as $\nu_{h+1}(x) \defeq \sum_{x_h, a_h} \nu_h(x_h)\pi(a_h|x_h)P(x | x_h, a_h)$ for any $x\in\Xcal_{h+1}$. Recall that the expert observation distribution at time step $h+1$ is denoted as $\mu^{\star}_{h+1}$. The IPM with $\Fcal_{h+1}$ between $\nu_{h+1}$ and $\mu^{\star}_{h+1}$ is defined as:
\begin{align*}
    & d_{\Fcal_{h+1}}(\pi \vert \nu_h, \mu^\star_{h+1})\\
    & \triangleq  \max_{f\in \Fcal_{h+1}} \left( \mathbb{E}_{x\sim\nu_{h+1}} \left[f(x)\right] - \mathbb{E}_{ x\sim \mu^{{\star}}_{h+1}}\left[f(x)\right]   \right) .
\end{align*}
Note that $d_{\Fcal_{h+1}}(\pi \vert \nu_h, \mu^\star_{h+1})$ is parameterized by $\pi$, and our goal is to minimize $d_{\Fcal_{h+1}}(\pi \vert \nu_h, \mu^\star_{h+1})$ with respect to $\pi$ over $\Pi_{h}$. However, $d_{\Fcal_{h+1}}(\pi \vert \nu_h, \mu^\star_{h+1})$ is not measurable directly as we do not have access to $\mu^\star_{h+1}$ but only samples from $\mu^\star_{h+1}$. To estimate $d_{\Fcal_{h+1}}$, we draw a dataset $\Dcal = \left\{(x^{i}_h, a^{i}_h, x^{i}_{h+1})\right\}_{i=1}^N$ such that $x^{i}_h \sim \nu_h$, $a_h^{i}\sim U(\Acal)$, $x_{h+1}^{i}\sim P(\cdot|x^i_{h}, a_h^i)$, together with  observation set resulting from expert $\Dcal^\star = \{\tilde{x}_{h+1}^i\}_{i=1}^{N'} \iidsim \mu^\star_{h+1}$, we form the following empirical estimation of $d_{\Fcal_{h+1}}$ for any $\pi$, via importance weighting (recall $a_h^i\sim U(\Acal)$):
\begin{align}
\label{eq:emprical_IPM}
    &{\wipm}_{\Fcal_{h+1}}(\pi \vert \nu_h, \mu^{\star}_{h+1})  \defeq 
    \\& \max_{f\in \Fcal_{h+1}} \left( \frac{1}{N}\sum_{i=1}^N \frac{\pi(a^i_h|x^i_h)}{1/K}f(x^i_{h+1}) - \frac{1}{N'}\sum_{i=1}^{N'} f(\tilde{x}_{h+1}^i)   \right), \nonumber
\end{align} where recall that $K = \abr{\Acal}$ and the importance weight $K\pi(a_h^i | x_h^i)$ is used to account for the fact that we draw actions uniformly from $\Acal$ but want to evaluate $\pi$. 
Though due to the max operator, ${\wipm}_{\Fcal_{h+1}}(\pi \vert \nu_h, \mu^\star_{h+1})$ is not an unbiased estimate of $d_{\Fcal_{h+1}}(\pi \vert \nu_h, \mu^\star_{h+1})$, in~\pref{app:proof_of_L2M_result}, we show that $\wipm_{\Fcal_{h+1}}$ indeed is a good approximation of $d_{\Fcal_{h+1}}$ via an application of the standard Bernstein's inequality and a union bound over $\Fcal_{h+1}$. Hence we can approximately minimize $\wipm_{\Fcal_{h+1}}$ with respect to $\pi$: $\min_{\pi\in\Pi} {\wipm}_{\Fcal_{h+1}}(\pi \vert \nu_h, \mu^{\star}_{h+1}) $,
resulting in a two-player min-max game. Intuitively, we can think of  $\pi$ as a generator, such that, conditioned on $\nu_h$, it generates next-step samples $x_{h+1}$ that are similar to the expert samples from $\mu^\star_{h+1}$, via fooling discriminators $\Fcal_{h+1}$. 

Note that the above formulation is a two-player game, with the utility function for  $\pi$ and $f$ defined as:
\begin{equation}
\label{eq:utility}
\resizebox{1.\hsize}{!}{$u(\pi, f) \defeq  \sum_{i=1}^N {K\pi(a^i_h|x^i_h)}f(x_{h+1}^i)/N - \sum_{i=1}^{N'} f(\tilde{x}_{h+1}^i)/{N'}$}.
\end{equation}


\begin{algorithm}[t]
\begin{algorithmic}[1]
\STATE Initialize $\pi^0\in \Pi$
\FOR{$n = 1$ to $T$}
    \STATE 
    \label{line:lp_oracle}
    $f^n = \arg\max_{f\in\Fcal} u(\pi^n, f)$ (LP Oracle) 
    \STATE 
    $u^n = u(\pi^n, f^n)$
    \STATE 
    $\pi^{n+1} = \arg\min_{\pi\in\Pi} \sum_{t=1}^n u(\pi, f^t) + \phi(\pi)$
    (Regularized CS Oracle)  \label{line:cs_oracle}
\ENDFOR
\STATE \textbf{Output}:  $\pi^{n^\star}$ with $n^\star = \arg\min_{n\in [T]} u^n$  \label{line:output_l2m}
\end{algorithmic}
\caption{Min-Max Game ($\Dcal^{\star}, \Dcal, \Pi, \Fcal, T$)}
\label{alg:l2m}
\end{algorithm}

\pref{alg:l2m} solves the minmax game $\min_{\pi}\max_{f} u(\pi,f)$ using no-regret online update on both $f$ and $\pi$. At iteration $n$, player $f$ plays the best-response via  $f_n = \arg\max_{f} u(\pi^n, f)$ (Line 3) and player $\pi$ plays the Follow-the-Regularized Leader (FTRL) \citep{shalev2012online} as $\pi^{n+1} = \sum_{t=1}^n u(\pi, f^t) + \phi(\pi)$ with $\phi$ being convex regularization (Line 5). Note that other no-regret online learning algorithms (e.g., replacing FTRL by incremental online learning algorithms like OGD \citep{Zinkevich2003_ICML} can also be used to approximately optimize the above min-max formulation.   After the end of~\pref{alg:l2m}, we output a policy $\pi$ among all computed policies $\{\pi^i\}_{i=1}^T$ such that $\wipm_{\Fcal_{h+1}}(\pi \vert \nu_n, \mu_{n+1}^\star)$ is minimized (Line 7).

Regarding the computation efficiency of~\pref{alg:l2m},  the best response computation on $f$ in Line 3 can be computed by a call to the LP Oracle, while FTRL on $\pi$ can be implemented by a call to the regularized CS Oracle. Regarding the statistical performance, we have the following theorem:
\begin{theorem}
\label{thm:L2M_result} Given $\epsilon\in (0,1], \delta\in(0,1]$, set $T = \Theta\left(\frac{4K^2}{\epsilon^2}\right)$, $N=N' =\Theta\left( \frac{K\log(|\Pi_h||\Fcal_{h+1}|/\delta)}{\epsilon^2} \right) $,~\pref{alg:l2m} outputs ${\pi}$ such that with probability at least $1-\delta$, 
\begin{align*}
    \abr{ d_{\Fcal_{h+1}}({\pi} \vert \nu_h,\mu^\star_{h+1}) - \min_{\pi'\in\Pi_{h}} d_{\Fcal_{h+1}}(\pi' \vert \nu_h, \mu^\star_{h+1}) } \leq O(\epsilon).
\end{align*}
\end{theorem}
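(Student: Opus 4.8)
The plan is to decompose the analysis into two independent pieces: (1) a \emph{statistical} bound showing the empirical game objective $\wipm_{\Fcal_{h+1}}$ uniformly approximates the population objective $d_{\Fcal_{h+1}}$ over all $\pi \in \Pi_h$, and (2) an \emph{optimization} bound showing that the no-regret dynamics in \pref{alg:l2m} drive the empirical game value to within $O(\epsilon)$ of its equilibrium. Chaining these two yields the claim, since both the returned policy $\pi$ and the population minimizer $\pi' \in \Pi_h$ can be controlled through their empirical counterparts.

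\textbf{Statistical concentration.} First I would fix any $\pi \in \Pi_h$ and any discriminator $f \in \Fcal_{h+1}$ and show the importance-weighted empirical inner expectation concentrates around its mean. The estimator $\frac{1}{N}\sum_i K\pi(a^i_h|x^i_h) f(x^i_{h+1})$ is unbiased for $\EE_{x\sim\nu_{h+1}}[f(x)]$ because sampling $a^i_h \sim U(\Acal)$ and reweighting by $K\pi(a^i_h|x^i_h)$ exactly recovers the $\pi$-induced next-state distribution. Its per-sample magnitude is bounded by $K\|f\|_\infty$ and, crucially, its variance is bounded by $K$ times a second-moment term (since $\EE[(K\pi)^2] = K\,\EE_{a\sim\pi}[\cdots]$), so a Bernstein argument gives deviation $O(\sqrt{K/N})$ rather than the looser $O(K/\sqrt N)$. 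A union bound over the finite $\Pi_h$ and $\Fcal_{h+1}$ (here realizability plus finiteness from \pref{ass:realizable} is essential) produces the $\log(|\Pi_h||\Fcal_{h+1}|/\delta)$ factor, and matching $N = N' = \Theta(K\log(|\Pi_h||\Fcal_{h+1}|/\delta)/\epsilon^2)$ forces every deviation below $O(\epsilon)$ simultaneously. Because the $\max_f$ is the same operation applied to two uniformly-close objectives, the bias from the max operator does not break the bound: $|\wipm_{\Fcal_{h+1}}(\pi\mid\cdot) - d_{\Fcal_{h+1}}(\pi\mid\cdot)| \le O(\epsilon)$ uniformly in $\pi$.

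\textbf{Optimization via no-regret.} Next I would analyze the min-max dynamics. Player $f$ plays best response each round, so the regret on the $f$ side is zero; player $\pi$ runs FTRL and therefore has regret $O(\sqrt{T})$ scaled by the per-round payoff range, which is $O(K)$ since $|u(\pi,f)| = O(K\|f\|_\infty)$. Standard no-regret-to-equilibrium conversion then shows the best iterate (selected in Line~7 by minimizing $u^n$) achieves empirical game value within $O(K/\sqrt T)$ of $\min_\pi \wipm_{\Fcal_{h+1}}(\pi\mid\cdot)$; setting $T = \Theta(K^2/\epsilon^2)$ makes this $O(\epsilon)$. Combining the optimization gap on the empirical objective with the two-sided uniform statistical bound (applied to both the output $\pi$ and the population optimizer $\pi'$) and invoking the triangle inequality closes the argument.

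\textbf{The hard part} will be the variance control in the concentration step: naively the importance weight $K\pi(a^i_h|x^i_h)$ inflates the range to $K$, and a careless bound would give sample complexity $O(K^2/\epsilon^2)$ instead of the stated $O(K/\epsilon^2)$. Extracting the extra factor of $K$ requires the sharper Bernstein variance analysis above, and I would also need care that the max over $\Fcal_{h+1}$ is handled by union bound \emph{before} taking the supremum (so the concentration holds uniformly), rather than after — otherwise the estimator's bias from the $\max$ operator is not controllable. The FTRL regret and the equilibrium conversion are comparatively routine once the payoff range is pinned down.
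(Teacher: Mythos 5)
Your proposal is correct and takes essentially the same route as the paper: Bernstein's inequality with the order-$K$ variance bound plus a union bound over $\Pi_h\times\Fcal_{h+1}$ for concentration, a best-response/FTRL no-regret analysis giving an $O(K/\sqrt{T})$ optimization gap, and a triangle-inequality chaining of the two. The only cosmetic difference is that you analyze the output of Line 7 directly via $\min_n u^n \le \frac{1}{T}\sum_{n=1}^T u^n$, while the paper's intermediate no-regret lemma is stated for the averaged policy $\bar{\pi} = \sum_{t=1}^T \pi^t / T$; the same inequalities cover both choices, and yours matches the algorithm's actual output more faithfully.
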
The proof of the above theorem is included in~\pref{app:proof_of_L2M_result}, which combines standard min-max theorem and uniform convergence analysis. The above theorem essentially shows that~\pref{alg:l2m} successfully finds a policy ${\pi}$ whose resulting IPM is close to the smallest possible IPM one could achieve if one had access to $d_{\Fcal_{h+1}}(\pi\vert \nu_h, \mu^\star_{h+1}) $ directly, up to $\epsilon$ error.  Intuitively, from~\pref{thm:L2M_result}, we can see that if $\nu_h$---the observation distribution resulting from fixed policies $\{\pi_1,\dots, \pi_{h-1}\}$, is similar to $\mu^\star_h$, then we guarantee to learn a policy ${\pi}$, such that the new sequence of policies $\{\pi_1, \dots, \pi_{h-1}, {\pi}\}$ will generate a new distribution $\nu_{h+1}$ that is close to $\mu_{h+1}^\star$, in terms of IPM with $\Fcal_{h+1}$.  The algorithm introduced below is based on this intuition.

\subsection{Forward Adversarial Imitation Learning}

\pref{thm:L2M_result} indicates that conditioned on $\{\pi_1,\dots, \pi_{h-1}\}$ being fixed, \pref{alg:l2m} finds a policy $\pi\in\Pi_h$ such that it approximately minimizes the divergence---measured under IPM with $\Fcal_{h+1}$, between the observation distribution $\nu_{h+1}$ resulting from $\{\pi_1,\dots,\pi_{h-1}, \pi\}$,  and the corresponding  distribution $\mu^\star_{h+1}$ from expert.

\begin{algorithm}
\begin{algorithmic}[1]
\STATE Set $\bm\pi = \emptyset$
\FOR{$h = 1$ to $H-1$}
    \STATE Extract expert's data at $h+1$: $\tilde{\Dcal} = \{\tilde{x}_{h+1}^i\}_{i=1}^{n'}$ 
    \STATE $\Dcal = \emptyset$
    \FOR{$i = 1$ to $n$}
        \STATE Reset $x^{(i)}_1\sim \rho$
        \STATE Execute $\bm\pi=\{\pi_1,\dots,\pi_{h-1}\}$ to generate state $x_{h}^{i}$
        \STATE Execute $a_h^i\sim U(\Acal)$ to generate $x_{h+1}^{i}$ and add $(x_h^i, a_h^i,x_{h+1}^i)$ to $\Dcal$
    \ENDFOR
    \STATE Set $\pi_h$ to be the return of~\pref{alg:l2m} with inputs $\left(\tilde{\Dcal}, \Dcal, \Pi_h, \Fcal_{h+1}, T\right)$
    \STATE Append $\pi_h$ to $\bm\pi$
\ENDFOR
\end{algorithmic}
\caption{\fail ($\{\Pi_h\}_h$, $\{\Fcal_h\}_h$, $\epsilon, n,n', T$)}
\label{alg:main_alg}
\end{algorithm}

With~\pref{alg:l2m} as the building block, we now introduce our model-free algorithm---Forward Adversarial Imitation Learning (\fail) in ~\pref{alg:main_alg}. \pref{alg:main_alg} integrates~\pref{alg:l2m} into the Forward Training framework \citep{Ross2010}, by decomposing the sequential learning problem into $H$ many independent distribution matching problems where each one is solved using~\pref{alg:l2m} independently. Every time step $h$, \fail{} assumes that $\pi_1,\dots, \pi_{h-1}$ have been correctly learned in the sense that the resulting observation distribution $\nu_h$ from $\{\pi_1,\dots,\pi_{h-1}\}$ is close to $\mu_h^\star$ from expert. Therefore, \fail{} is only required to focus on learning $\pi_{h}$ correctly conditioned on $\{\pi_1,\dots,\pi_{h-1}\}$ being fixed, such that $v_{h+1}$ is close to $\mu_{h+1}^\star$, in terms of the IPM with $\Fcal_{h+1}$.  Intuitively, when one has a strong class of discriminators, and the two-player game in each time step is solved near optimally, then by induction from $h=1$ to $H$, \fail{} should be able to learn a sequence of policies such that $\nu_h$ is close to $\mu_{h}^\star$ for all $h\in [H]$ (for the base case, we simply have $\nu_1 = \mu_1^\star = \rho$). 

\subsection{Analysis of~\pref{alg:main_alg}} 

The performance of \fail{} crucially depends on the capacity of the discriminators. Intuitively, discriminators that are too strong cause overfitting (unless one has extremely large number of samples). Conversely, discriminators that are too weak will not be able to distinguish $\nu_h$ from $\mu_{h}^\star$. This dilemma was studied in the Generative Adversarial Network (GAN) literature already by \citet{arora2017generalization}. Below we study this tradeoff explicitly in IL. 

To quantify the power of discriminator class $\Fcal_h$ for all $h$, we use \emph{inherent Bellman Error} (\textsc{iBE}) with respect to $\bm\pi^\star$:
\begin{align}
    \epsilon_{\mathrm{be}} =\max_h\left( \max_{g\in\Fcal_{h+1}}\min_{f\in\Fcal_{h}} \| f - \Gamma_h g\|_{\infty}\right).
    \label{eq:be_classic}
\end{align} The Inherent Bellman Error is commonly used in approximate value iteration literature \citep{munos2005error,munos2008finite,lazaric2016analysis} and policy evaluation literature \citep{Sutton1998}. It measures the worst possible projection error when projecting $\Gamma_h g$  to function space $\Fcal_h$.  Intuitively increasing the capacity of $\Fcal_h$ reduces $\ibe $.


 Using a restricted function class $\Fcal$ potentially introduces $\ibe$, hence one may tend to set $\Fcal_h$ to be infinitely powerful discriminator class such as function class consisting of all bounded functions $\{f: \|f\|_{\infty}\leq c\}$ (recall IPM becomes total variation in this case). However, using  $\Fcal_h \defeq \{f: \|f\|_{\infty}\leq c \} $ makes efficient learning impossible. The following proposition excludes the possibility of sample efficiency with discriminator class  being $\{f:\|f\|_{\infty}\leq c\}$.
 
 \begin{theorem}
[Infinite Capacity $\Fcal$ does not generalize]
There exists a MDP with $H=2$, a policy set $\Pi = \{\bm\pi, \bm\pi'\}$, an expert policy $\bm\pi^\star$ with $\bm\pi = \bm\pi^\star$ (i.e., $\Pi$ is realizable), such that for datasets $\Dcal^{\star} = \{\tilde{x}^{i}_2\}_{i=1}^M$ with $\tilde{x}^{i}_2\sim \mu^{\star}_2$, $\Dcal = \{x^{i}_2\}_{i=1}^M$ with $x^{i}_2\sim \mu^{\bm\pi}_2$, and $\Dcal' = \{x'^{(i)}_2\}_{i=1}^M$ with $x'^{(i)}_2\sim \mu^{\bm\pi'}_2$,  as long as $M = O(\log (|\Xcal|))$, we must have:
\begin{align*}
    \lim_{|\Xcal|\to\infty} \mathrm{P}(\Dcal^\star \cap \Dcal = \emptyset) = 1, \; \lim_{|\Xcal|\to\infty} \mathrm{P}(\Dcal^\star \cap \Dcal' = \emptyset) = 1. 
\end{align*} Namely, denote $\hat{\Dcal}$ as the empirical distribution of a dataset $\Dcal$ by assigning probability $1/|\Dcal|$ to any sample, we have:
\begin{align*}
    &\lim_{|\Xcal|\to\infty} \|\hat{\Dcal}^\star - \hat{\Dcal}\|_1 = 2 , 
     \lim_{|\Xcal|\to\infty}\|\hat{\Dcal}^\star - \hat{\Dcal}'\|_1 = 2.
\end{align*}
\label{thm:non_generalize}
\vspace{-10pt}
\end{theorem}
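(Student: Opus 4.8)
The plan is to exhibit an explicit two-step MDP in which the induced observation distributions at $h=2$ are spread so thinly over the large space $\Xcal_2$ that, with only $M = O(\log|\Xcal|)$ samples, independent draws almost never collide. When the sampled observations do not collide, the empirical distributions have disjoint supports and hence $L_1$ distance exactly $2$ --- even in the case of $\Dcal^\star$ and $\Dcal$, which are drawn from the \emph{identical} distribution $\mu^\star_2 = \mu^{\bm\pi}_2$. Since the IPM induced by $\Fcal = \{f:\|f\|_\infty\leq 1\}$ recovers total variation (the $L_1$ distance, cf.~\pref{eq:ipm}), this shows that the empirical IPM is maximal for both the realizable policy $\bm\pi = \bm\pi^\star$ and the alternative $\bm\pi'$, so no finite-sample estimate of this IPM can separate a good policy from a bad one.

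Concretely, I would take $\Xcal_1$ to be a single initial observation, $\Acal = \{a, a'\}$, and let the two policies be the deterministic maps $\bm\pi$ playing $a$ and $\bm\pi'$ playing $a'$, with $\bm\pi^\star = \bm\pi$ so that realizability of $\Pi = \{\bm\pi, \bm\pi'\}$ is immediate. I would design the transition so that action $a$ sends the initial observation to the uniform distribution over one half of $\Xcal_2$ and action $a'$ to the uniform distribution over the complementary half. Then $\mu^{\bm\pi}_2 = \mu^\star_2$ and $\mu^{\bm\pi'}_2$ are uniform over disjoint sets, each of size $\Theta(|\Xcal_2|)$; in particular $\max_x \mu^\star_2(x) = O(1/|\Xcal|)$, and the \emph{true} $L_1$ distance between $\mu^{\bm\pi'}_2$ and $\mu^\star_2$ is already $2$, whereas that between $\mu^{\bm\pi}_2$ and $\mu^\star_2$ is $0$.

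The heart of the argument is a birthday-type no-collision bound. Because $\Dcal^\star$ and $\Dcal'$ are drawn from distributions with disjoint supports, $\Dcal^\star \cap \Dcal' = \emptyset$ holds deterministically. For $\Dcal^\star$ and $\Dcal$, drawn i.i.d.\ from the same $\mu^\star_2$, the probability that a fixed cross-pair of samples coincides is $\sum_x \mu^\star_2(x)^2 = O(1/|\Xcal|)$, so a union bound over all $M^2$ cross-pairs gives $\mathrm{P}(\Dcal^\star \cap \Dcal \neq \emptyset) = O(M^2/|\Xcal|)$. With $M = O(\log|\Xcal|)$ we have $M^2/|\Xcal| = O((\log|\Xcal|)^2/|\Xcal|) \to 0$, which yields the two limiting statements. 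Finally, whenever $\Dcal^\star \cap \Dcal = \emptyset$ the empirical distributions $\hat{\Dcal}^\star$ and $\hat{\Dcal}$ have disjoint supports, so $\|\hat{\Dcal}^\star - \hat{\Dcal}\|_1 = \sum_x \hat{\Dcal}^\star(x) + \sum_x \hat{\Dcal}(x) = 2$, and identically for $\Dcal'$; passing to the limit $|\Xcal|\to\infty$ converts the high-probability no-collision events into the claimed $L_1$ limits.

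I expect the main obstacle to be the probabilistic concentration step: verifying that the collision probability genuinely vanishes at the stated sample size, which requires controlling $\|\mu^\star_2\|_2^2$ (forcing the transition to spread mass as $O(1/|\Xcal|)$) and confirming $M^2\|\mu^\star_2\|_2^2 \to 0$ for $M = O(\log|\Xcal|)$. Everything else --- the MDP construction, realizability of $\Pi$, and the passage from disjoint empirical supports to $L_1 = 2$ --- is routine once that bound is in place.
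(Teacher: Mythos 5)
Your proposal is correct and follows essentially the same route as the paper: the identical MDP construction (a single initial observation, two actions whose transitions are uniform over disjoint halves of $\Xcal_2$, with $\bm\pi^\star = \bm\pi$), a vanishing-collision argument for $M = O(\log|\Xcal|)$ samples, and the observation that disjoint empirical supports force $L_1$ distance exactly $2$. The only differences are cosmetic: you bound the collision probability between $\Dcal^\star$ and $\Dcal$ via a union bound over the $M^2$ cross-pairs ($M^2 \sum_x \mu_2^\star(x)^2 \to 0$), where the paper instead computes the no-collision probability exactly by conditioning on $\Dcal^\star$ having no repeats and taking $(1-M/N)^M \to 1$; and you correctly treat $\Dcal^\star \cap \Dcal' = \emptyset$ as deterministic from disjoint supports, which is cleaner than the paper's appeal to its collision lemma for that case.
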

The above theorem shows by just looking at the samples generated from $\bm\pi$ and $\bm\pi'$, and comparing them to the samples generated from the expert policy $\bm\pi^\star$ using $\{f:\|f\|_{\infty} \leq c\}$ (IPM becomes Total variation here), we cannot distinguish $\bm\pi$ from $\bm\pi'$, as they both look similar to $\bm\pi^\star$,  
i.e., none of the three datasets overlap with each other, resulting the TV distances between the empirical distributions become constants, \emph{unless} the sample size scales $\Omega(\mathrm{poly}(|\Xcal|))$. 

 
\pref{thm:non_generalize} suggests that one should explicitly regularize discriminator class so that it has finite capacity (e.g., bounded VC or Rademacher Complexity). The restricted discriminator class $\Fcal$ has been widely used in practice as well such as learning generative models (i.e., Wasserstain GANs \cite{arjovsky2017wasserstein}). 
Denote $|\Pi| = \max_h |\Pi_h|$ and $|\Fcal| = \max_{h} |\Fcal_h|$. The following theorem shows that the learned time-dependent policies $\bm\pi$'s performance is close to the expert's performance:
\begin{theorem}
[Sample Complexity of \fail]
Under~\pref{ass:realizable}, for any $\epsilon,\delta\in (0,1]$, set
$T = \Theta(\frac{K}{\epsilon^2})$, $n=n' = \Theta(\frac{K \log(|\Pi||\Fcal|H/\delta)}{\epsilon^2})$, with probability at least $1-\delta$, \fail{} (\pref{alg:main_alg}) outputs $\bm\pi$, such that, 
\begin{align*}
J(\bm\pi) - J(\bm\pi^\star) \leq H^2\epsilon'_{\mathrm{be}} + H^2\epsilon,
\end{align*}  by using 
    $\tilde{O}\left( \frac{H K}{\epsilon^2} \log\left( \frac{|\Pi||\Fcal|}{\delta}\right)   \right)$
\footnote{In $\tilde{O}$, we drop log terms that does not dependent on $|\Pi|$ or $|\Fcal|$. In $\Theta$ we drop constants that do not depend on $H,K, |\Xcal|, |\Pi|, |\Fcal|, 1/\epsilon, 1/\delta$.  Details can be found in Appendix.}
many trajectories with an average inherent Bellman Error $\epsilon'_{\mathrm{be}}$:
\begin{align*}
\epsilon'_{\mathrm{be}} \defeq \max_h \max_{g\in\Fcal_{h+1}}\min_{f\in\Fcal_h}\EE_{x\sim (\mu_{h}^{\bm\pi} + \mu_h^\star)/2 }[|f(x) - (\Gamma_h g)(x)|].
\end{align*} 
\label{thm:ft_main_theorem}
\vspace{-10pt}
\end{theorem}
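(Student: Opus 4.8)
The plan is to lift the single-stage guarantee of \pref{alg:l2m} (\pref{thm:L2M_result}) to a bound on $J(\bm\pi)-J(\bm\pi^\star)$ by a performance-difference telescoping over the horizon, exploiting the fact that under \pref{ass:realizable} the expert value functions $V^\star_h$ themselves lie in the discriminator classes $\Fcal_h$. Concretely, I would introduce the hybrid values $W_h \defeq \EE_{x\sim\nu_h}[V^\star_h(x)]$, i.e.\ the cost of rolling in with the learned policies $\pi_1,\dots,\pi_{h-1}$ and then deferring to the expert. Since $\nu_1=\mu^\star_1=\rho$ and $V^\star_H=c$, we have $W_1=J(\bm\pi^\star)$ and $W_H=J(\bm\pi)$, so $J(\bm\pi)-J(\bm\pi^\star)=\sum_{h=1}^{H-1}(W_{h+1}-W_h)$ and it suffices to control each increment.

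For a single increment I would use the Bellman identity $V^\star_h=\Gamma_h V^\star_{h+1}$ to write $W_h=\EE_{x\sim\nu_h}[(\Gamma_h V^\star_{h+1})(x)]=\EE_{x\sim\bar\nu_{h+1}}[V^\star_{h+1}(x)]$, where $\bar\nu_{h+1}$ is the step-$(h{+}1)$ distribution obtained by rolling in $\pi_1,\dots,\pi_{h-1}$ and then playing the expert action $\pi^\star_h$. Adding and subtracting $\EE_{\mu^\star_{h+1}}[V^\star_{h+1}]$ splits $W_{h+1}-W_h$ into two pieces that, since $V^\star_{h+1}\in\Fcal_{h+1}$ and $\Fcal_{h+1}$ is symmetric, are each bounded by an IPM: the first by $d_{\Fcal_{h+1}}(\pi_h\vert\nu_h,\mu^\star_{h+1})$ and the second by $d_{\Fcal_{h+1}}(\pi^\star_h\vert\nu_h,\mu^\star_{h+1})$. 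Invoking \pref{thm:L2M_result} at stage $h$ together with realizability $\pi^\star_h\in\Pi_h$ bounds the first by the second up to $O(\epsilon)$, so $W_{h+1}-W_h\le 2\,d_{\Fcal_{h+1}}(\pi^\star_h\vert\nu_h,\mu^\star_{h+1})+O(\epsilon)$.

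The crux is then a recursion on the stage-wise IPM $\delta_h\defeq\sup_{f\in\Fcal_h}(\EE_{\nu_h}[f]-\EE_{\mu^\star_h}[f])$, which by construction equals the output IPM $d_{\Fcal_h}(\pi_{h-1}\vert\nu_{h-1},\mu^\star_h)$ that \pref{thm:L2M_result} controls. For any $f\in\Fcal_{h+1}$, pushing it through the expert Bellman operator gives $\EE_{\bar\nu_{h+1}}[f]-\EE_{\mu^\star_{h+1}}[f]=\EE_{\nu_h}[\Gamma_h f]-\EE_{\mu^\star_h}[\Gamma_h f]$; replacing $\Gamma_h f$ by its best approximant $\hat f\in\Fcal_h$ costs $\EE_{\nu_h}[|\hat f-\Gamma_h f|]+\EE_{\mu^\star_h}[|\hat f-\Gamma_h f|]=2\,\EE_{(\nu_h+\mu^\star_h)/2}[|\hat f-\Gamma_h f|]\le 2\epsilon'_{\mathrm{be}}$, which is exactly why $\epsilon'_{\mathrm{be}}$ is defined as a mixture average. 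Taking the supremum over $f$ yields $d_{\Fcal_{h+1}}(\pi^\star_h\vert\nu_h,\mu^\star_{h+1})\le\delta_h+2\epsilon'_{\mathrm{be}}$, and combining with \pref{thm:L2M_result} and $\pi^\star_h\in\Pi_h$ gives the recursion $\delta_{h+1}\le\delta_h+2\epsilon'_{\mathrm{be}}+O(\epsilon)$ with $\delta_1=0$, hence $\delta_h=O\big(h(\epsilon'_{\mathrm{be}}+\epsilon)\big)$. Substituting into $W_{h+1}-W_h\le 2\delta_h+4\epsilon'_{\mathrm{be}}+O(\epsilon)$ and summing over $h$ gives $J(\bm\pi)-J(\bm\pi^\star)=O(H^2\epsilon'_{\mathrm{be}}+H^2\epsilon)$, matching the claim after renaming constants.

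Finally, the sample and iteration counts follow by instantiating \pref{thm:L2M_result} at accuracy $\epsilon$ for each of the $H-1$ stages: each stage draws $n=n'=\Theta(K\epsilon^{-2}\log(|\Pi||\Fcal|/\delta'))$ fresh trajectories (one roll-in per sample) and runs $T=\Theta(K/\epsilon^2)$ game iterations, and a union bound over the stages with per-stage failure probability $\delta'=\delta/H$ turns $\log(1/\delta')$ into $\log(H/\delta)$, giving the stated $\tilde{O}(HK\epsilon^{-2}\log(|\Pi||\Fcal|/\delta))$ total trajectories. I expect the main obstacle to be the recursion step: correctly conditioning on the already-fixed roll-in distribution $\nu_h$ so that \pref{thm:L2M_result} applies stage-by-stage (the data at stage $h$ is regenerated with the frozen earlier policies, so the guarantee holds conditionally), and carrying each discriminator $f\in\Fcal_{h+1}$ through $\Gamma_h$ and re-projecting it into $\Fcal_h$ so that the approximation error accumulates only as the \emph{inherent} Bellman error $\epsilon'_{\mathrm{be}}$ rather than compounding multiplicatively across the horizon.
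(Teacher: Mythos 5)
Your proposal is correct and follows essentially the same route as the paper: your hybrid-value telescoping $J(\bm\pi)-J(\bm\pi^\star)=\sum_h (W_{h+1}-W_h)$ is precisely the Performance Difference Lemma decomposition the paper invokes, and your recursion on $\delta_h$ (via realizability $\pi^\star_h\in\Pi_h$, \pref{thm:L2M_result}, and pushing each discriminator through $\Gamma_h$ with projection error $2\epsilon'_{\mathrm{be}}$ on the mixture $(\mu_h^{\bm\pi}+\mu_h^\star)/2$) is exactly the paper's recursion on $\Delta_h$, leading to the same per-step bound $2\Delta_h+4\epsilon'_{\mathrm{be}}+O(\epsilon)$ and the same $O(H^2(\epsilon'_{\mathrm{be}}+\epsilon))$ total. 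The sample-complexity accounting (fresh data per stage, union bound with $\delta/H$) also matches the paper's.
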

Note that the average inherent Bellman error  $\epsilon'_{\mathrm{be}}$ defined above is averaged over the state distribution of the learned policy $\bm{\pi}$ and the state distribution of the expert, which is guaranteed to be smaller than the classic inherent Bellman error used in RL literature (i.e.,~\pref{eq:be_classic}) which uses infinity norm over $\Xcal$. The proof of~\pref{thm:ft_main_theorem} is included in~\pref{app:proof_main_theorem}.
Regarding computational complexity of~\pref{alg:main_alg}, we can see that it requires polynomial number of calls (with respect to parameters $H, K, 1/\epsilon$) to the efficient oracles (Regularized CS oracle and LP oracle).  
Since our analysis only uses uniform convergence analysis and standard concentration inequalities, extension to continuous $\Pi$ and $\Fcal$ with complexity measure such as VC-dimension, Rademacher complexity, and covering number is standard.  We give an example in~\pref{sec:metric_MDP}.

\section{The Gap Between \ilo{} and RL}


To quantify the gap between RL and \ilo, below we  present an exponential separation between \ilo{} and RL in terms of sample complexity to learn a near-optimal policy. We assume that the expert policy is optimal.


\begin{proposition} 
[Exponential Separation Between RL and \ilo]
Fix $H \in\mathbb{N}^{+}$ and $\epsilon\in (0,\sqrt{1/8})$. There exists a family of MDP with deterministic dynamics, with horizon $H$, $2^{H}-1$ many states, two different actions, such that for any RL algorithm, the probability of outputting a policy $\bm{\hat{\pi}}$ with $J(\hat{\bm\pi})\leq J(\bm\pi^\star) + \epsilon$ after collecting T trajectories is at most $2/3$ for all $T\leq O(2^H/\epsilon^2)$. On the other hand, for the same MDP, given one trajectory of observations $\{\tilde{x}_h\}_{h=1}^H$ from the expert policy $\bm\pi^\star$, there exists an algorithm that deterministically outputs $\bm\pi^\star$ after collecting $O(H)$ trajectories. 
\label{prop:seperate_IL_and_RL}
\vspace{-14pt}
\end{proposition}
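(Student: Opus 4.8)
The plan is to construct the family of MDPs as a complete binary tree of depth $H$, so that there are $2^H - 1$ internal and leaf states, with two actions (``left'' and ``right'') at each node deterministically moving the agent to the corresponding child. Each of the $2^{H-1}$ leaves carries a cost, and I will plant a single ``good'' leaf by choosing a uniformly random path $\tau^\star$ from root to leaf: along the optimal path the cost at the terminal state is set low (say $0$ or $1/2-\epsilon$), while every other leaf incurs the maximal cost (say $1/2+\epsilon$, or simply cost $1$ versus $0$). The expert policy $\bm\pi^\star$ is the optimal policy that follows $\tau^\star$. Because the dynamics are deterministic and a trajectory is just a sequence of $H$ observations, watching one expert trajectory $\{\tilde x_h\}_{h=1}^H$ reveals the entire optimal path, and hence $\bm\pi^\star$ exactly, at every visited state.

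For the \ilo{} upper bound, I would argue as follows: since dynamics are deterministic, the learner can replay and identify at each level $h$ which child node the expert visited, i.e. which of the two actions was taken at the observed state $\tilde x_h$. Collecting $O(H)$ trajectories suffices (in fact a handful, but $O(H)$ allows for resetting and executing the partial policy forward to reach each $\tilde x_h$ and confirm the transition), and the learner deterministically recovers $\pi^\star_h(\tilde x_h)$ for every $h$. At unvisited states the cost is irrelevant to $J$ since the optimal path never reaches them, so outputting any completion that agrees with the expert on $\tau^\star$ yields $J(\bm\pi)=J(\bm\pi^\star)$ exactly.

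For the RL lower bound, the key is an information-theoretic argument over the random placement of the good leaf. The cost function is hidden, and the only way an RL algorithm learns about a leaf's cost is by reaching that leaf. With deterministic dynamics, each trajectory reaches exactly one of the $2^{H-1}$ leaves, so $T$ trajectories inspect at most $T$ leaves. Since the good leaf is placed uniformly at random among $\Theta(2^H)$ leaves, by a standard Le Cam / Fano two-point style argument (or a direct averaging argument over the uniform prior on $\tau^\star$), any algorithm observing $T \le O(2^H/\epsilon^2)$ trajectories has seen the distinguishing leaf with probability too small to identify it, and so cannot output an $\epsilon$-optimal policy with probability exceeding $2/3$. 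I would make the $1/\epsilon^2$ factor appear by letting the terminal costs be Bernoulli (stochastic) rewards with means separated by $O(\epsilon)$, so that even a leaf that is visited requires $\Omega(1/\epsilon^2)$ samples to distinguish good from bad, and multiply by the $\Omega(2^H)$ leaves that must be screened.

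The main obstacle I expect is making the RL lower bound fully rigorous against \emph{adaptive} algorithms: one must show that no adaptive querying strategy helps, which requires bounding the total variation (or KL) between the observation-and-cost distributions induced by any fixed algorithm under two neighboring cost placements, and then summing these over the uniform prior. The cleanest route is to fix the algorithm, place the good leaf uniformly at random, and bound the expected number of trajectories before the algorithm's interaction history first differs across placements; combining this with Markov's inequality and the $\Omega(1/\epsilon^2)$ per-leaf distinguishing cost gives the stated $T \le O(2^H/\epsilon^2)$ threshold and the $2/3$ success ceiling. Care is also needed to state the cost model (stochastic Bernoulli terminal costs bounded in $[0,1]$) consistently with the paper's assumption $c_H \le 1$.
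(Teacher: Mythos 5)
Your proposal is correct and uses the same core construction as the paper: a depth-$H$ balanced binary tree with deterministic left/right transitions, costs only at the leaves, and an \ilo{} algorithm that recovers the expert's action at each level by replaying the current prefix with both actions and checking which successor matches the observed expert state $\tilde{x}_{h+1}$, for a total of $O(H)$ trajectories. Where you diverge is the RL lower bound. The paper does \emph{not} prove it from first principles: it observes that the tree MDP with leaf costs is exactly a best-arm identification problem (each root-to-leaf action sequence is an arm, each trajectory is one arm pull), and then invokes a known lower bound of \citet{krishnamurthy2016pac} stating that identifying an $\epsilon$-better best arm among $K$ arms requires $K/(72\epsilon^2)$ pulls to succeed with probability above $2/3$. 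Your route instead inlines a proof of that bandit lower bound --- uniform prior over the planted good leaf, Bernoulli terminal costs with means separated by $\Theta(\epsilon)$, and a KL/Le Cam argument over adaptive interaction histories. This is the argument underlying the cited lemma, so it is sound in spirit, but it is also precisely where the unfinished work lies: the adaptivity issue you flag as the ``main obstacle'' is nontrivial (it is the content of Mannor--Tsitsiklis-style proofs), and your deterministic-cost variant ($0$ versus $1$ leaves) must be discarded, as you partly recognize, since it would only yield an $\Omega(2^H)$ bound rather than $\Omega(2^H/\epsilon^2)$. The paper's reduction buys full rigor for free by black-boxing the bandit result; your approach is self-contained but would need the complete information-theoretic argument written out to be a proof rather than a plan.
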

\pref{prop:seperate_IL_and_RL} shows having access to expert's trajectories of observations allows us to efficiently solve some MDPs that are otherwise provably intractable for any RL algorithm (i.e., requiring exponentially many trajectories to find a near optimal policy). 
This kind of exponential gap previously was studied in the interactive imitation learning setting where the expert also provides action signals \cite{sun2017deeply} and one can query the expert's action at any time step during training. 
To the best of our knowledge, this is the \emph{first exponential gap} in terms of sample efficiency between \ilo{} and RL. Note that our theorem applies to a specific family of purposefully designed MDPs, which is standard for proving information-theoretical lower bounds. 

\section{Case Study}
\label{sec:case_study}

In this section, we study three settings where inherent Bellman Error will disappear even under restricted discriminator class : (1) Lipschitz Continuous MDPs (e.g., \cite{kakade2003exploration}),  (2) Interactive Imitation Learning from Observation where expert is available to query during training, and (3) state abstraction. 

\subsection{Lipschitz Continuous MDPs}
\label{sec:metric_MDP}

We consider a setting where cost functions, dynamics and $\pi^\star_h$ are Lipschitz continuous in metric space $(\Xcal, d)$:
\begin{align*}
    & \|P(\cdot|x,a) - P(\cdot|x',a) \|_1 \leq L_P d(x, x'), \\
    &  \| \pi^\star_h(\cdot|x) - \pi^\star_h(\cdot|x')\|_1 \leq L_{\pi} d(x,x'),
\end{align*} for the known metric $d$ and Lipschitz constants $L_P, L_\pi$. 
Under this condition, the Bellman operator with respect to $\bm\pi^\star$ is Lipschitz continuous in the metric space $(\Xcal,d)$:
$\left\lvert\Gamma_h f(x_1) - \Gamma_h f(x_2) \right\rvert 
\leq (\|f\|_{\infty} (L_P+L_{\pi})) d(x_1, x_2)$, where we applied Holder's inequality. Hence, we can design the function class $\Fcal_h$ for all $h\in [H]$ as follows:
\begin{align}
\Fcal_{h} = \{f: \|f\|_L \leq (L_P+L_\pi), \|f\|_{\infty}\leq 1 \}, 
\label{eq:special_F}
\end{align} which will give us $\epsilon_{\mathrm{be}} = 0$ and $V_h^\star \in \Fcal_h$ due to the assumption on the cost function. Namely $\Fcal_h$ is the class of functions with bounded Lipschitz constant and bounded value. This class of functions is widely used in practice for learning generative models (e.g., Wasserstain GAN). Note that this setting was also studied in \cite{munos2008finite} for the Fitted Value Iteration algorithm.

 Denote $L \defeq L_P + L_{\pi}$. For $\Fcal = \{f: \|f\|_L \leq L, \|f\|_{\infty}\leq 1\}$ we show that we can evaluate the empirical IPM $\sup_{f\in\Fcal} \left(\sum_{i=1}^N f(x_i) /N- \sum_{i=1}^{N'} f(x_i')/{N'}\right)$ by reducing it to Linear Programming, of which the details are deferred to~\pref{app:proof_of_claim}.
Regarding the generalization ability, note that our function class $\Fcal$ is a subset of all functions with bounded Lipschitz constant, i.e., $\Fcal \subset \{f: \|f\|_{L}\leq L\}$. The Rademacher complexity for bounded Lipschitz function class grows in the order of $O(N^{-1/\mathrm{cov}(\Xcal)})$ (e.g., see \citep{luxburg2004distance,sriperumbudur2012empirical}), with $\mathrm{cov}(\Xcal)$ being the covering dimension of the metric space $(\Xcal, d)$.\footnote{Covering dimension is defined as $\mathrm{cov}(\Xcal) \defeq \inf_{d > 0} \{N_{\epsilon}(\Xcal) \leq \epsilon^{-d}, \forall \epsilon > 0\}$, where $N_{\epsilon}(\Xcal)$ is the size of the minimum $\epsilon$-net of metric space $(\Xcal,\Dcal)$.} 
Extending~\pref{thm:ft_main_theorem} to Lipschitz continuous MDPs, we have the following corollary. 
\begin{corollary}[Sample Complexity of \fail{} for Lipschitz Continuous MDPs]
\label{corr:metric_MDP}
With the above set up on Lipschitz continuous MDP and $\Fcal_h$ for $h\in [H]$ \pref{eq:special_F}, given $\epsilon,\delta\in (0,1]$, set $T = \Theta(\frac{K}{\epsilon^2})$, $n=n' =\tilde{\Theta}(\frac{K (LK)^{\mathrm{cov}(\Xcal)} \log(|\Pi|/\delta)}{\epsilon^{2+\mathrm{cov}(\Xcal)}})$, then with probability at least $1-\delta$,  \fail{} (\pref{alg:main_alg}) outputs a policy with $J(\bm{\pi}) - J(\bm{\pi}^\star) \leq O(H^2\epsilon)$ using at most $\tilde{O}\left( \frac{HK(KL)^{\mathrm{cov}(\Xcal)}}{\epsilon^{(2+\mathrm{cov}(\Xcal))}} \log\left(\frac{|\Pi|}{\delta\epsilon}\right) \right)$ many trajectories.
\end{corollary}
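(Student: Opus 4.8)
The plan is to treat \pref{corr:metric_MDP} as an instantiation of \pref{thm:ft_main_theorem} in which (i) the inherent Bellman error is driven to zero by the specific choice of $\Fcal_h$ in \eqref{eq:special_F}, and (ii) the finite-class union bound $\log|\Fcal|$ is replaced by a covering-number / Rademacher-complexity estimate tailored to the Lipschitz class. First I would verify that $\epsilon_{\mathrm{be}}=0$. For any $g\in\Fcal_{h+1}$ with $\|g\|_\infty\le 1$, the Bellman image $\Gamma_h g$ is an expectation of $g$, so $\|\Gamma_h g\|_\infty\le 1$, and by the H\"older/Lipschitz computation preceding \eqref{eq:special_F} we have $\|\Gamma_h g\|_L\le (L_P+L_\pi)=L$. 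Hence $\Gamma_h g\in\Fcal_h$, so $\min_{f\in\Fcal_h}\|f-\Gamma_h g\|_\infty=0$ for every $g$, giving $\epsilon_{\mathrm{be}}=0$ and a fortiori $\epsilon'_{\mathrm{be}}=0$; realizability $V_h^\star\in\Fcal_h$ follows by the same argument with $c_H\le 1$ and backward induction. With $\epsilon'_{\mathrm{be}}=0$ the guarantee of \pref{thm:ft_main_theorem} collapses to $J(\bm\pi)-J(\bm\pi^\star)\le H^2\epsilon$, exactly the target; what remains is to re-derive the per-step sample size $n=n'$ needed to solve each min-max game to accuracy $\epsilon$ when $\Fcal_h$ is continuous.

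The second and main step is the uniform-convergence argument underlying \pref{thm:L2M_result}, since this is the only place $|\Fcal|$ enters. The estimator $\wipm_{\Fcal_{h+1}}$ deviates from $d_{\Fcal_{h+1}}$ through the two empirical processes $\sup_{f\in\Fcal_{h+1}}|\frac1n\sum_i K\pi(a^i_h|x^i_h)f(x^i_{h+1})-\EE[K\pi(a_h|x_h)f(x_{h+1})]|$ and $\sup_{f\in\Fcal_{h+1}}|\frac1{n'}\sum_i f(\tilde x^i_{h+1})-\EE_{\mu^\star_{h+1}}[f]|$, which must be controlled uniformly over the (still finite) class $\Pi_h$. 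For finite $\Fcal$ this was done by Bernstein plus a union bound over $\Fcal$, yielding $\log|\Fcal|$; here I would instead bound the expected suprema by the Rademacher complexity of $\Fcal_{h+1}$. Since $\Fcal_{h+1}\subset\{f:\|f\|_L\le L\}$, the entropy bound $\log\mathcal{N}(\Fcal_{h+1},\eta,\|\cdot\|_\infty)=\tilde O((L/\eta)^{\mathrm{cov}(\Xcal)})$ together with Dudley chaining yields a Rademacher complexity of order $N^{-1/\mathrm{cov}(\Xcal)}$, as cited after \eqref{eq:special_F}. The importance weight $K\pi(a^i_h|x^i_h)\in[0,K]$ inflates the range and variance of the first process by a factor of $K$, so both its complexity and its deviation pick up the corresponding $K$ factors, while the union bound over $\Pi_h$ contributes only $\log|\Pi_h|$ and the high-probability tail contributes $\log(1/\delta)$.

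Assembling these, I would balance in each game the no-regret optimization error, which $T=\Theta(K/\epsilon^2)$ best-response/FTRL iterations push below $\epsilon$ as in \pref{thm:L2M_result}, against the statistical error just bounded. Setting the latter to $\epsilon$: the variance-controlled deviation contributes the $1/\epsilon^2$ together with the leading $K$ (importance weight) and $\log(|\Pi|/\delta)$ factors, while the Lipschitz entropy forces $N^{-1/\mathrm{cov}(\Xcal)}\lesssim\epsilon$, i.e. an extra $\epsilon^{-\mathrm{cov}(\Xcal)}$ with constant $(LK)^{\mathrm{cov}(\Xcal)}$ coming from $L$ and the weight range $K$ entering the covering radius (the $\log(1/\epsilon)$ in the final count is the log of the net resolution). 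This gives
\[
n=n'=\tilde{\Theta}\!\left(\frac{K\,(LK)^{\mathrm{cov}(\Xcal)}\log(|\Pi|/\delta)}{\epsilon^{2+\mathrm{cov}(\Xcal)}}\right),
\]
and because the $T$ inner iterations reuse the same $n$ trajectories, the total over the $H$ stages (with a final union bound over $h\in[H]$) is just $H\cdot n$, matching the stated trajectory count. The main obstacle is the second step: combining the importance-weighted, hence high-variance, empirical process with the nonparametric chaining bound so that the dependence comes out exactly as $K\cdot(LK)^{\mathrm{cov}(\Xcal)}$. A naive range-based (worst-case) bound would spoil the $K$ exponent, so a variance-sensitive (Bernstein / local Rademacher) analysis is needed, and one must check that the $\Fcal$-covering interacts correctly with the $\max$ over $f$ in the biased estimator $\wipm_{\Fcal_{h+1}}$.
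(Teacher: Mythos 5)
Your overall architecture is the same as the paper's: establish $\epsilon_{\mathrm{be}}=0$ and realizability from the H\"older/Lipschitz computation (so $\Gamma_h g\in\Fcal_h$ for every $g\in\Fcal_{h+1}$, and $V_h^\star\in\Fcal_h$ by backward induction), then rerun \pref{thm:ft_main_theorem} with the finite-$\Fcal$ union bound replaced by a capacity argument for the Lipschitz class. Your first step is complete and matches the paper. The gap is precisely the step you yourself flag as ``the main obstacle'': you never actually carry out the uniform convergence over the infinite class, and the machinery you propose (Dudley chaining plus a local-Rademacher/variance-sensitive analysis) does not cohere with the rate you then assert. If you impose ``Rademacher complexity $\lesssim \epsilon$'' and ``Bernstein deviation $\lesssim \epsilon$'' as two \emph{separate} requirements, you get $n \gtrsim \max\{\epsilon^{-\mathrm{cov}(\Xcal)},\, K\log(|\Pi|/\delta)/\epsilon^2\}$, an additive/max form; the multiplicative form $K(KL)^{\mathrm{cov}(\Xcal)}\log(|\Pi|/\delta)/\epsilon^{2+\mathrm{cov}(\Xcal)}$ that you (and the corollary) state arises only when the metric entropy multiplies the Bernstein term inside a single union bound. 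As written, your assembly of the final sample complexity is not a derivation.

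The paper's resolution is much more elementary than what you fear, and it dissolves your concern about importance weights spoiling the $K$ dependence. It builds a \emph{single-scale} finite $(\epsilon/K)$-net of $\Fcal$ under $\|\cdot\|_{\infty}$: take an $\epsilon/(3KL)$-net of the metric space $(\Xcal,d)$, discretize function values on the net points at resolution $\epsilon/(3K)$, and use the Lipschitz property to show this controls the sup-norm error off the net, giving $\log \abr{\Ncal(\Fcal,\epsilon/K,\|\cdot\|_{\infty})} \lesssim \abr{\Ncal(\Xcal,\epsilon/(3KL),d)}\log(K/\epsilon) \lesssim (3KL/\epsilon)^{\mathrm{cov}(\Xcal)}\log(K/\epsilon)$. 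Then \pref{lem:concentration_pi_f} (Bernstein with variance $\le 2K$) is applied verbatim with a union bound over $\Pi$ times this finite net, and the bound is transferred to \emph{every} $f\in\Fcal$ at an additive cost of $2\epsilon$, because the importance weight $K\pi(a|x)\le K$ simply multiplies the sup-norm net radius $\epsilon/K$. So $K$ enters exactly through the two channels you predicted---the Bernstein variance (leading factor $K$) and the covering radius (factor $(KL)^{\mathrm{cov}(\Xcal)}$)---but via a crude sup-norm net rather than any chaining or local-Rademacher argument, and the product form of $n$ falls out of the entropy-times-deviation union bound. After this, the paper's proof says ``the rest is the same as the proof of \pref{thm:ft_main_theorem},'' so the interaction of the covering with the $\max$ over $f$ in the biased estimator, which you list as something to check, requires no new argument: once uniform concentration holds over $\Pi\times\Fcal$, all the min-max manipulations in \pref{app:proof_of_L2M_result} go through unchanged.
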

The proof of the above corollary is deferred to~\pref{app:proof_of_metric_MDP} which uses a standard covering number argument over $\Fcal_h$ with norm $\|\cdot\|_{\infty}$. Note that we get rid of $\nameibe$ here and hence as the number of sample grows, \fail{} approaches to the global optimality.  Though the bound has an exponential dependency on the covering dimension, note that the covering dimension $\mathrm{cov}(\Xcal)$ is completely dependent  on the underlying metric space $(\Xcal,d)$ and could be much smaller than the real dimension of $\Xcal$.  Note that the above theorem also serves an example regarding how we can extend~\pref{thm:ft_main_theorem} to settings where $\Fcal$ contains infinitely many functions but with bounded statistical complexity (similar techniques can be used for $\Pi$ as well).

\subsection{Interactive Imitation Learning from Observations}

We can avoid \nameibe{} in an interactive learning setting, where we assume that we can query expert during training. But different from previous interactive imitation learning setting such as AggreVaTe, LOLS \citep{ross2014reinforcement,chang2015learning}, and DAgger \citep{Ross2011_AISTATS}, here we do not assume that expert provides actions nor cost signals. Given any observation $x$, we simply ask expert to take over for just one step, and observe the observation at the next step, i.e., $x' \sim P(\cdot| x, a)$ with  $a\sim \pi^\star(\cdot|x)$. Note that compared to the non-interactive setting, interactive setting assumes a much stronger access to expert. 
In this setting, we can use arbitrary class of discriminators with bounded complexity. Due to space limit, we defer the detailed description of the interactive version of \fail{} (\pref{alg:main_alg_2}) to~\pref{app:ifail}.  The following theorem states that we can avoid $\nameibe{}$:

\begin{theorem}
Under~\pref{ass:realizable} and the existence of an interactive expert, for any $\epsilon\in(0,1]$ and $\delta \in (0,1]$, set $T = \Theta(\frac{K}{\epsilon^2})$, $n = \Theta(\frac{K\log(|\Pi||\Fcal|H)/\delta}{\epsilon^2})$, with probability at least $1-\delta$, \pref{alg:main_alg_2}) outputs a policy $\bm\pi$ such that:
 \begin{align*}
     J(\bm\pi) - J(\bm\pi^\star)\leq H\epsilon, 
 \end{align*} by using at most 
$     \tilde{O}\left( \frac{HK}{\epsilon^2}\log\left(\frac{|\Pi||\Fcal|}{\delta}\right)\right)$many trajectories.
\label{thm:sample_complexity_interactive}
\end{theorem}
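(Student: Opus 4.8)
The plan is to reduce the bound on $J(\bm\pi) - J(\bm\pi^\star)$ to a sum of $H-1$ per-step IPM quantities via the performance difference lemma, and then argue that the interactive query drives each of these to $O(\epsilon)$ with \emph{no} inherent Bellman error, because realizability makes the relevant min-max optimum exactly zero.

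First I would apply the performance difference lemma relative to the expert to obtain
\begin{align*}
J(\bm\pi) - J(\bm\pi^\star) = \sum_{h=1}^{H-1}\EE_{x_h\sim\mu^{\bm\pi}_h}\Big[\EE_{a_h\sim\pi_h}\big[Q^{\bm\pi^\star}_h(x_h,a_h)\big] - V^{\bm\pi^\star}_h(x_h)\Big].
\end{align*}
Using $V^{\bm\pi^\star}_{h+1}=V^\star_{h+1}$ and $Q^{\bm\pi^\star}_h(x_h,a_h)=\EE_{x_{h+1}\sim P_{x_h,a_h}}[V^\star_{h+1}(x_{h+1})]$, each summand becomes a one-step difference of $\EE[V^\star_{h+1}]$ under $\pi_h$ versus $\pi^\star_h$, with $x_h$ drawn from $\mu^{\bm\pi}_h$. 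By forward training this roll-in distribution coincides with the $\nu_h$ used when \pref{alg:main_alg_2} learns $\pi_h$, since the marginal of $x_h$ depends only on the fixed prefix $\{\pi_1,\dots,\pi_{h-1}\}$.

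The key step is to read each summand as an IPM evaluated at the single discriminator $f=V^\star_{h+1}$. Let $\mu'_{h+1}$ denote the next-observation distribution obtained by drawing $x_h\sim\nu_h$, letting the expert act once ($a_h\sim\pi^\star_h$), and transitioning $x_{h+1}\sim P_{x_h,a_h}$ --- exactly the samples the interactive query produces. Since $V^\star_{h+1}\in\Fcal_{h+1}$ by \pref{ass:realizable}, each summand is at most $d_{\Fcal_{h+1}}(\pi_h\vert\nu_h,\mu'_{h+1})$. Crucially, $\pi^\star_h\in\Pi_h$ generates precisely $\mu'_{h+1}$ from $\nu_h$, so $\min_{\pi'\in\Pi_h} d_{\Fcal_{h+1}}(\pi'\vert\nu_h,\mu'_{h+1}) = d_{\Fcal_{h+1}}(\pi^\star_h\vert\nu_h,\mu'_{h+1}) = 0$. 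This is exactly where the interactive access removes the inherent-Bellman-error term of \pref{thm:ft_main_theorem}: comparing learner and expert from the \emph{same} roll-in distribution $\nu_h$ eliminates the need to pull $\Fcal_{h+1}$ back through the Bellman operator $\Gamma_h$ into $\Fcal_h$.

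I would then apply \pref{thm:L2M_result} at each $h$ with the interactive dataset (target samples drawn i.i.d.\ from $\mu'_{h+1}$), which, because the optimum is $0$, yields $d_{\Fcal_{h+1}}(\pi_h\vert\nu_h,\mu'_{h+1})\le O(\epsilon)$ with probability $1-\delta/H$, using $T=\Theta(K/\epsilon^2)$ iterations and $n=\Theta(K\log(|\Pi||\Fcal|H/\delta)/\epsilon^2)$ trajectories. A union bound over the $H-1$ steps makes all guarantees hold simultaneously with probability $1-\delta$; summing the per-step bounds gives $J(\bm\pi)-J(\bm\pi^\star)\le (H-1)\,O(\epsilon)\le H\epsilon$ after rescaling constants, and $H$ times the per-step trajectory count gives the claimed $\tilde{O}\!\big(HK\epsilon^{-2}\log(|\Pi||\Fcal|/\delta)\big)$. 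The main obstacle is the bookkeeping of the second step: verifying that the one-step expert query yields i.i.d.\ samples from exactly the conditional distribution $\mu'_{h+1}$ induced by $\nu_h$ (so \pref{thm:L2M_result} applies verbatim with $\mu'_{h+1}$ as target), and that $\nu_h=\mu^{\bm\pi}_h$; the symmetry of $\Fcal_{h+1}$ (making the IPM a valid nonnegative upper bound) and the union bound are routine.
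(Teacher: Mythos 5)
Your proposal is correct and follows essentially the same route as the paper's proof: performance difference lemma to get per-step terms, realizability of $V^\star_{h+1}$ in $\Fcal_{h+1}$ to upper bound each term by the one-step IPM where both learner and expert roll in from the same distribution $\nu_h = \mu^{\bm\pi}_h$ (your $\mu'_{h+1}$ is exactly the paper's $d_{\Fcal_{h+1}}(\pi \vert \nu_h, \pi^\star_h)$ written via its target distribution), realizability of $\pi^\star_h$ in $\Pi_h$ to make the min-max optimum exactly zero, then \pref{thm:L2M_result} per step plus a union bound over $h$. The bookkeeping point you flag (that the one-step expert query yields i.i.d.\ samples from the conditional target so \pref{thm:L2M_result} applies verbatim) is handled in the paper in exactly the same implicit way, so there is no gap.
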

Compare to the non-interactive setting, we get rid of $\nameibe$, at the cost of a much stronger expert. 


\subsection{State Abstraction}
Denote $\phi: \Xcal \to \Scal$ as the abstraction that maps $\Xcal$ to a discrete set $\Scal$. We assume that the abstraction satisfies the Bisimulation property \citep{givan2003equivalence}: for any $x, x'\in\Xcal$, if $\phi(x) = \phi(x')$, we have that $ \forall s\in\Scal, a\in\Acal, h\in [H]$:
\begin{align}
    &c(x) = c(x'), \;\; 
     \pi^\star_h(a|x) = \pi^\star_h(a|x')\notag\\
    & \sum_{x'' \in \phi^{-1}(s)}P(x''| x,a) = \sum_{x''\in \phi^{-1}(s)}P(x''|x',a).
    \label{eq:bisimulation}
\end{align} In this case, one can show that the $V^{\star}$ is piece-wise constant under the partition induced from $\phi$, i.e., $V^{\star}(x) = V^\star(x')$ if $\phi(x) = \phi(x')$. Leveraging the abstraction $\phi$, we can then design $\Fcal_h = \{f: \|f\|_{\infty}\leq 1, f(x) = f(x'), \forall x,x', \text{s.t., } \phi(x) = \phi(x')\}$. Namely, $\Fcal_h$ contains piece-wise constant functions with bounded values. Under this setting, we can show that the inherent Bellman Error is zero as well (see Proposition 9 from \citet{chen2019information}). Also $\sup_{f\in\Fcal_h}u(\pi, f)$ can be again computed via LP and $\Fcal_h$ has Rademacher complexity scales $O(\sqrt{|\Scal|/N})$ with $N$ being the number of samples. Details are in~\pref{app:abstraction}.

\vspace{-5pt}
\section{Discussion on Related Work}
\label{sec:more_related_work}
Some previous works use the idea of learning an inverse model to predict actions (or latent causes) \citep{nair2017combining,torabi2018behavioral} from two successive observations and then use the learned inverse model to generate actions using expert observation demonstrations. With the inferred actions, it reduces the problem to normal imitation learning.  We note here that learning an inverse model is ill-defined. Specifically, simply by the Bayes rule, the inverse model $P(a | x_h, x_{h+1})$---the probability of action $a$ was executed at $x_h$ such that the system generated $x_{h+1}$, is equivalent to $P(a | x_h, x_{h+1}) \propto P(x_{h+1} | x_h, a) P(a | x_h)$, 
 i.e., an inverse model $P(a|x_h, x_{h+1})$ is explicitly dependent on the action generation policy $P(a|x_{h})$. Unlike $P(x_{h+1}|x_h, a)$, without  the policy $P(a|x_h)$, the inverse model is ill-defined by itself alone.  This means that if one wants to learn an inverse model that predicts expert actions along the trajectory of observations generated by the expert, one would need to learn an inverse model, denoted as $P^\star(a|x_h,x_{h+1})$, such that $P^\star(a|x_h, x_{h+1}) \propto P(x_{h+1} | x_h, a) \pi_h^\star(a | x_h)$, which indicates that one needs to collect actions from $\pi_h^\star$.  An inverse model  makes sense when the dynamics is deterministic and bijective. 
 Hence replying on learning such an inverse model $P^\star(a|x,x')$ will not provide any performance guarantees in general, unless we have actions collected from $\pi^\star$.  

\vspace{-5pt}
\section{Simulation}
We test \fail{} on three simulations from openAI Gym \citep{brockman2016openai}: Swimmer, Reacher, and the Fetch Robot Reach task (FetchReach). For Swimmer we set H to be 100 while for Reacher and FetchReach, $H$ is 50 in default. The Swimmer task has dense reward (i.e.,., reward at every time step). For reacher, we try both dense reward and sparse reward (i.e., success if it reaches to the goal within a threshold). FetchReach is a sparse reward task.  As our algorithm is presented for discrete action space, for all three tasks, we discrete the action space via discretizing each dimension into 5 numbers and applying categorical distribution independently for each dimension.\footnote{i.e., $\pi(a | x) = \prod_{i=1}^d \pi_i(a[i]| x)$, with $a[i]$ stands for the $i$-th dimension. Note that  common implementation for continuous control often assumes such factorization across action dimensions as the covariance matrix of the Gaussian distribution is often diagonal. 
See comments in~\pref{app:additional} for extending \fail{} to continuous control setting in practice.  
} \footnote{Implementation and scripts for reproducing results can be found at \url{https://github.com/wensun/Imitation-Learning-from-Observation}}

For baseline, we modify GAIL \citep{ho2016generative}, a model-free IL algorithm, based on the implementation from OpenAI Baselines, to make it work for \ilo. We delete the input of actions to discriminators in GAIL to make it work for \ilo.  Hence the modified version can be understood as using RL (the implementation from OpenAI uses TRPO \cite{schulman2015trust}) methods to minimize the divergence between the learner's average state distribution and the expert's average state distribution. 

For \fail{} implementation, we use MMD as a special IPM, where we use RBF kernel and set the width using the common median trick (e.g.,\cite{fukumizu2009kernel}) without any future tuning. All policies are parameterized by one-layer neural networks. Instead of using FTRL, we use ADAM as an incremental no-regret learner, with all default parameters (e.g., learning rate) \citep{kingma2014adam}.  The total number of iteration $T$ in~\pref{alg:l2m} is set to 1000 without any future tuning.  During experiments, we stick to default hyper-parameters  for the purpose of  best reflecting the algorithmic contribution of \fail.\footnote{We also refer readers to~\pref{app:additional} for additional experimental results on a variant of \fail{} (\pref{alg:main_alg_2}).}  All the results below are averaged over ten random trials with seeds randomly generated between $[1, 1e6]$. The experts are trained via a reinforcement learning algorithm (TRPO \cite{schulman2015trust}) with multiple millions of samples till convergence. 

\begin{figure}[t]
\begin{subfigure}{.235\textwidth}
  \includegraphics[width=1.0\linewidth]{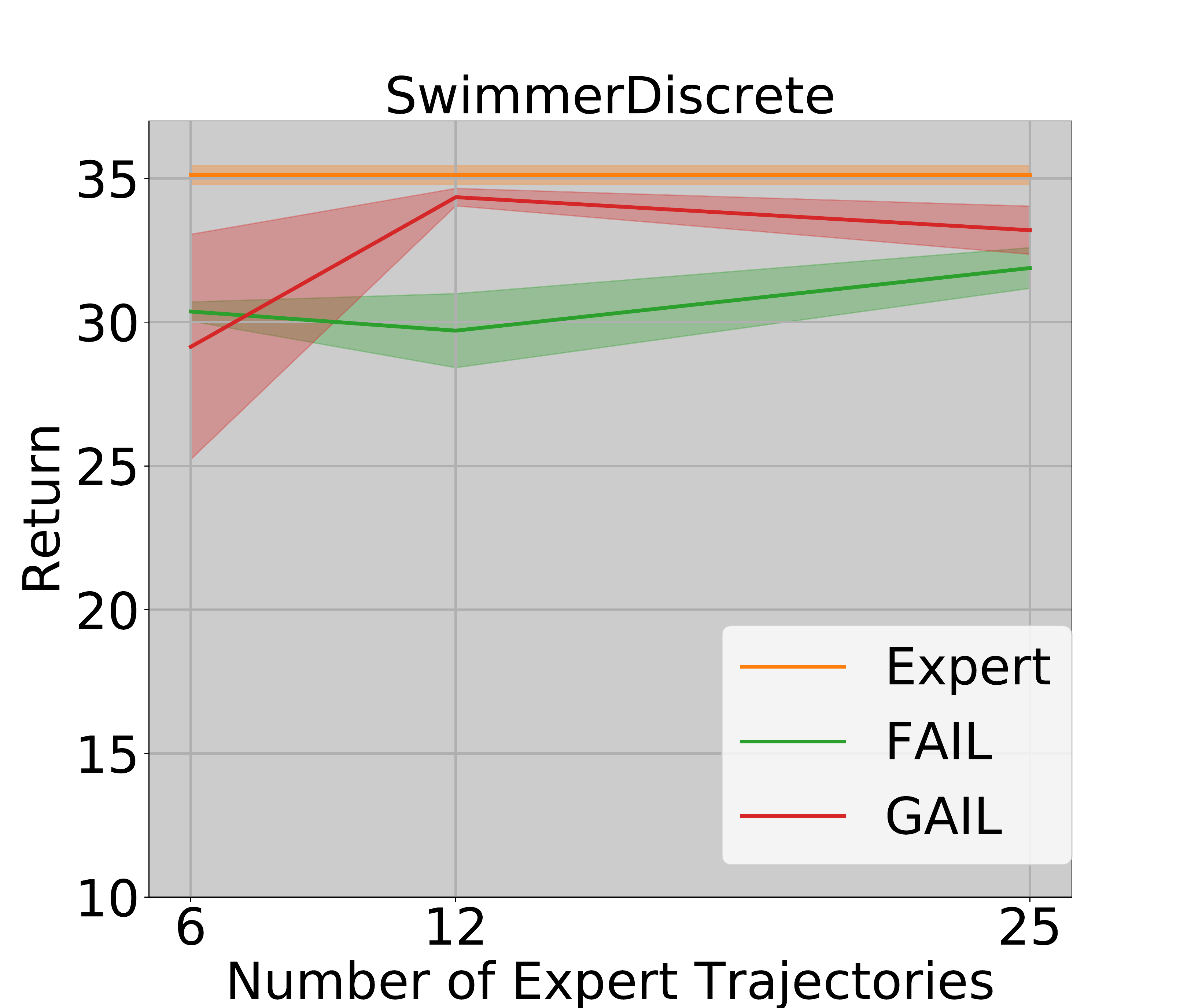}
  \caption{Swimmer (1m)}
  \label{fig:hopper_onem}
\end{subfigure}
\begin{subfigure}{.235\textwidth}
  \includegraphics[width=1.0\linewidth]{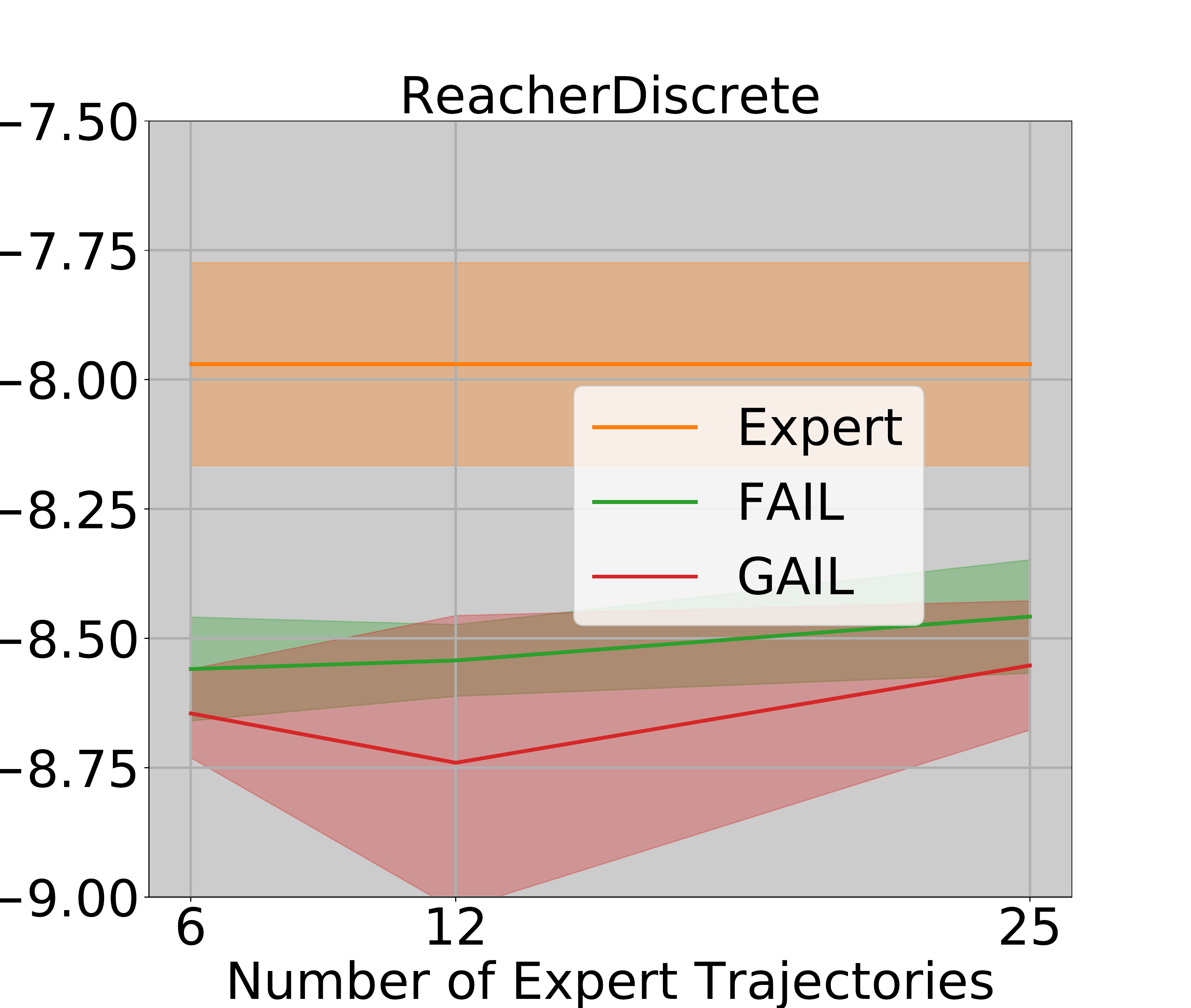}
  \caption{Reacher (1m)}
  \label{fig:reacher_onem}
\end{subfigure}
\begin{subfigure}{.235\textwidth}
  \centering
  \includegraphics[width=1.0\linewidth]{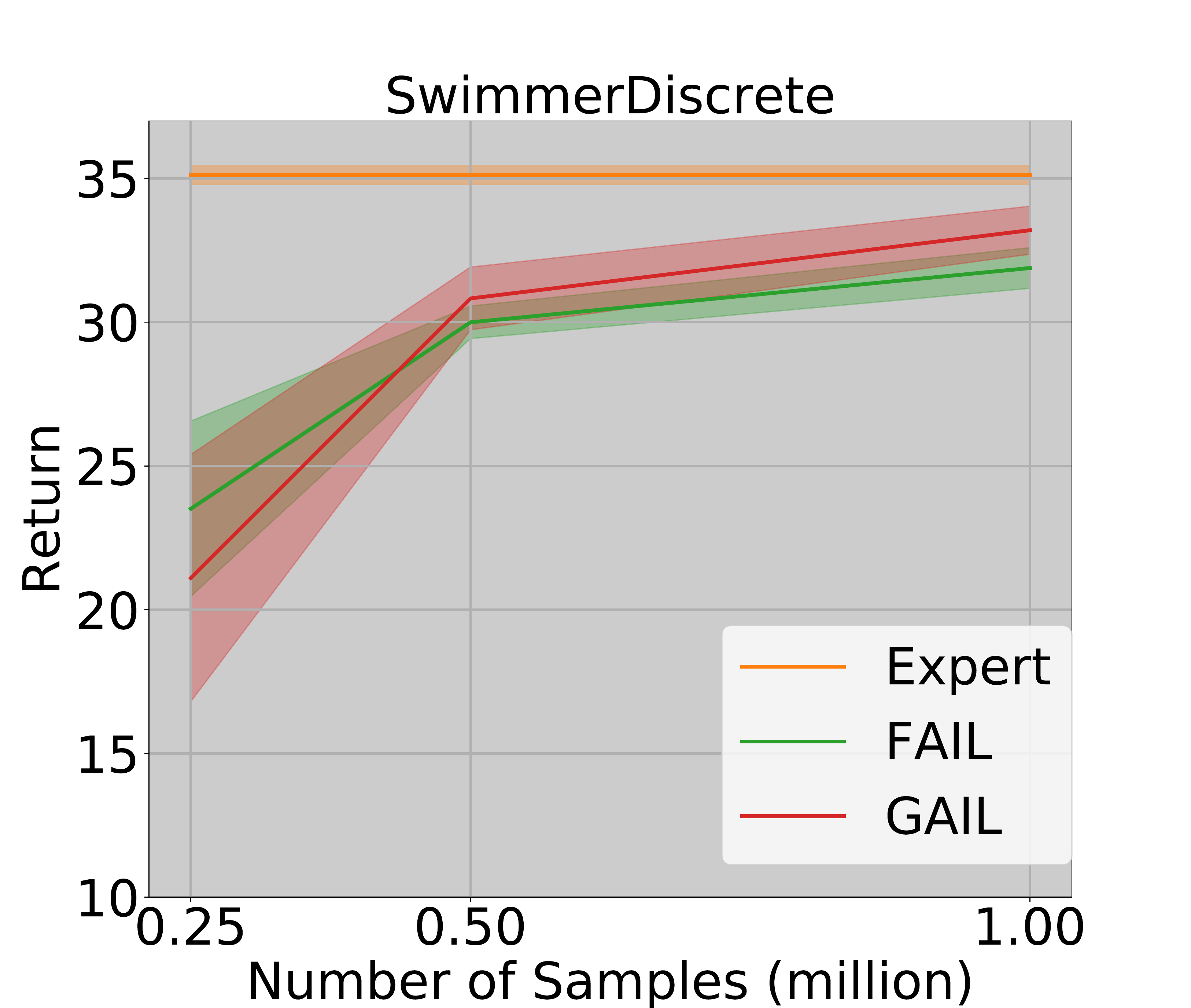}
  \caption{swimmer (25)}
  \label{fig:swimmer_fixed_expert}
\end{subfigure}
\begin{subfigure}{.235\textwidth}
  \centering
  \includegraphics[width=1.0\linewidth]{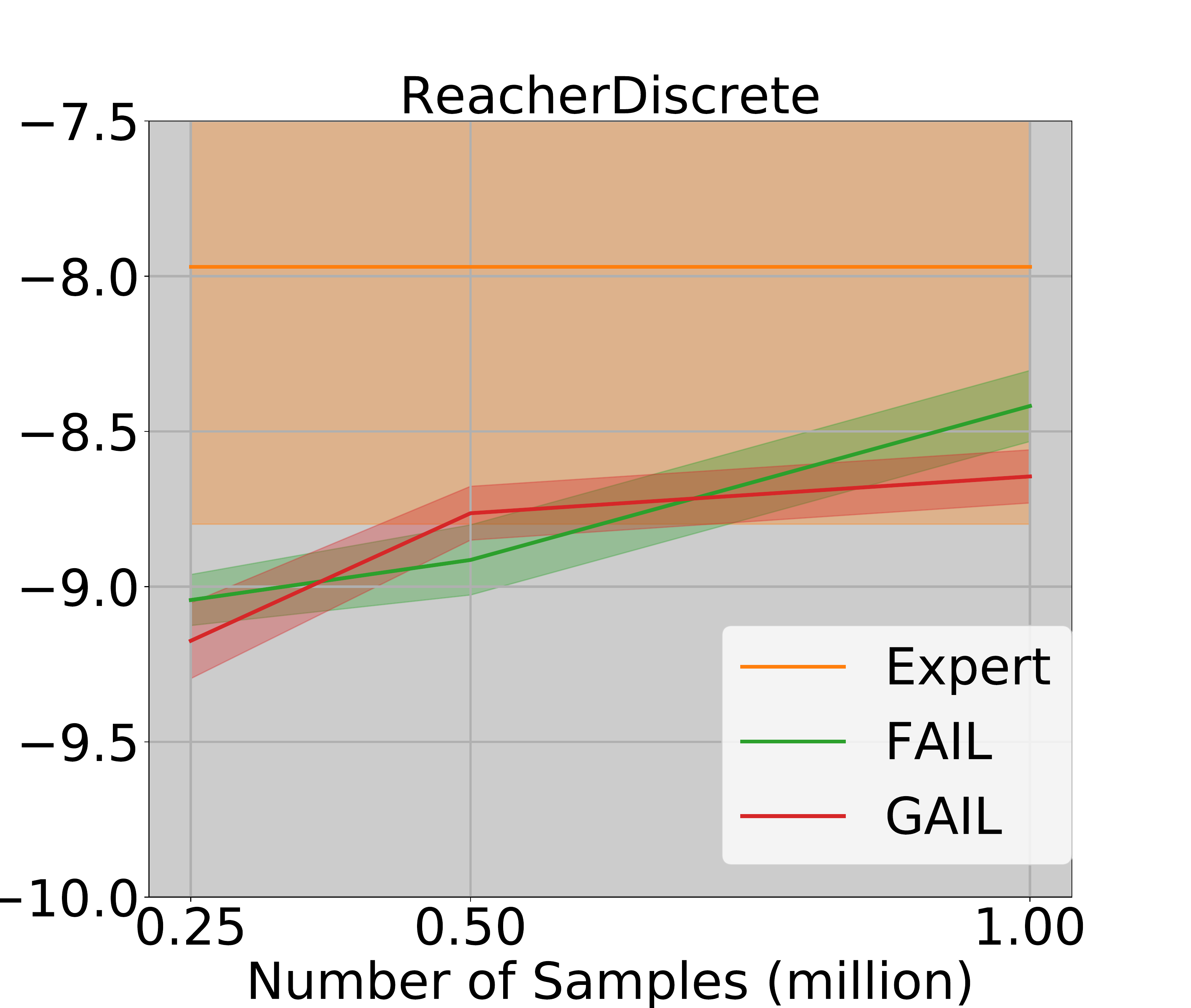}
  \caption{Reacher (25)}
  \label{fig:reacher_fixed_expert}
\end{subfigure}

\caption{Performance of expert, \fail, and GAIL (without actions) on dense reward tasks (Reacher and Hopper). For (a) and (b), we fix the number of training samples while varying the number of expert demonstrations (6, 12, 25).  For (c) and (d), we fix the number of expert trajectories, while varying the training samples.}
\label{fig:comparison_fixed_training_samples}
\vspace{-10pt}
\end{figure}

\begin{figure}[t]
\centering
\begin{subfigure}{.235\textwidth}
  \centering
  \includegraphics[width=1.0\linewidth]{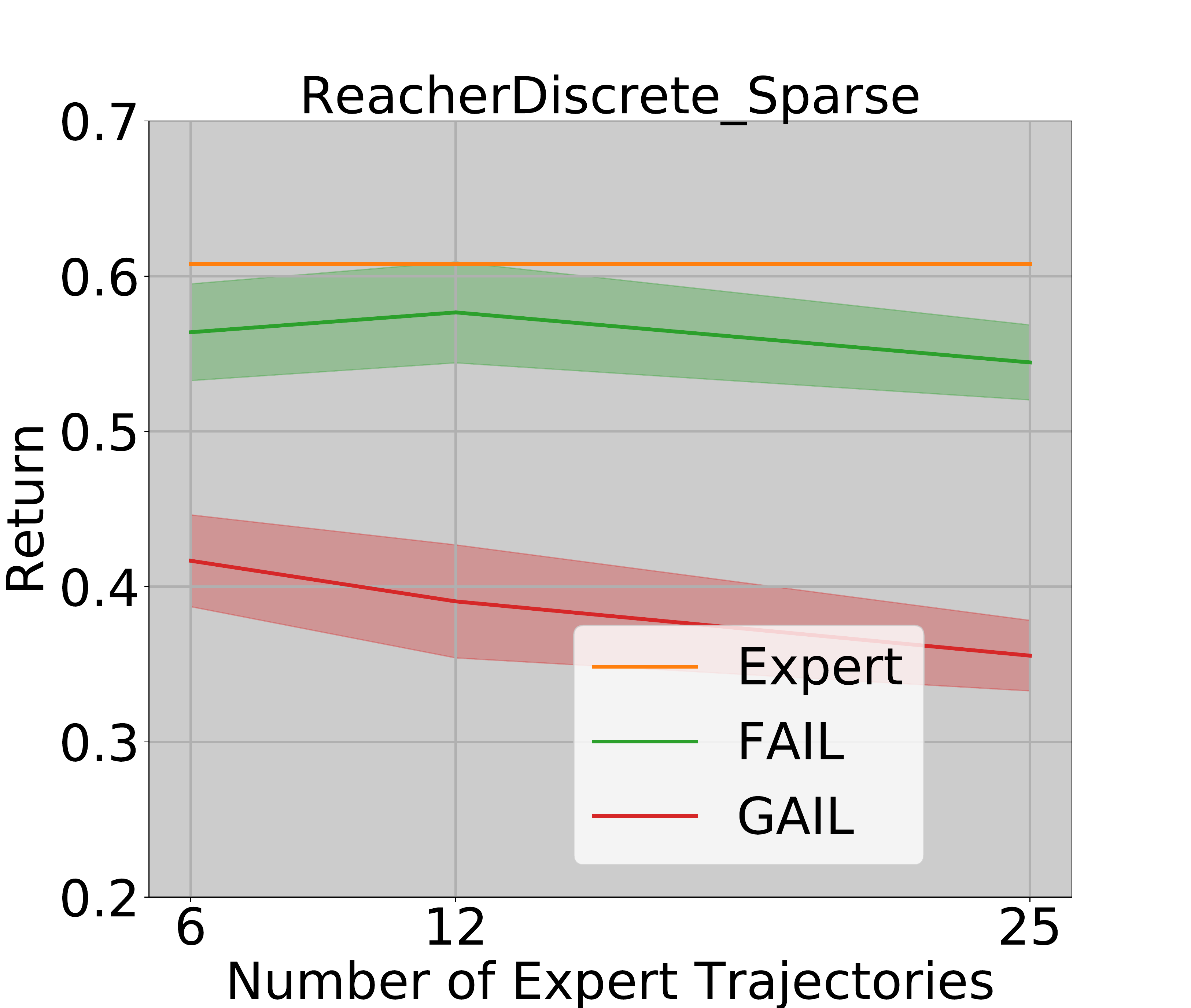}
  \caption{Reacher Sparse (1m)}
  \label{fig:reacher_sparse}
\end{subfigure}%
\begin{subfigure}{.235\textwidth}
  \centering
  \includegraphics[width=1.0\linewidth]{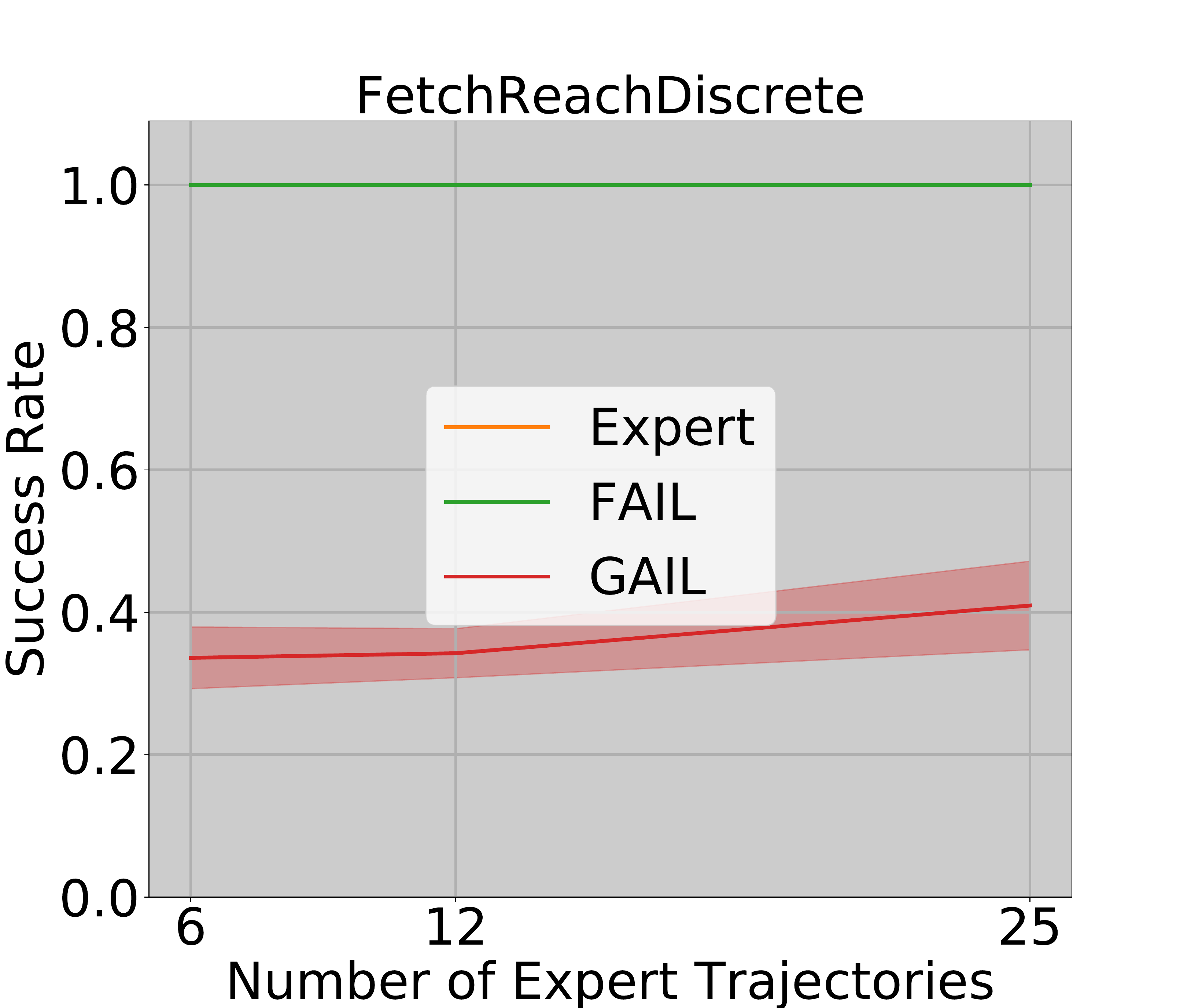}
  \caption{FetchReach (1m)}
  \label{fig:fetch}
\end{subfigure}
\caption{Performance of expert, \fail, and GAIL (without actions) on two sparse control tasks (Reacher Sparse and Fetch-Reach).  We fix the number of training samples while varying the number of expert demonstrations (6, 12, 25). }
\label{fig:sparse}
\vspace{-10pt}
\end{figure}

\pref{fig:comparison_fixed_training_samples} shows the comparison of expert, \fail, and GAIL on two dense reward tasks with different number of expert demonstrations, under fixed total number of training samples (one million).  We report mean and standard error in~\pref{fig:comparison_fixed_training_samples}. We observe GAIL outperforms \fail{}  in Swimmer on some configurations, while \fail{} outperforms GAIL on Reacher (Dense reward) for all configuration. 

\pref{fig:sparse} shows the comparison of expert, \fail, and GAIL on two sparse reward settings. We observe that \fail{} significantly outperforms GAIL on these two sparse reward tasks. For sparse reward, note that what really matters is to reach the target at the end, \fail{} achieves that by matching expert's state distribution and learner's state distribution one by one at every time step till the end, while GAIL (without actions) loses the sense of ordering by focusing on the average state distributions.

The above experiments also indicates that \fail{} in general can work well for shorter horizon (e.g., $H=50$ for Reacher and Fetch), while shows much less improvement over GAIL on longer horizon task. We think this is because of the nature of \fail{} which has to learn a sequence of time-dependent policies along the entire horizon $H$. Long horizon requires larger number of samples. While method like GAIL learns a single stationary policy with all training data, and hence is less sensitive to horizon increase. We leave extending \fail{} to learning a single stationary policy as a future work.



\section{Model-based FAIL}
Due to the possible non-zero inherent Bellman error, \fail{} in general cannot guarantee to find the globally optimal policy. So the remaining question is that:

\begin{displayquote} \emph{In \ilo{}, under a realizability assumption, does there exist an algorithm that can learn an $\epsilon$ near-optimal policy with sample complexity scales 
polynomially with respect to horizon, number of actions, $1/\epsilon$, and statistical complexities of function classes, with high probability?}
\end{displayquote}

While we are not able to answer this question in the model-free setting considered in this work,  we study a model-based algorithm which we show can achieve the above sample complexity, though it is computationally not efficient. 

Rather than starting with a realizable policy class $\Pi$, our model-based algorithm starts with a class of models $\Pcal$ with $P \in \Pcal$, i.e., the model class contains the true model $P$ (realizability). Similarly, we assume we have a set of discriminators $\{\Fcal_h\}_{h=1}^H$ such that $V_h^*\in \Fcal_h$. The above is the realizability assumption in the model-based setting. 

Note that intuitively we use discriminators to approximate expert's value functions. For any $\hat{P}\in \Pcal, f\in \Fcal_{h+1}$, define $Q_h^{\hat{P}, f}(x,a) \defeq \EE_{x'\sim \hat{P}_{x,a}}f(x')$ for any $(x,a)$. Denote $\Qcal_h$ as:
\begin{align}
\label{eq:Q_functions}
    \Qcal_h \defeq \left\{ Q_h^{\hat{P}, f} \right\}_{\hat{P}\in\Pcal, f\in\Fcal_{h+1}}, \forall h\in [H-1].
\end{align} Note that due to the assumption that $V_{h+1}^*\in \Fcal_{h+1}$ and $P\in\Pcal$, we must have $Q_{h}^\star = Q_h^{P, V_{h+1}^\star}\in \Qcal_h$, i.e., $\Qcal_h$ is realizable.  Each $Q_h$ induces a policy $\pi_h^{Q}(x) = \arg\min_{a} Q_h(x,a)$. Denote a policy class $\widetilde{\Pi}_h$ as follows:
\begin{align}
    \label{eq:policy_class}
    \widetilde{\Pi}_h \defeq \left\{ \pi_h^{Q}\right\}_{Q\in\Qcal_h}, \forall h\in [H-1].
\end{align}
Note that we have $\abr{\widetilde{\Pi}_h} = \abr{\Pcal}\abr{\Fcal_{h+1}}$. Note that since $Q_h^\star \in \Qcal_h$, we must have $\pi_h^\star \in \widetilde{\Pi}_h$, i.e., the policy class is realizable. Now we expand the discriminator classes as follows. We set $\widetilde{\Fcal}_H = \Fcal_H$. We know that $\widetilde{\Fcal}_H$ is realizable, i.e., $V_{H}^\star\in \widetilde{\Fcal}_H$.  Given a realizable $\widetilde{\Fcal}_{h+1}$, we design $\widetilde{\Fcal}_h$ as follows:
\begin{align}
\label{eq:discriminators}
    \widetilde{\Fcal}_h\defeq & \Fcal_h \cup \notag\\
    &~\{ f: f(x) \defeq \EE_{a\sim \pi_x, x'\sim \hat{P}_{x,a}}f'(x')   \}_{\hat{P}\in\Pcal, \pi\in\widetilde{\Pi}_h, f'\in\widetilde{\Fcal}_{h+1}}
\end{align} Namely we explicitly expand $\Fcal_h$ by applying a potential Bellman operator (defined using a pair $(\hat{P},\pi)$) to a discriminator at $\widetilde{\Fcal}_{h+1}$. Via the induction assumption, since $V_{h+1}^\star\in\widetilde{\Fcal}_{h+1}$ and $P\in\Pcal, \pi_h^\star\in\widetilde{\Pi}_h$, we must have $\Gamma_h V_{h+1}^\star \in \widetilde{\Fcal}_h$ by construction. Hence $\widetilde{\Fcal}_h$ is realizable. We can also recursively compute the size of $\widetilde{\Fcal}_h$. Define $F   \defeq \max_h \abr{\Fcal_h}$. Starting with $\abr{\widetilde{\Fcal}_H} = F$, we can show that $\log\left(\abr{\widetilde{\Fcal}_h}\right) = O\left(\text{poly}(H)(\abr{\Pcal}F)\right)$ for $h \in [H]$. 

\emph{Model-based} \fail{} takes $\Pcal$ and $\{\Fcal_h\}_h$ as inputs, and performs the two procedures shown in~\pref{fig:model_based_fail}.
\begin{figure}[h]
\begin{enumerate} 
\item Construct $\{\widetilde{\Pi}_h\}_h$ and $\{\widetilde{\Fcal}_h\}_h$ as shown in~\pref{eq:policy_class} and~\pref{eq:discriminators};
\item Run \fail{} (\pref{alg:main_alg}) with $\{\widetilde{\Pi}_h\}$ and $\{\widetilde{\Fcal}_h\}$.
\end{enumerate}
\caption{Model-based \fail}
\label{fig:model_based_fail}
\vspace{-10pt}
\end{figure}

In~\pref{fig:model_based_fail}, note that model-based \fail{} does not require \emph{any real-world samples} in the first step. Real-world samples are only required in the second step where we call \fail. As we show that $\{\widetilde{\Pi}_h\}_h$ and $\{\widetilde{\Fcal}_h\}_h$ are realizable, and \nameibe{} is zero due to the construction of $\widetilde{\Fcal}_h$, we can simply invoke~\pref{thm:ft_main_theorem} here and reach the following corollary:

\begin{corollary}[Model-based FAIL sample Complexity] Given a pair $(\epsilon,\delta)$, $\Pcal$ and $\{\Fcal_h\}_h$ with $P\in\Pcal$, and $V_h^\star\in\Fcal_h$ for all $h\in [H]$, the algorithm shown in~\pref{fig:model_based_fail} can learn an $\epsilon$ near-optimal policy with number of samples and number of expert's samples both scale in the order of $\text{poly}\left(H, K, (1/\epsilon), \log(\abr{\Pcal}), \log(F), \log(1/\delta)\right)$, with probability at least $1-\delta$.
\end{corollary}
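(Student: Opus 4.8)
The plan is to verify that model-based \fail{} meets every hypothesis of \pref{thm:ft_main_theorem} when it is run on the constructed classes $\{\widetilde{\Pi}_h\}_h$ and $\{\widetilde{\Fcal}_h\}_h$, and in particular that the averaged inherent Bellman error $\epsilon'_{\mathrm{be}}$ vanishes \emph{exactly}, so that the only surviving term in the performance bound of \pref{thm:ft_main_theorem} is $H^2\epsilon$. This is essentially a reduction: all the difficulty has been front-loaded into the construction of \pref{eq:policy_class} and \pref{eq:discriminators}, and what remains is (i) confirming realizability and zero inherent Bellman error, (ii) bounding the log-cardinalities of the expanded classes, and (iii) rescaling the target accuracy and substituting into \pref{thm:ft_main_theorem}.

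First I would establish $\epsilon'_{\mathrm{be}}=0$, which is the conceptual heart of the corollary. Fix $h$ and any $g\in\widetilde{\Fcal}_{h+1}$. The Bellman operator associated with the expert is $(\Gamma_h g)(x) = \EE_{a\sim\pi^\star_h(\cdot|x),\,x'\sim P_{x,a}}[g(x')]$, which is precisely a member of the family appended to $\Fcal_h$ in \pref{eq:discriminators}: take $\hat{P}=P$ (legitimate since $P\in\Pcal$ by realizability), $\pi=\pi^\star_h$ (legitimate since $Q^\star_h\in\Qcal_h$ by \pref{eq:Q_functions} forces $\pi^\star_h\in\widetilde{\Pi}_h$), and $f'=g$. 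Hence $\Gamma_h g\in\widetilde{\Fcal}_h$, so choosing $f=\Gamma_h g$ makes $\EE_{x}[\,|f(x)-(\Gamma_h g)(x)|\,]=0$; maximizing over $g$ and $h$ gives $\epsilon'_{\mathrm{be}}=0$. The same construction shows $V^\star_h=\Gamma_h V^\star_{h+1}\in\widetilde{\Fcal}_h$, so $\{\widetilde{\Fcal}_h\}_h$ is realizable, and I would remark that symmetry of $\widetilde{\Fcal}_h$ (required by \pref{ass:realizable}) propagates by induction from symmetry of $\Fcal_h$, since the appended functions come in $\pm$ pairs whenever $\widetilde{\Fcal}_{h+1}$ does; alternatively, symmetrizing explicitly only doubles the cardinality and is harmless below.

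Next I would control the capacity. From $|\widetilde{\Pi}_h|=|\Pcal|\,|\Fcal_{h+1}|$ and the recursion $|\widetilde{\Fcal}_h|\le F + |\Pcal|\,|\widetilde{\Pi}_h|\,|\widetilde{\Fcal}_{h+1}| \le F + |\Pcal|^2 F\,|\widetilde{\Fcal}_{h+1}|$, together with $|\widetilde{\Fcal}_H|=F$, unrolling this multiplicative recurrence over at most $H$ levels yields $\log|\widetilde{\Fcal}_h| = O\!\left(H\log(|\Pcal|F)\right)$ and $\log|\widetilde{\Pi}_h| = \log(|\Pcal|F)$. The crucial point is that although the \emph{cardinalities} blow up geometrically in $H$, \pref{thm:ft_main_theorem} depends only on $\log(|\Pi|\,|\Fcal|)$, so this geometric growth costs merely a single extra factor of $H$.

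Finally I would instantiate \pref{thm:ft_main_theorem} on $(\widetilde{\Pi},\widetilde{\Fcal})$ with internal accuracy $\epsilon/H^2$ in place of $\epsilon$. Since $\epsilon'_{\mathrm{be}}=0$, the theorem gives $J(\bm\pi)-J(\bm\pi^\star)\le H^2(\epsilon/H^2)=\epsilon$ with probability $1-\delta$, and the trajectory count (as well as the expert-sample count $n'$, which is of the same order) becomes $\tilde{O}\!\left(\frac{H^5K}{\epsilon^2}\,\log\frac{|\widetilde{\Pi}|\,|\widetilde{\Fcal}|}{\delta}\right) = \tilde{O}\!\left(\frac{H^6K}{\epsilon^2}(\log|\Pcal|+\log F+\log(1/\delta))\right)$, which is $\mathrm{poly}(H,K,1/\epsilon,\log|\Pcal|,\log F,\log(1/\delta))$ as claimed. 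I expect the main obstacle to be the cardinality recursion of step three: one must check that the blow-up of $|\widetilde{\Fcal}_h|$ stays \emph{geometric}, so that $\log|\widetilde{\Fcal}_h|$ is only linear in $H$ rather than iterated-exponential, and this hinges on each application of \pref{eq:discriminators} multiplying the size by the fixed factor $|\Pcal|^2F$ independent of $h$. A secondary subtlety worth stating carefully is that $\epsilon'_{\mathrm{be}}$ must be \emph{exactly} zero, not merely small, which is precisely why it matters that the true pair $(P,\pi^\star_h)$—and not just an approximation of it—lies in the index set of \pref{eq:discriminators}.
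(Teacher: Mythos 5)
Your proposal is correct and takes essentially the same route as the paper: the paper's own argument is exactly the reduction you describe---realizability of $\{\widetilde{\Pi}_h\}_h$ and $\{\widetilde{\Fcal}_h\}_h$ together with zero inherent Bellman error by construction (via the triple $(\hat{P},\pi,f')=(P,\pi_h^\star,g)$ in~\pref{eq:discriminators}), followed by invoking~\pref{thm:ft_main_theorem} with a rescaled accuracy. Your cardinality recursion giving $\log\abr{\widetilde{\Fcal}_h}=O\bigl(H\log(\abr{\Pcal}F)\bigr)$ is the careful version of the paper's stated bound (whose ``$O(\mathrm{poly}(H)(\abr{\Pcal}F))$'' is evidently a typo, since only a $\mathrm{poly}(H)\log(\abr{\Pcal}F)$ bound supports the claimed dependence on $\log\abr{\Pcal}$ and $\log F$), and your symmetry remark fills in a detail the paper leaves implicit.
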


The above corollary shows statistically, from a model-based perspective, realizability alone is sufficient to achieve the polynomial sample complexity. However, the above corollary does not show if realizability is sufficient for model-free methods. Also, though it is statistically efficient, the computational complexity of model-based \fail{} is still questionable (the naive implementation in~\pref{fig:model_based_fail} has computational complexity $\Theta(F^H)$). Investigating sufficient and necessary conditions for achieving polynomial sample and polynomial computation complexity under \ilo{} setting is an interesting future work.  

\vspace{-5pt}
\section{Conclusion and Future Work}
We study Imitation Learning from Observation (\ilo) setting and propose an algorithm, Forward Adversarial Imitation Learning (\fail), that achieves sample efficiency and computational efficiency. \fail{} decomposes the sequential learning tasks into  independent two-player min-max games of which is solved via general no-regret online learning. 
In addition to the algorithmic contribution, we present the first exponential gap in terms of sample complexity between \ilo{} and RL, demonstrating the potential benefit from expert's observations.  A key observation is that one should explicitly regularize the class of discriminators to achieve sample efficiency and design discriminators  to decrease the inherent Bellman Error. Experimentally, while GAIL can be used to solve the \ilo{} problem by  removing action inputs to the discriminators, \fail{} works just as well in problems with dense reward. Our analysis of \fail{} provides the first strong theoretical guarantee for solving \ilo, and \fail{} significantly outperforms GAIL on sparse reward MDPs, which are common in practice.


\section*{Acknowledgement}
WS is supported in part by Office of Naval Research contract N000141512365. WS thanks Nan Jiang and Akshay Krishnamurthy for valuable discussions. We thank the first anonymous reviewer for carefully reviewing the proofs.

\nocite{langley00}

\bibliography{refs,reference}
\bibliographystyle{icml2019}

\vfill
\onecolumn
\newpage

\appendix

\newpage

\section{Proof of~\pref{thm:L2M_result}}
\label{app:proof_of_L2M_result}
Before proving the theorem, we introduce some notations and useful lemmas. 

Given a pair of $\pi$ and $f$, denote random variable $v_i = K\pi(a_h^i|x^i_h) f(x_{h+1}^i) - f(\tilde{x}_{h+1}^i)$, 
recall the definition of the utility function $u(\pi, f)$:
\begin{align*}
     u(\pi, f) = \frac{1}{N} \sum_{i=1}^N K\pi(a^i_h|x^i_h)f(x^i_{h+1}) - \frac{1}{N}\sum_{n=1}^N f(\tilde{x}_{h+1}^i) = \frac{1}{N}\sum_{n=1}^N v_i.
\end{align*} 
Denote $v = \mathbb{E}_{x\sim \nu_h, a\sim \pi(\cdot|x), x'\sim P_{x,a}}[f(x')] - \mathbb{E}_{x\sim \mu^\star_{h+1}}[f(x)]$. It is easy to verify that $\mathbb{E}_{i} v_i = v$. We also have $|v_i - v| \leq 4K$. We can further bound the variance of $v_i -v $ as:
\begin{align*}
    \text{Var}_i(v_i - v ) &= \mathbb{E}_{i}(v_i-v)^2 = \mathbb{E}_{i} v_i^2 - v^2 \leq \mathbb{E}_i v_i^2 \\
    & = \mathbb{E}_i (K\pi(a^i_h|x^i_h)f(x^i_{h+1}) - f(\tilde{x}_{h+1}^i))^2 \\
    & \leq \mathbb{E}_i K^2 \pi(a^i_h|x^i_h) f(x^i_{h+1}) - \mathbb{E}_i K\pi(a^i_h|x^i_h)f(x^i_{h+1})f(\tilde{x}^i_{h+1}) + \mathbb{E}_{i} f(\tilde{x}^i_{h+1})^2\\
    & \leq K + 1 + 1 \leq 2K,
\end{align*} where we used the fact that $|f(x)|\leq 1, \forall x$, $\pi(a|x) \leq 1, \forall x,a$, and the last inequality uses the fact that $a_h^i$ is sampled from a uniform distribution over $\Acal$. With that, we can apply Bernstein's inequality to $\{v_i\}$ together with a union bound over $\Pi$ and $\Fcal$, we will have the following lemma:
\begin{lemma}
\label{lem:concentration_pi_f}
Given dataset $\Dcal=\{x_h^i,a_h^i,x_{h+1}^i\}$ with $x_h^i\sim \nu_h, a_h^i\sim U(\Acal), x_{h+1}^i \sim P_{x^i_h,a^i_h}$, and $\Dcal^e = \{\tilde{x}_{h+1}^i\}$ with $\tilde{x}_{h+1}^i\sim \mu^\star_{h+1}$, for any pair $\pi\in\Pi, f\in\Fcal$, with probability at least $1-\delta$, 
\begin{align}
    &\left\lvert \left(\frac{1}{N}\sum_{i=1}^N K\pi(a^i_h|x^i_h)f(x^i_{h+1}) - \frac{1}{N}\sum_{i=1}^N f(\tilde{x}^i_{h+1})\right)  - \left(\mathbb{E}_{(x,a,x')\sim \nu_h\pi P^{\star}}[f(x')] - \mathbb{E}_{x\sim \mu^{\star}_{h+1}}[f(x)]\right) \right\rvert \nonumber\\
    & \leq 4\sqrt{\frac{2K\log(2|\Pi||\Fcal|/\delta)}{N}} + \frac{8K\log(2|\Pi||\Fcal|/\delta)}{N}.
\end{align}
\end{lemma}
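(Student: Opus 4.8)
The plan is to treat this as a textbook two-sided Bernstein concentration bound for a single fixed pair $(\pi,f)$, followed by a union bound over the finite product class $\Pi\times\Fcal$. All the quantitative ingredients have already been assembled in the paragraph preceding the statement, so the proof is mostly bookkeeping. For a fixed $(\pi,f)$, the summands $v_i = K\pi(a_h^i|x_h^i)f(x_{h+1}^i) - f(\tilde{x}_{h+1}^i)$ are i.i.d.\ across $i$: the triples in $\Dcal$ are drawn i.i.d.\ ($x_h^i\sim\nu_h$, $a_h^i\sim U(\Acal)$, $x_{h+1}^i\sim P_{x_h^i,a_h^i}$), the expert observations in $\Dcal^e$ are drawn i.i.d.\ from $\mu^\star_{h+1}$, and the two datasets are mutually independent. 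The importance weight $K\pi(a_h^i|x_h^i)$ converts the uniform action sampling into an expectation under $\pi$, which is exactly why the common mean is $\EE_i v_i = v$ with $v = \EE_{(x,a,x')\sim\nu_h\pi P}[f(x')] - \EE_{x\sim\mu^\star_{h+1}}[f(x)]$. Since $u(\pi,f)=\tfrac1N\sum_i v_i$, the left-hand quantity in the lemma is precisely $\abr{u(\pi,f) - v}$, and we already have the range bound $\abr{v_i - v}\le 4K$ and the variance bound $\mathrm{Var}(v_i)\le 2K$.

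First I would fix a single pair $(\pi,f)\in\Pi\times\Fcal$ and apply Bernstein's inequality to the i.i.d.\ bounded variables $\{v_i\}_{i=1}^N$. With range $b=4K$ and variance proxy $\sigma^2=2K$, Bernstein yields, for any $\delta'\in(0,1]$, with probability at least $1-\delta'$,
\[
\abr{u(\pi,f) - v} \;\le\; \sqrt{\frac{2\sigma^2\log(2/\delta')}{N}} + \frac{2b\log(2/\delta')}{3N} \;=\; 2\sqrt{\frac{K\log(2/\delta')}{N}} + \frac{8K\log(2/\delta')}{3N}.
\]

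Next I would take a union bound over all $\abr{\Pi}\abr{\Fcal}$ pairs by setting $\delta'=\delta/(\abr{\Pi}\abr{\Fcal})$, which replaces $\log(2/\delta')$ by $\log(2\abr{\Pi}\abr{\Fcal}/\delta)$ and makes the bound hold simultaneously for every $(\pi,f)$ with probability at least $1-\delta$. Loosely absorbing constants — using $2\le 4\sqrt{2}$ on the leading term and $8/3\le 8$ on the lower-order term — recovers the claimed inequality
\[
\abr{u(\pi,f)-v} \;\le\; 4\sqrt{\frac{2K\log(2\abr{\Pi}\abr{\Fcal}/\delta)}{N}} + \frac{8K\log(2\abr{\Pi}\abr{\Fcal}/\delta)}{N}.
\]
There is essentially no genuine obstacle here; the only points needing care are (i) verifying the i.i.d.\ structure of the $v_i$ so that Bernstein applies cleanly (and that the importance-weighting identity makes their mean equal to $v$), and (ii) ensuring the union bound ranges over the \emph{full} discrete product class, which is legitimate precisely because \pref{ass:realizable} guarantees $\Pi$ and $\Fcal$ are finite. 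The generous slack in the constants means the sharpest form of Bernstein is not required.
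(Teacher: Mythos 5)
Your proposal is correct and follows essentially the same route as the paper: the paper's own proof sets up the i.i.d.\ variables $v_i$ with mean $v$, range $\abr{v_i - v}\leq 4K$, and variance at most $2K$, and then simply invokes Bernstein's inequality plus a union bound over $\Pi\times\Fcal$, exactly as you do. Your version merely makes the constant-tracking explicit (showing $2\sqrt{K\log(2\abr{\Pi}\abr{\Fcal}/\delta)/N} + 8K\log(2\abr{\Pi}\abr{\Fcal}/\delta)/(3N)$ is dominated by the stated bound), which is a faithful filling-in of the paper's sketch.
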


Let us define two loss functions for $\pi$ and $f$:
\begin{align*}
    &\ell_t(\pi) = (1/N)\sum_{i=1}^N K\pi(a^i_h|x^i_h)f^t(x^i_{h+1}) 
    \\
    & c_t(f) = (1/N) \sum_{i=1}^N K\pi^t(a^i_h|x^i_h)f(x^i_{h+1}) - (1/N)\sum_{n=1}^N f(\tilde{x}_{h+1}^i).
\end{align*}

For any $f,g:\mathcal{X}\times\mathcal{A}\times\Xcal \to\mathbb{R}$, define $\langle f,g \rangle = \mathbb{E}_{(x,a)\sim \Dcal_{x,a}} f(x,a)g(x,a)$, where we overload the notation and denote $\Dcal$ as the empirical distribution over the dataset $\Dcal$ (i.e., put probability $1/|\Dcal|$ over each data point in $\Dcal$), and $\Dcal_{x,a}$ as the marginal distribution over $x,a$. With this notation, we can see that $\ell_t(\pi)$ can be written as a linear functional with respect to $\pi$:
\begin{align}
    \ell_t(\pi) = \langle \pi, K f^t \rangle,
\end{align} where $K f_t$ is defined such that $K f^t(x,a) = K\sum_{i=1}^N \bm{1}[x=x^i_h,a=a^i_h] f^t(x^i_{h+1})$. Under this definition of inner product, we have:
\begin{align*}
    \max_{\pi} \|\pi\| \leq 1, \;\;\; \max \|K f^t\| \leq K.
\end{align*}

It is easy to verify that Algorithm~\ref{alg:l2m} is running Best Response on loss $\{c_t(f)\}_t$ and running FTRL on loss $\{\ell_t(\pi)\}_t$. Using the no-regret guarantee from FTRL, for $\{\pi^t\}$, we have:
\begin{align}
    \frac{1}{T}\sum_{t=1}^T \ell_t(\pi^t) - \min_{\pi\in\Pi}\frac{1}{T}\sum_{t=1}^T \ell_t(\pi) \leq   \frac{K}{\sqrt{T}}.
\end{align}

Denote $\hat{\pi}^\star$ and $\hat{f}^{\star}$ as the minimizer and maximizer of Eqn~\ref{eq:emprical_IPM}, i.e., 
\begin{align}
\label{eq:min_max_optimal_hat}
    (\hat{\pi}^\star, \hat{f}^\star) = \arg\min_{\pi\in\Pi}\arg\max_{f\in\Fcal}  \left( \frac{1}{N}\sum_{i=1}^N {K\pi(a^i_h|x^i_h)}f(x^i_{h+1}) - \frac{1}{N}\sum_{i=1}^N f(\tilde{x}^i_{h+1})   \right).
\end{align}

The following lemma quantifies the performance of $\bar{\pi} = \sum_{t=1}^T \pi^t / T$ and $\bar{f} = \sum_{t=1}^T f^t/T$:
\begin{lemma}
\label{lem:no_regret_results}
Denote $\bar{\pi} = \sum_{t=1}^T \pi^t/T$ and $\bar{f}^{\star} = \max_{f\in\Fcal}\left({\EE}_{(x,a,x')\sim \nu_n \bar{\pi}P^\star}[f(x')] - \EE_{x\sim \mu^{\star}_{h+1}}[f(x)]\right)$. We have:
\begin{align*}
    \frac{1}{N} \sum_{i=1}^N K\bar{\pi}(a^i_h|x^i_h)\bar{f}^\star(x^i_{h+1}) - \frac{1}{N}\sum_{i=1}^T \bar{f}^\star(\tilde{x}^i_{h+1}) \leq \frac{1}{N} \sum_{i=1}^N K\hat{\pi}^{\star}(a^i_h|x^i_h)\hat{f}^\star(x^i_{h+1}) - \frac{1}{N}\sum_{i=1}^N \hat{f}^\star(\tilde{x}^i_{h+1}) + \frac{K}{\sqrt{T}},
\end{align*} where $\hat{\pi}^\star, \hat{f}^\star$ is defined in~\pref{eq:min_max_optimal_hat}.
\end{lemma}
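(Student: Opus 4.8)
The plan is to recognize Lemma~\ref{lem:no_regret_results} as the standard ``no-regret versus best-response'' guarantee for the bilinear saddle problem $\min_\pi\max_f u(\pi,f)$. Concretely, writing the left-hand side as the empirical utility $u(\bar\pi,\bar f^\star)$ and the first term on the right-hand side as $u(\hat\pi^\star,\hat f^\star)$, the lemma is exactly the claim $u(\bar\pi,\bar f^\star)\le u(\hat\pi^\star,\hat f^\star)+K/\sqrt T$. I would exploit three structural facts already in place: (i) $u(\pi,f)$ is affine in each argument, so in particular $u(\cdot\,,g)$ is linear in $\pi$ for any fixed $g$; (ii) by Line~\ref{line:lp_oracle} of~\pref{alg:l2m}, $f^t$ is an exact best response to $\pi^t$, i.e.\ $u(\pi^t,g)\le u(\pi^t,f^t)=\max_{f\in\Fcal}u(\pi^t,f)$ for every fixed $g\in\Fcal$; and (iii) $\pi$ is updated by FTRL, which under the norm bounds $\max_\pi\|\pi\|\le 1$ and $\max_t\|Kf^t\|\le K$ yields the stated regret $\frac1T\sum_t u(\pi^t,f^t)-\min_{\pi\in\Pi}\frac1T\sum_t u(\pi,f^t)\le K/\sqrt T$.

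The proof is then a short chain of inequalities. Using linearity of $u$ in its first argument to decompose $\bar\pi=\frac1T\sum_t\pi^t$, then the best-response property (ii) with $g=\bar f^\star$, then the FTRL regret (iii), and finally the saddle property of $(\hat\pi^\star,\hat f^\star)$ from~\pref{eq:min_max_optimal_hat}, I would write
\begin{align*}
u(\bar\pi,\bar f^\star)
&= \frac1T\sum_{t=1}^T u(\pi^t,\bar f^\star)
\;\le\; \frac1T\sum_{t=1}^T u(\pi^t,f^t)\\
&\le \min_{\pi\in\Pi}\frac1T\sum_{t=1}^T u(\pi,f^t)+\frac{K}{\sqrt T}
\;\le\; \frac1T\sum_{t=1}^T u(\hat\pi^\star,f^t)+\frac{K}{\sqrt T}
\;\le\; u(\hat\pi^\star,\hat f^\star)+\frac{K}{\sqrt T},
\end{align*}
where the last step uses that $\hat f^\star=\arg\max_{f}u(\hat\pi^\star,f)$ is the inner maximizer for $\hat\pi^\star$, so $u(\hat\pi^\star,f^t)\le u(\hat\pi^\star,\hat f^\star)$ for every $t$. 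Unwinding the definition of $u$ in~\pref{eq:utility} gives precisely the displayed inequality in the lemma statement.

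The one point that deserves care—rather than being a genuine obstacle—is that $\bar f^\star$ is defined as a \emph{population} maximizer (against the distribution $\nu_h\bar\pi P^\star$), yet it is evaluated inside the \emph{empirical} utility on the left-hand side. This causes no difficulty here, because step (ii) holds for an \emph{arbitrary} fixed discriminator $g$; the best-response inequality $u(\pi^t,\bar f^\star)\le u(\pi^t,f^t)$ does not require $\bar f^\star$ to be the empirical maximizer. The choice of $\bar f^\star$ as the population best response is made purely to set up the subsequent concentration step, where~\pref{lem:concentration_pi_f} (Bernstein plus a union bound over $\Pi\times\Fcal$) converts the empirical quantities $u(\bar\pi,\bar f^\star)$ and $u(\hat\pi^\star,\hat f^\star)$ into the population IPM values $d_{\Fcal_{h+1}}$ that appear in~\pref{thm:L2M_result}. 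I would also note explicitly that the constant term $-\frac{1}{N'}\sum_i f^t(\tilde x^i_{h+1})$ in $u(\pi,f^t)$ is identical for all $\pi$, so it drops out of the FTRL regret guarantee and the regret bound on $\ell_t$ transfers verbatim to $u(\cdot\,,f^t)$.
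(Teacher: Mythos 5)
Your proof is correct and follows essentially the same argument as the paper: linearity of $u$ in $\pi$ to average over $\bar\pi$, the best-response property of $f^t$ applied to the fixed discriminator $\bar f^\star\in\Fcal$, the FTRL regret bound $K/\sqrt T$, and finally optimality of $\hat f^\star$ against $\hat\pi^\star$ (the paper routes this last step through $\max_{f\in\{f^1,\dots,f^T\}}u(\hat\pi^\star,f)$, which is the same observation). Your two explicit remarks---that the best-response inequality needs only that $\bar f^\star$ is some fixed element of $\Fcal$, not the empirical maximizer, and that the expert term is constant in $\pi$ so the regret on $\ell_t$ transfers to $u(\cdot,f^t)$---are points the paper uses implicitly, and making them explicit is a welcome clarification rather than a deviation.
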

\begin{proof}
Using the definition of $\ell_t$ and the no-regret property on $\{\pi_t\}$, we have:
\begin{align*}
    &\frac{1}{T}\sum_{t=1}^T \left( \frac{1}{N}\sum_{i=1}^N {K\pi^t(a^i_h|x^i_h)}f^t(x^i_{h+1}) - \frac{1}{N}\sum_{i=1}^N f^t(\tilde{x}^i_{h+1})   \right)\\
    &\leq \min_{\pi\in\Pi} \frac{1}{T}\sum_{t=1}^T \left( \frac{1}{N}\sum_{i=1}^N {K\pi(a^i_h|x^i_h)}f^t(x^i_{h+1}) - \frac{1}{N}\sum_{i=1}^N f^t(\tilde{x}^i_{h+1})   \right)  + \frac{K}{\sqrt{T}}.
\end{align*}Since $f^t = \arg\max_{f\in\Fcal} c_t(f)$, we have:
\begin{align*}
    &\frac{1}{N}\sum_{i=1}^N K\bar{\pi}(a^i_h|x^i_h)\bar{f}^\star(x^i_{h+1}) - \frac{1}{N}\sum_{i=1}^N \bar{f}^\star(\tilde{x}^i_{h+1})
    = \frac{1}{T}\sum_{t=1}^T \left( \frac{1}{N}\sum_{i=1}^N {K\pi^t(a_h^i|x_h^i)}\bar{f}^\star(x_{h+1}^i) - \frac{1}{N}\sum_{i=1}^N \bar{f}^\star(\tilde{x}_{h+1}^i)   \right) \\
    & \leq \frac{1}{T}\sum_{t=1}^T  \left( \frac{1}{N}\sum_{i=1}^N {K\pi^t(a_h^i|x_h^i)}f^t(x_{h+1}^i) - \frac{1}{N}\sum_{i=1}^N f^t(\tilde{x}_{h+1}^i)  \right)
\end{align*} We also have:
\begin{align*}
   &\min_{\pi\in\Pi} \frac{1}{T}\sum_{t=1}^T \left( \frac{1}{N}\sum_{i=1}^N {K\pi(a^i_h|x^i_h)}f^t(x^i_{h+1}) - \frac{1}{N}\sum_{i=1}^N f^t(\tilde{x}^i_{h+1})   \right)\\ 
   &\leq \frac{1}{T}\sum_{t=1}^T  \left( \frac{1}{N}\sum_{i=1}^N {K\hat{\pi}^\star(a_h^i|x_h^i)}f^t(x_{h+1}^i) - \frac{1}{N}\sum_{i=1}^N f^t(\tilde{x}^i_{h+1})   \right) \\
   & \leq \max_{f\in \{f^1, \dots, f^T\}} \left( \frac{1}{N}\sum_{i=1}^N {K\hat{\pi}^\star(a_h^i|x_h^i)}f(x_{h+1}^i) - \frac{1}{N}\sum_{i=1}^N f(\tilde{x}^i_{h+1})   \right) \\
   & \leq \left( \frac{1}{N}\sum_{i=1}^N {K\hat{\pi}^\star(a_h^i|x_h^i)}\hat{f}^\star(x_{h+1}^i) - \frac{1}{N}\sum_{i=1}^N \hat{f}^{\star}(\tilde{x}^i_{h+1})   \right),
\end{align*} where the first inequality uses the definition of $\min_{\pi\in\Pi}$, the second inequality uses the fact that the maximum is larger than the average, and the last inequality uses the fact that $\hat{f}^{\star}$ is the maximizer with respect to $\hat{\pi}^{\star}$.

Combining the above results, we have:
\begin{align*}
\frac{1}{N}\sum_{i=1}^N K\bar{\pi}(a^i_h|x^i_h)\bar{f}^\star(x^i_{h+1}) - \frac{1}{N}\sum_{i=1}^N \bar{f}^\star(\tilde{x}^i_{h+1}) \leq \frac{1}{N}\sum_{i=1}^N K\hat{\pi}^\star(a_h^i|x_h^i)\hat{f}^\star(x_{h+1}^i) - \frac{1}{N}\sum_{i=1}^N \hat{f}^\star(\tilde{x}^i_{h+1}) + \frac{K}{\sqrt{T}}.
\end{align*}

\end{proof}

Denote $\pi^\star$ and $f^\star$ as
\begin{align}
\label{eq:one_step_min_max}
    \pi^\star, f^{\star} = \arg\min_{\pi\in\Pi}\arg\max_{f\in \Fcal} \left(\mathbb{E}_{x\sim v, a\sim \pi, x'\sim P_{x,a}}[f(x')] - \mathbb{E}_{x\sim \mu^{\pi^\star}_{h+1}}[f(x)]\right)
\end{align}
Now we are ready to prove~\pref{thm:L2M_result}
\begin{proof}
[Proof of~\pref{thm:L2M_result}]
Denote $C_N = 4\sqrt{\frac{2K\log(2|\Pi||\Fcal|/\delta)}{N}} + \frac{8K\log(2|\Pi||\Fcal|/\delta)}{N} $.
First, using the concentration result from~\pref{lem:concentration_pi_f}, we have:
\begin{align*}
    &\left\lvert \left(\frac{1}{N} \sum_{i=1}^N K\bar{\pi}(a_h^i|x_h^i)\bar{f}^\star(x_{h+1}^i) - \frac{1}{N}\sum_{i=1}^T \bar{f}^\star(\tilde{x}^i_{h+1}) \right) - \left(\mathbb{E}_{(x,a,x')\sim \nu_h\bar{\pi}P^{\star}}[\bar{f}^\star(x')]  - \mathbb{E}_{x\sim \mu^{\star}_{h+1}}[\bar{f}^\star(x)] \right) \right\rvert \\
    & \leq \frac{1}{T}\sum_{t=1}^T \left\lvert \left( \frac{1}{N} \sum_{i=1}^N K{\pi}^t(a_h^i|x_h^i)\bar{f}^\star(x_{h+1}^i) - \frac{1}{N}\sum_{i=1}^T \bar{f}^\star(\tilde{x}^i_{h+1})\right) - \left({\EE}_{(x,a,x')\sim \nu_h{\pi^t}P^{\star}}[\bar{f}^\star(x')]  - {\EE}_{x\sim \mu^{\star}_{h+1}}[\bar{f}^\star(x)] \right) \right\rvert \\
    &\leq \frac{1}{T}\sum_{t=1}^T \left( C_N \right) = C_N.
\end{align*}
On the other hand, for ${\pi}^{\star}, f^\star$, we have:
\begin{align}
   &\left\lvert \left(\frac{1}{N} \sum_{i=1}^N K{\pi}^{\star}(a^i_h|x^i_h){f}^\star(x^i_{h+1}) - \frac{1}{N}\sum_{i=1}^N {f}^\star(\tilde{x}^i_{h+1})\right)  -
   \left( \mathbb{E}_{(x,a,x')\sim \nu_h{\pi}^\star P^{\star}}[{f}^\star(x')]  - {\EE}_{x\sim \mu^{\star}_{h+1}}[{f}^\star(x)]\right) 
   \right\rvert \\
   &\leq C_N.
\end{align} 

Define $\hat{f}' = \max_{f\in \Fcal}(\frac{1}{N} \sum_{i=1}^N K{\pi}^{\star}(a_h^i|x_h^i){f}(x_{h+1}^i) - \frac{1}{N}\sum_{i=1}^N {f}(\tilde{x}_{h+1}^i))$. Combine the above inequalities together, we have:
\begin{align*}
   & \max_{f\in\Fcal} \mathbb{E}_{(x,a,x')\sim \nu_h\bar{\pi}P^{\star}}[{f}(x')]  - {\EE}_{x\sim \mu^{\star}_{h+1}}[{f}(x)]
   =\mathbb{E}_{(x,a,x')\sim \nu_h\bar{\pi}P^{\star}}[\bar{f}^\star(x')]  - {\EE}_{x\sim \mu^{\star}_{h+1}}[\bar{f}^\star(x)] \\
   & \leq \frac{1}{N} \sum_{i=1}^N K\bar{\pi}(a_h^i|x_h^i)\bar{f}^\star(x_{h+1}^i) - \frac{1}{N}\sum_{i=1}^T \bar{f}^\star(\tilde{x}^i_{h+1}) + C_N \\
   & \leq \frac{1}{N} \sum_{i=1}^N K\hat{\pi}^{\star}(a_h^i|x_h^i)\hat{f}^\star(x_{h+1}^i) - \frac{1}{N}\sum_{i=1}^N \hat{f}^\star(\tilde{x}^i_{h+1}) + \frac{K}{\sqrt{T}} + C_N\\
   & \leq \frac{1}{N} \sum_{i=1}^N K{\pi}^{\star}(a_h^i|x_h^i)\hat{f}'(x_{h+1}^i) - \frac{1}{N}\sum_{n=1}^N \hat{f}'(\tilde{x}_{h+1}^i) + \frac{K}{\sqrt{T}} + C_N \\
   & \leq \mathbb{E}_{(x,a,x')\sim \nu_h {\pi}^\star P^{\star}}[\hat{f}'(x')]  - {\EE}_{x\sim \mu^{\star}_{h+1}}[\hat{f}'(x)] + 2C_N + \frac{K}{\sqrt{T}}\\
   & \leq \mathbb{E}_{(x,a,x')\sim \nu_h{\pi}^\star P^{\star}}[{f}^\star(x')]  - {\EE}_{x\sim \mu^{\star}_{h+1}}[{f}^\star (x)] + 2C_N+ \frac{K}{\sqrt{T}},
\end{align*} where the first equality uses the definition of $\bar{f}^\star$, the second inequality uses~\pref{lem:no_regret_results}, the third inequality uses the fact that $\hat{\pi}^\star$ and $\hat{f}^{\star}$ are the min-max solution of~\pref{eq:min_max_optimal_hat}, and the fifth inequality uses the fact that $f^{\star}$ is the maximizer of~\pref{eq:one_step_min_max} given $\pi^{\star}$. Hence, we prove the theorem. 

\end{proof}

\section{Proof of~\pref{thm:non_generalize}}

\begin{lemma}
\label{lem:non_generalize_dist}
There exists a distribution $D\in \Delta(\Xcal)$, such that for any two datasets $S_1 = \{x_1,\dots,x_M\}$ and $S_2 = \{x'_1,\dots,x'_M\}$ where $x_i$ and $x'_i$ are drawn i.i.d from $D$, as long as $M = O(\log(|\Xcal|))$, then:
\begin{align*}
    \lim_{|\Xcal|\to \infty} \mathrm{Pr}\left([S_1\cap S_2 = \emptyset]\right) = 1.
\end{align*}
\end{lemma}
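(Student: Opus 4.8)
The plan is to let $D$ be the uniform distribution on $\Xcal$ and to reduce the disjointness claim to a standard birthday-paradox union bound. Write $n \defeq |\Xcal|$. The event $S_1\cap S_2\neq\emptyset$ occurs precisely when some element of the first dataset coincides with some element of the second, i.e. when $\bigcup_{i,j\in[M]}\{x_i = x'_j\}$ holds; note that repeats \emph{within} a single dataset are irrelevant, so only the $M^2$ cross-pairs matter.

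First I would bound the probability of a single cross-collision. Since $x_i$ and $x'_j$ are independent draws from $D$, we have $\mathrm{Pr}[x_i=x'_j]=\sum_{x\in\Xcal}D(x)^2$, which for the uniform distribution equals $n\cdot(1/n)^2 = 1/n$. Applying a union bound over all $M^2$ cross-pairs then yields $\mathrm{Pr}[S_1\cap S_2\neq\emptyset]\le M^2/n$. Equivalently, one can condition on $S_2$: it contains at most $M$ distinct points, so each of the $M$ draws forming $S_1$ avoids $S_2$ with probability at least $1-M/n$, giving $(1-M/n)^M\ge 1-M^2/n$ by Bernoulli's inequality, which is the same bound.

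Finally I would substitute $M=O(\log n)$, which gives $M^2/n = O\big((\log n)^2/n\big)\to 0$ as $n\to\infty$. Hence $\mathrm{Pr}[S_1\cap S_2=\emptyset]\ge 1-M^2/n\to 1$, proving the lemma.

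The step that carries all the content is the collision-probability estimate together with checking that the chosen growth rate of $M$ is slow enough; there is no genuine obstacle here, since $(\log n)^2/n\to 0$ with a large margin. The only point worth remarking is that the $O(\log|\Xcal|)$ scaling sits far below the birthday threshold $\Theta(\sqrt{|\Xcal|})$ at which cross-collisions begin to occur with constant probability, so the uniform distribution is a convenient choice and the argument is not delicate; indeed, any distribution whose collision probability $\sum_{x}D(x)^2$ is $O(1/n)$ would work identically.
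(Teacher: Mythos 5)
Your proposal is correct, and it rests on the same choice of $D$ as the paper (the uniform distribution on $\Xcal$) and the same underlying fact, namely that cross-collisions between the two datasets are vanishingly rare when $M = O(\log |\Xcal|)$; however, the execution is genuinely different and cleaner. The paper's proof first bounds the probability that $S_1$ contains no repeated samples, showing $(1-1/N)(1-2/N)\cdots(1-(M-1)/N) \to 1$ where $N = |\Xcal|$, then conditions on that event to compute the disjointness probability exactly as $(1-M/N)^M$, and finally evaluates its limit via the identity $\lim_{x\to\infty}(1-1/x)^x = 1/e$ combined with $M^2/N \to 0$. Your union bound over the $M^2$ cross-pairs (equivalently, your Bernoulli-inequality variant $(1-M/N)^M \geq 1 - M^2/N$) shows that the within-dataset distinctness step is unnecessary: repeats inside either dataset only shrink the set of points that must be avoided, so the unconditional bound $\mathrm{Pr}[S_1 \cap S_2 = \emptyset] \geq 1 - M^2/N$ holds directly. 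This buys you a shorter argument, an explicit non-asymptotic failure rate $O((\log N)^2/N)$ rather than a purely asymptotic statement, and the observation that any distribution with collision probability $\sum_x D(x)^2 = O(1/N)$ would serve equally well. What the paper's longer route provides is the exact conditional value $(1-M/N)^M$, pinning down the disjointness probability rather than merely lower-bounding it, but that extra precision is not needed for the non-generalization theorem this lemma supports.
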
 
\begin{proof}
We simply set $D$ to be a uniform distribution of $\Xcal$. Denote $|\Xcal| = N$, and $M = O(\log (N))$. The probability of $S_1$ and $S_2$ does not have any overlap samples can be easily computed as:
\begin{align*}
    \mathrm{P}(S_1 \cap S_2 = \emptyset) \geq \mathrm{P}(S_1 \cap S_2 = \emptyset \text{ and } S_1 \text{ does not have repeated samples}).
\end{align*} Note that the probability that $S_1$ does not have repeated samples can be computed as:
\begin{align*}
    \mathrm{P}(S_1 \text{ does not have repeated samples}) = (1- 1/N)(1-2/N)(1 - (M-1)/N).
\end{align*} When $N\to \infty$ and $M = O(\log N)$, we have:
\begin{align*}
    \lim_{N\to\infty} \mathrm{P}(S_1 \text{ does not have repeated samples}) = 1.
\end{align*}
Now, conditioned on the event that $S_1$ does not contain repeated samples, we have:
\begin{align*}
    \mathrm{P}(S_1\cap S_2 = \emptyset) = (1-M/N)^M = (1 - M/N)^{(N/M)*(M^2/N)}
\end{align*}
Take $N\to\infty$, we know that $\lim_{x\to\infty}(1-1/x)^x = 1/e$ and $\lim_{N\to\infty} M^2/N = 0$, hence we have:
\begin{align*}
    \lim_{N\to\infty} (1-M/N)^K = \lim_{N\to\infty}(1-M/N)^{(N/M)(M^2/N)} = \lim_{N\to\infty} (1/e)^{M^2/N} = 1.
\end{align*}Hence we prove the lemma by coming two results above. 
\end{proof} 

We construct the MDP using the above lemma. The MDP has $H=2$, two actions $\{a, a^{\star}\}$, and the initial distribution $\rho$ assigns probability one to a unique state $\hat{x} \in \Xcal$. The expert policy $\pi^\star$ is designed to be $\pi^\star(a^\star|\hat{x}) = 1$, i.e., the expert's action at time step $h=1$ is $a^{\star}$. We split the state space $\Xcal$ into half and half, denoted as $\Xcal_1$ and $\Xcal_2$, such that $\Xcal_1 \cap \Xcal_2 = \emptyset$ and $|\Xcal_1| = |\Xcal_2| = N/2$.  We design the MDP's dynamics such that $P(\cdot|\hat{x}, a)$ assigns probability $2/N$ to each state in $\Xcal_1$ and assigns probability $0$ to any other state in $\Xcal_2$. We design $P(\cdot|\hat{x},a^\star)$ such that it assigns probability $2/N$ to each state in $\Xcal_2$ and zero to each state in $\Xcal_1$. 

Denote $\Dcal^\star = \{\tilde{x}^{(i)}_2\}_{i=1}^M$ as the states generated from the expert by executing $a^\star$ at $\hat{x}$. For any two policies $\pi$ and $\pi'$, such that $\pi(a^\star|\hat{x}) =1$ and $\pi'(a|\hat{x}) = 1$, denote $\Dcal = \{x^{i}_2\}_{i=1}^M$ as the dataset sampled from executing $\pi$ at $\hat{x}$ $M$ many times, and $\Dcal' = \{x'^{(i)}_2\}_{i=1}^M$ as the dataset sampled from executing $\pi'$ at $\hat{x}$ $M$ many times. From~\pref{lem:non_generalize_dist}, we know that 
\begin{align*}
    \lim_{N\to\infty} \mathrm{P}(\Dcal \cap \Dcal^{\star} = \emptyset) = 1, \;\;\;\; \lim_{N\to\infty}\mathrm{P}(\Dcal^* \cap \Dcal' = \emptyset) = 1.
\end{align*} Hence asymptotically either $\Dcal$ nor $\Dcal'$ will overlap with $\Dcal^\star$, unless $M = \Omega(\mathrm{poly}(N)$) = $\Omega(\mathrm{poly}(|\Xcal|))$.

\section{Proof of~\pref{thm:ft_main_theorem}}
\label{app:proof_main_theorem}

We first present some extra notations and useful lemmas below. 
\begin{lemma}
[Performance Difference Lemma \citep{kakade2002approximately}]
Consider a policy $\bm\pi = \{\pi_1,\dots,\pi_H\}$ and $\bm\pi^\star = \{\pi^\star_1, \dots, \pi^\star_H\}$. We have:
\begin{align*}
    J(\bm\pi) - J(\bm\pi^\star) = \sum_{h=1}^H \mathbb{E}_{x\sim \mu^{\bm\pi}_h}\left[\mathbb{E}_{a\sim \pi_h(\cdot|x)}Q^{\star}_h(x,a) - V^{\star}_h(x)\right].
\end{align*}
\label{lem:PDL}
\end{lemma}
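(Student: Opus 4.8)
The plan is to prove the Performance Difference Lemma (PDL) identity
\begin{align*}
    J(\bm\pi) - J(\bm\pi^\star) = \sum_{h=1}^H \mathbb{E}_{x\sim \mu^{\bm\pi}_h}\left[\mathbb{E}_{a\sim \pi_h(\cdot|x)}Q^{\star}_h(x,a) - V^{\star}_h(x)\right]
\end{align*}
by a telescoping argument along the trajectory distribution induced by $\bm\pi$. The key idea is to express the difference $J(\bm\pi) - J(\bm\pi^\star)$ as $\EE_{x_1\sim\rho}[V_1^{\bm\pi}(x_1) - V_1^{\star}(x_1)]$, and then to write the value gap at each layer in terms of the value gap at the next layer plus a single-step advantage term evaluated under the expert's $Q^\star$. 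Since $J(\bm\pi)=\EE_{x_1\sim\rho}V_1^{\bm\pi}(x_1)$ and $J(\bm\pi^\star)=\EE_{x_1\sim\rho}V_1^{\star}(x_1)$ by definition, and $\mu_1^{\bm\pi}=\rho$, the whole proof reduces to unrolling this recursion.

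First I would establish the one-step recursion at an arbitrary layer $h$ and observation $x_h$. Using the definition of the value function of $\bm\pi$, we have $V_h^{\bm\pi}(x_h) = \EE_{a\sim\pi_h(\cdot|x_h),\,x'\sim P_{x_h,a}}[V_{h+1}^{\bm\pi}(x')]$. I would add and subtract $Q_h^\star(x_h,a)=\EE_{x'\sim P_{x_h,a}}[V_{h+1}^\star(x')]$ to obtain
\begin{align*}
    V_h^{\bm\pi}(x_h) - V_h^{\star}(x_h) = \EE_{a\sim\pi_h(\cdot|x_h)}\!\big[Q_h^\star(x_h,a)\big] - V_h^{\star}(x_h) + \EE_{a\sim\pi_h,\,x'\sim P_{x_h,a}}\!\big[V_{h+1}^{\bm\pi}(x') - V_{h+1}^{\star}(x')\big].
\end{align*}
The first two terms are exactly the single-step advantage summand appearing in the lemma, and the last term is the value gap at layer $h+1$ pushed forward by one step of $\bm\pi$.

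Next I would take the expectation of this identity under $x_h\sim\mu_h^{\bm\pi}$. Here the crucial observation is that the forward transition in the residual term is consistent with the state-occupancy recursion for $\bm\pi$: by the Markov property, if $x_h\sim\mu_h^{\bm\pi}$ and we apply $a\sim\pi_h(\cdot|x_h)$ followed by $x'\sim P_{x_h,a}$, then $x'\sim\mu_{h+1}^{\bm\pi}$. Hence
\begin{align*}
    \EE_{x_h\sim\mu_h^{\bm\pi}}\!\big[V_h^{\bm\pi}(x_h) - V_h^{\star}(x_h)\big] = \EE_{x_h\sim\mu_h^{\bm\pi}}\!\big[\EE_{a\sim\pi_h}Q_h^\star(x_h,a) - V_h^{\star}(x_h)\big] + \EE_{x_{h+1}\sim\mu_{h+1}^{\bm\pi}}\!\big[V_{h+1}^{\bm\pi}(x_{h+1}) - V_{h+1}^{\star}(x_{h+1})\big].
\end{align*}
I would then sum this over $h=1,\dots,H$; the residual terms telescope, leaving the layer-$1$ term $\EE_{x_1\sim\mu_1^{\bm\pi}}[V_1^{\bm\pi}-V_1^\star]$ on the left and the layer-$(H+1)$ term on the right. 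The boundary term vanishes because $V_H^{\bm\pi}(x)=V_H^\star(x)=c(x)$ (both equal the terminal cost, so there is no gap at the final layer), which makes the telescope close cleanly. Using $\mu_1^{\bm\pi}=\rho$ and the definitions of $J$ recovers the claimed identity.

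The only genuinely delicate point is correctly handling the indexing at the boundary, namely ensuring the sum runs to $H$ and that the terminal value gap $V_H^{\bm\pi}-V_H^\star$ is zero so the telescope has no leftover tail. Everything else is routine: the single-step decomposition is an exact algebraic identity, and the pushforward consistency $x_{h+1}\sim\mu_{h+1}^{\bm\pi}$ is immediate from the Markov definition of $\mu_{h+1}^{\bm\pi}$. I do not expect any analytic obstacle, since the statement is an exact identity rather than an approximation; the main care is bookkeeping of horizon conventions (whether the cost sits at step $H$ and how $Q_h^\star$ is defined at the last transition).
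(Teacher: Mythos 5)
Your telescoping proof is correct and is exactly the standard argument for this lemma; the paper itself gives no proof, citing \citet{kakade2002approximately}, whose underlying argument is the same one-step decomposition $V_h^{\bm\pi}-V_h^\star = \left(\EE_{a\sim\pi_h}Q_h^\star - V_h^\star\right) + \EE_{a\sim\pi_h,x'\sim P}\left[V_{h+1}^{\bm\pi}(x')-V_{h+1}^\star(x')\right]$ pushed forward under $\mu_h^{\bm\pi}$ and telescoped. The boundary wrinkle you flagged resolves cleanly under the paper's conventions: since $V_H^{\bm\pi}(x)=V_H^\star(x)=c(x)$, the recursion is summed over $h=1,\dots,H-1$ with a vanishing tail at layer $H$, and the $h=H$ summand in the stated identity is itself identically zero under the natural convention $Q_H^\star(x,a)=c(x)$, so the sum to $H$ and the sum to $H-1$ agree.
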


Note that under our setting, i.e., the cost function does not depend on actions, the above equation can be simplified to:
\begin{align}
\label{eq:PDL}
    J(\bm\pi) - J(\bm\pi^\star) & = \sum_{h=1}^H \mathbb{E}_{x\sim \mu^{\bm\pi}_h}\left[\mathbb{E}_{a\sim \pi_h(\cdot|x)}Q^{\star}_h(x,a) - V^{\star}_h(x)\right] \\
    & = \sum_{h=1}^H \mathbb{E}_{x\sim\mu^{\bm\pi}_h}\left[\mathbb{E}_{a\sim \pi_h, x'\sim P_{x,a}}\left[V^{\star}_{h+1}(x')\right] - \mathbb{E}_{a\sim \pi^\star_h, x'\sim P_{x,a}}\left[V^{\star}_{h+1}(x')\right]\right],
\end{align} where we use Bellman equations, i.e., $Q^{\star}_h(x,a) = c(x) + \mathbb{E}_{x'\sim P_{x,a}} V^{\star}_{h+1}(x')$ and $V^{\star}_{h}(x) = c(x) + \mathbb{E}_{a\sim \pi^\star_h, x'\sim P_{x,a}}V^{\star}_{h+1}(x') $.\footnote{Note that here we actually prove the theorem under a more general setting where we could have cost functions at any time step $h$.} 

Note that for any $h$, we have:
\begin{align}
    &\left\lvert\mathbb{E}_{x\sim\mu^{\bm\pi}_h}\left[\mathbb{E}_{a\sim \pi_h, x'\sim P_{x,a}}\left[V^{\star}_{h+1}(x')\right] - \mathbb{E}_{a\sim \pi^\star_h, x'\sim P_{x,a}}\left[V^{\star}_{h+1}(x')\right]\right]\right\rvert \nonumber\\
    & \leq \left\lvert \mathbb{E}_{x\sim\mu^{\bm\pi}_h}\left[\mathbb{E}_{a\sim \pi_h, x'\sim P_{x,a}}\left[V^{\star}_{h+1}(x')\right]\right]  - {\EE}_{x\sim \mu_h^{\star}} \mathbb{E}_{a\sim \pi^\star_h, x'\sim P_{x,a}}\left[V^{\star}_{h+1}(x')\right]  \right\rvert\nonumber \\
    &\;\;\;\; + \left\lvert
    \mathbb{E}_{x\sim \mu_h^{\star}} \mathbb{E}_{a\sim \pi^\star_h, x'\sim P_{x,a}}\left[V^{\star}_{h+1}(x')\right]
    - \mathbb{E}_{x\sim \mu_h^{\bm\pi}} \mathbb{E}_{a\sim \pi^\star_h, x'\sim P_{x,a}}\left[V^{\star}_{h+1}(x')\right] 
    \right\rvert \nonumber\\
    & \leq d_{\Fcal_{h+1}}(\pi_h \vert \mu_h^{\bm\pi}, \mu_{h+1}^\star) +  \left\lvert
    \mathbb{E}_{x\sim \mu_h^{\star}} \mathbb{E}_{a\sim \pi^\star_h, x'\sim P_{x,a}}\left[V^{\star}_{h+1}(x')\right] - \mathbb{E}_{x\sim \mu_h^{\bm\pi}} \mathbb{E}_{a\sim \pi^\star_h, x'\sim P_{x,a}}\left[V^{\star}_{h+1}(x')\right] \right\rvert \nonumber\\
    & \leq  d_{\Fcal_{h+1}}(\pi_h \vert \mu_h^{\bm\pi}, \mu_{h+1}^\star) + \Delta_h + 2\epsilon_{\mathrm{be}}, 
    \label{eq:link_max_to_j}
\end{align}where the first inequality comes from the triangle inequality, the second inequality comes from the fact that $V_{h+1}^\star\in \Fcal_{h+1}$, and in the third inequality,  we denote $\Delta_h = \max_{f\in\Fcal}\left\lvert  \mathbb{E}_{x\sim \mu_h^{\star}}[f(x)] - \mathbb{E}_{x\sim \mu^{\bm\pi}_h}[f(x)]\right\rvert$, and $\epsilon_{\mathrm{be}}$ is introduced because $\Gamma_h V_{h+1}^\star$ might not in $\Fcal_{h}$. 

Now we are ready to prove the main theorem. 
\begin{proof}
[Proof of~\pref{thm:ft_main_theorem}]
We consider the $h$'th iteration. Let us denote $\bm\pi = \{\pi_1,\dots, \pi_{h-1}\}$ and $\mu^{\bm\pi}_{h}$ as the observation distribution at time step $h$ of following policies $\bm\pi$ starting from the initial distribution $\rho$. Denote $\mu^{\star}_{h+1}$ as the observation distribution of the expert policy $\bm\pi^\star$ at time step $h+1$ starting from the initial distribution $\rho$. Note that the dataset $\{\tilde{x}^{(i)}_{h+1}\}_{i=1}^n$ is generated from distribution $\mu^{\star}_{h+1}$. The data at $\Dcal$ is generated i.i.d by first drawing sample $x^{(i)}_{h}$ from $\mu^{\bm\pi}_h$ (i.e., executing $\pi_1,\dots \pi_{h-1}$), and then sample action $a_{h}^{(i)}\sim U(\Acal)$, and then sample $x_{h+1}^{(i)}$ from the real system $P_{x^{(i)}_{h},a^{(i)}_h}$. 

Mapping the above setup to the setup in~\pref{thm:L2M_result}, i.e., set 
\begin{align*}
\nu_h = \mu^{\bm\pi}_h, \;\; T = \Theta(4K^2/\epsilon^2), \;\;  N = \Theta(K\log(|\Pi||\Fcal|/\delta)/\epsilon^2),
\end{align*}
\pref{alg:l2m} will output a policy $\pi_{h}$ such that with probability at least $1-\delta$:
\begin{align*}
    d_{\Fcal_{h+1}}(\pi_h \vert \mu_h^{\bm{\pi}}, \mu_{h+1}^\star) \leq \min_{\pi\in\Pi_h}  d_{\Fcal_{h+1}}(\pi \vert \mu_h^{\bm{\pi}}, \mu_{h+1}^\star) + \epsilon. 
\end{align*} 

 Recall the definition of refined inherent Bellman Error $\epsilon'_{\mathrm{be}}$ with respect to $\Fcal_h$ and $\bm\pi^\star$:
\begin{align*}
    \epsilon'_{\mathrm{be}, h} =  \max_{g\in \Fcal_{h+1}}\min_{f\in\Fcal_h} \| f - \Gamma_h g  \|_{(\mu^{\bm\pi}_h + \mu_h^\star) /2}.
\end{align*}


Denote $\hat{f}$ as:
\begin{align*}
    \hat{f} = \arg\max_{f\in\Fcal_{h+1}} \left(
        \mathbb{E}_{x\sim \mu_h^{\bm\pi}}\left[ \mathbb{E}_{a\sim \pi^{\star}_h(\cdot|x),x'\sim P_{x,a}}[f(x')]  \right] - \mathbb{E}_{x\sim \mu_h^{{\star}}}\left[\mathbb{E}_{a\sim\pi^\star_{h}(\cdot|x),x'\sim P_{x,a}}[f(x')]\right]
    \right),
\end{align*} and $\hat{g}$ as:
\begin{align*}
   \hat{g} = \arg\min_{g\in\Fcal_h} \| g - \Gamma_h \hat{f} \|_{(\mu^{\bm\pi}_h + \mu_h^\star)/2}
\end{align*}


Now we upper bound $\min_{\pi\in\Pi_h}  d_{\Fcal_{h+1}}(\pi \vert \mu_h^{\bm{\pi}}, \mu_{h+1}^\star) $ as follows.
\begin{align*}
    &\min_{\pi\in\Pi_h}  d_{\Fcal_{h+1}}(\pi \vert \mu_h^{\bm{\pi}}, \mu_{h+1}^\star)  \leq d_{\Fcal_{h+1}}(\pi_h^\star \vert \mu_h^{\bm{\pi}}, \mu_{h+1}^\star) \\
    & = \max_{f\in\Fcal_{h+1}} \left\lvert \mathbb{E}_{x\sim \mu^{\bm\pi}_h,a\sim \pi^\star_h,x'\sim P_{x,a}}f(x') - \mathbb{E}_{x\sim \mu_h^{\star},a\sim \pi^\star_h, x'\sim P_{x,a}}f(x')     \right\rvert \\
    & = \max_{f\in\Fcal_{h+1}} \left\lvert
        \mathbb{E}_{x\sim \mu_h^{\bm\pi}}\left[ \mathbb{E}_{a\sim \pi^{\star}_h(\cdot|x),x'\sim P_{x,a}}[f(x')]  \right] - \mathbb{E}_{x\sim \mu_h^{{\star}}}\left[\mathbb{E}_{a\sim\pi^*_{h}(\cdot|x),x'\sim P_{x,a}}[f(x')]\right]
    \right\rvert\\
    & = \left\lvert
        \mathbb{E}_{x\sim \mu_h^{\bm\pi}}\left[ \mathbb{E}_{a\sim \pi^{\star}_h(\cdot|x),x'\sim P_{x,a}}[\hat{f}(x')]  \right] - \mathbb{E}_{x\sim \mu_h^{{\star}}}\left[\mathbb{E}_{a\sim\pi^\star_{h}(\cdot|x),x'\sim P_{x,a}}[\hat{f}(x')]\right]
    \right\rvert\\
    & \leq \left\lvert \mathbb{E}_{x\sim\mu^{\bm\pi}_h}[\hat{g}(x)] -  \mathbb{E}_{x\sim\mu^{\star}_h}[\hat{g}(x)]   \right\rvert + \left\lvert \mathbb{E}_{x\sim \mu^{\bm\pi}_h}[\hat{g}(x) - \mathbb{E}_{a\sim \pi^\star_h,x'\sim P_{x,a}}\hat{f}(x')]\right\rvert + \left\lvert \mathbb{E}_{x\sim \mu^{\star}_h}[\hat{g}(x) - \mathbb{E}_{a\sim \pi^\star_h,x'\sim P_{x,a}}\hat{f}(x')]\right\rvert
    \\
    & \leq \max_{f\in\Fcal_h} \left\lvert \mathbb{E}_{x\sim \mu^{\bm\pi}_h}[f(x)] - \mathbb{E}_{x\sim\mu^{\star}_h}[f(x)]    \right\rvert + 2\mathbb{E}_{x\sim (\mu_h^{\bm{\pi}} + \mu_h^\star)/2}\left[\lvert \hat{g}(x) - \EE_{a\sim\pi^\star_h, x'\sim P_{x,a}}\hat{f}(x') \rvert\right]\\
    & \leq \max_{f\in\Fcal_h} \left\lvert \mathbb{E}_{x\sim \mu^{\bm\pi}_h}[f(x)] - \mathbb{E}_{x\sim\mu^{\star}_h}[f(x)]    \right\rvert + 2\epsilon'_{\mathrm{be}}\\
    & = \Delta_h + 2\epsilon'_{\mathrm{be}},
\end{align*} where the first inequality comes from the realizable assumption that $\pi^\star_h\in\Pi$, the second inequality comes from an application of triangle inequality, and the third inequality comes from the definition of $\epsilon_{\mathrm{be}}$ and the fact that $\hat{g}\in\Fcal_h$. 

After learn $\pi_h$, $\bm\pi$ is updated to $\bm\pi = \{\pi_1,\dots, \pi_h\}$.  For $\Delta_{h+1}$, we have:
\begin{align*}
    &\Delta_{h+1} = \max_{f\in\Fcal_{h+1}}\left\lvert {\EE}_{x\sim \mu^{\bm\pi}_{h+1}}[f(x)] - {\EE}_{x\sim \mu^{\star}_{h+1}}[f(x)]  \right\rvert \\
    & = \max_{f\in\Fcal_{h+1}} \left\lvert  \mathbb{E}_{x\sim \mu^{\bm\pi}_h,a\sim\pi_h, x'\sim P_{x,a}}[f(x')] - \mathbb{E}_{x\sim \mu^{\star}_h,a\sim \pi^\star_h, x'\sim P_{x,a}}[f(x')]     \right\lvert \\
    & = d_{\Fcal_{h+1}}(\pi_h \vert \mu_h^{\bm\pi}, \mu_{h+1}^\star)\leq \min_{\pi\in\Pi_h}  d_{\Fcal_{h+1}}(\pi \vert \mu_h^{\bm\pi}, \mu_{h+1}^\star) + O(\epsilon) \leq 
    \Delta_h + 2\epsilon'_{\mathrm{be}} + O(\epsilon).
\end{align*}
Define $\Delta_ 0 = \max_{f}\left\lvert \mathbb{E}_{x\sim \rho}[f(x)] - \mathbb{E}_{x\sim \rho}[f(x)]  \right\rvert = 0$, we have for any $h$, 
\begin{align*}
    \Delta_h \leq 2h\epsilon'      _{\mathrm{be}} + O(h\epsilon).
\end{align*}

Now we link $\Delta_h$ to the performance of the policy $J(\bm\pi)$. From~\pref{eq:link_max_to_j}, we know that:
\begin{align*}
    &\left\lvert\mathbb{E}_{x\sim\mu^{\bm\pi}_h}\left[\mathbb{E}_{a\sim \pi_h, x'\sim P_{x,a}}\left[V^{\star}_{h+1}(x')\right] - \mathbb{E}_{a\sim \pi^\star_h, x'\sim P_{x,a}}\left[V^{\star}_{h+1}(x')\right]\right]\right\rvert \\
    & \leq d_{\Fcal_{h+1}}(\pi_h \vert \mu_h^{\bm\pi}, \mu_{h+1}^\star) + \Delta_h  + 2\epsilon_{\mathrm{be}}\\
    & \leq \Delta_h + O(\epsilon) + 2\epsilon'_{\mathrm{be}} + \Delta_h + 2\epsilon'_{\mathrm{be}} = 2\Delta_h + 4\epsilon'_{\mathrm{be}} + O(\epsilon)\\
    & \leq 4h\epsilon'_{\mathrm{be}} + O(2h\epsilon)   + 4\epsilon'_{\mathrm{be}} + O(\epsilon) = O(h\epsilon'_{\mathrm{be}}) + O(h\epsilon).
\end{align*}
Using Performance Difference Lemma (\pref{lem:PDL}), we know that:
\begin{align*}
    J(\bm\pi) - J(\bm\pi^\star) &\leq \sum_{h=1}^H \left\lvert  
    \mathbb{E}_{x\sim\mu^{\bm\pi}_h}\left[\mathbb{E}_{a\sim \pi_h, x'\sim P_{x,a}}\left[V^{\star}_{h+1}(x')\right] - \mathbb{E}_{a\sim \pi^\star_h, x'\sim P_{x,a}}\left[V^{\star}_{h+1}(x')\right]\right]\right\rvert\\
    & \leq \sum_{h=1}^H 4h\epsilon'_{\mathrm{be}} + 4\epsilon'_{\mathrm{be}} + O(2h\epsilon) + O(\epsilon) \\
    & \leq 4H^2\epsilon'_{\mathrm{be}} + 2H\epsilon'_{\mathrm{be}} + O(2H^2\epsilon) + O(H\epsilon) = O(H^2\epsilon'_{\mathrm{be}}) + O(H^2\epsilon)
\end{align*}

\end{proof}

\section{Proof of~\pref{prop:seperate_IL_and_RL}}

We first show the construction of the MDP below. The MDP has horizon $H$, $2^H-1$ many states, and two actions $\{l, r\}$ standing for \emph{go left} and \emph{go right} respectively. All states are organized in a perfect balanced binary tree, with $2^{H-1}$ many leafs at level $h = H$, and the first level $h=1$ contains only a root. The transition is deterministic such that at any internal state, taking action $l$ leads to the state's left child, and taking action $r$ leads to the state's right child. Each internal node has cost zero, and all leafs will have nonzero cost which we will specify later. Note that in such MDP, any sequence of actions $\{a_1,\dots, a_{H-1}\}$ with $a_i\in \{l,r\}$ deterministically leads to one and only one leaf, and the total cost of the sequence of actions is only revealed once the leaf is reached.   

The first part of the proposition is proved by reducing the problem to Best-arm identification in multi-armed bandit (MAB) setting. We use the following lower bound of best-arm identification in MAB from \cite{krishnamurthy2016pac}:
\begin{lemma}
[Lower bound for best arm identification in stochastic bandits from \cite{krishnamurthy2016pac}]
\label{lem:best_arm_mab}
For any $K\geq 2$ and $\epsilon \in (0, \sqrt{1/8}]$, and any best-arm identification algorithm, there exists a multi-armed bandit problem for which the best arm $i^\star$ is $\epsilon$ better than all others, but for which the estimate $\hat{i}$ of the best arm must have $\mathrm{P}(\hat{i}\neq i^\star) \geq 1/3$ unless the number of samples collected $T$ is at least $K/(72\epsilon^2)$.
\end{lemma}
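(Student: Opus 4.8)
The plan is to establish this as a standard information-theoretic lower bound, via a change-of-measure argument over a family of hard Bernoulli bandit instances. First I would fix the instance family. Let the reference (null) instance $\nu_0$ give every arm reward $\mathrm{Ber}(1/2)$, and for each $i\in[K]$ let instance $\nu_i$ give arm $i$ reward $\mathrm{Ber}(1/2+\epsilon)$ and every other arm $\mathrm{Ber}(1/2)$. In $\nu_i$ the unique best arm is $i^\star=i$, exactly $\epsilon$ better than all others, so this realizes the structure demanded by the statement (and $\epsilon\le\sqrt{1/8}<1/2$ keeps the means valid). The goal is to show that no algorithm with budget $T\le K/(72\epsilon^2)$ can identify the best arm with error below $1/3$ simultaneously on all these instances.

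The key quantity is the single-pull divergence $\KL(\mathrm{Ber}(1/2)\,\|\,\mathrm{Ber}(1/2+\epsilon)) = -\tfrac12\ln(1-4\epsilon^2)$. Applying $-\ln(1-x)\le x/(1-x)$ with $x=4\epsilon^2\le 1/2$ (this is exactly where the hypothesis $\epsilon\le\sqrt{1/8}$ enters, to force $4\epsilon^2\le 1/2$) bounds it by $4\epsilon^2$. Writing $\mathbb{P}_{\nu_0}$ and $\mathbb{P}_{\nu_i}$ for the laws of the length-$T$ interaction transcript and $N_i$ for the number of pulls of arm $i$, the chain rule for KL over the interaction protocol gives the standard divergence decomposition $\KL(\mathbb{P}_{\nu_0}\,\|\,\mathbb{P}_{\nu_i}) = \mathbb{E}_{\nu_0}[N_i]\,\KL(\mathrm{Ber}(1/2)\,\|\,\mathrm{Ber}(1/2+\epsilon)) \le 4\epsilon^2\,\mathbb{E}_{\nu_0}[N_i]$, since the two instances differ only in the reward law of arm $i$.

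Next I would transfer this to the output event. For the estimate $\hat i$, Pinsker's inequality gives $\big|\mathbb{P}_{\nu_i}(\hat i = i) - \mathbb{P}_{\nu_0}(\hat i = i)\big| \le \sqrt{\tfrac12 \KL(\mathbb{P}_{\nu_0}\,\|\,\mathbb{P}_{\nu_i})} \le \sqrt{2\epsilon^2\,\mathbb{E}_{\nu_0}[N_i]}$. Averaging over $i$ and using $\sum_i \mathbb{P}_{\nu_0}(\hat i = i)\le 1$, $\sum_i \mathbb{E}_{\nu_0}[N_i]=T$, and the concavity bound $\tfrac1K\sum_i\sqrt{\mathbb{E}_{\nu_0}[N_i]}\le\sqrt{T/K}$ (Jensen), yields $\tfrac1K\sum_i \mathbb{P}_{\nu_i}(\hat i = i) \le \tfrac1K + \sqrt{2\epsilon^2\,T/K}$. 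Substituting $T\le K/(72\epsilon^2)$ makes the second term $\sqrt{2/72}=1/6$, and $1/K\le 1/2$ for $K\ge 2$, so the average success probability is at most $2/3$; hence the average error probability is at least $1/3$, and by the pigeonhole principle some instance $\nu_i$ attains error at least $1/3$, which proves the claim.

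The main obstacle is making the divergence decomposition rigorous: one must formalize the sequential interaction as a probability space over transcripts $(A_1,R_1,\dots,A_T,R_T)$, verify that the action-selection kernels under $\nu_0$ and $\nu_i$ coincide (they depend only on the observed history, not on the hidden instance) so they cancel in the log-likelihood ratio, and confirm that the reward factors contribute only on rounds where arm $i$ is pulled, producing the $\mathbb{E}_{\nu_0}[N_i]$ weight. The remaining pieces---the logarithm bound, Pinsker, and the averaging---are routine, and the constants are arranged so that the hypothesis $\epsilon\le\sqrt{1/8}$ and the threshold $K/(72\epsilon^2)$ combine to give exactly the stated $1/3$. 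The cleanest formulation is the fixed-budget one (matching the use ``after collecting $T$ trajectories''); the adaptive-stopping case extends the same way, treating $N_i$ and $T$ as random with $\mathbb{E}[N_i]$ in the decomposition.
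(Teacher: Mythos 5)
Your proposal is correct, but there is no in-paper proof to compare it against: the paper imports this lemma verbatim from \citet{krishnamurthy2016pac} and uses it as a black box in the proof of \pref{prop:seperate_IL_and_RL}, so you have supplied a proof where the paper defers to a citation. Your argument is the canonical change-of-measure proof, and the arithmetic checks out exactly against the stated constants: $\mathrm{KL}(\mathrm{Ber}(1/2)\,\|\,\mathrm{Ber}(1/2+\epsilon)) = -\tfrac12\ln(1-4\epsilon^2) \le \tfrac12\cdot\tfrac{4\epsilon^2}{1-4\epsilon^2} \le 4\epsilon^2$, where the hypothesis $\epsilon\le\sqrt{1/8}$ enters precisely to guarantee $1-4\epsilon^2\ge 1/2$; the divergence decomposition yields $\mathrm{KL}(\mathbb{P}_{\nu_0}\|\mathbb{P}_{\nu_i})\le 4\epsilon^2\,\mathbb{E}_{\nu_0}[N_i]$ (correctly with the expectation under the reference measure $\nu_0$, which is what Pinsker in your chosen order requires); and Pinsker, disjointness of the events $\{\hat i = i\}$, and Jensen give average success at most $1/K + \sqrt{2\epsilon^2 T/K} \le 1/2 + 1/6 = 2/3$ when $T\le K/(72\epsilon^2)$ and $K\ge 2$, so pigeonhole produces a single hard instance with error at least $1/3$ --- matching the quantifier order of the statement (instance chosen after the algorithm). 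The fact that $72$ and $\sqrt{1/8}$ fall out exactly strongly suggests this mirrors the proof in the cited source. Two minor points if you write it out in full: include the algorithm's internal randomness in the transcript space so the action-selection kernels cancel in the likelihood ratio (you flag this, and it is the only genuinely delicate step); and note that the paper applies the lemma to cost minimization in its binary-tree MDP, so either flip rewards to costs or take $i^\star$ to be the arm of smallest mean --- a purely cosmetic change.
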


Given any MAB problem with $K$ arms, without loss of generality, let us assume $K = 2^H - 1$ for some $H\in\mathbb{N}^+$. Any such MAB problem can be reduced to the above constructed binary tree MDP with horizon $H$, and $2^{H}-1$ leafs. Each arm in the original MAB will be encoded by a unique sequence of actions $\{a_h\}_{h=1}^{H-1}$ with $a_h\in\{l,r\}$, and its corresponding leaf. We assign each leaf the cost distribution of the corresponding arm. The optimal policy in the MDP, i.e., the sequence of actions leading to the leaf that has the smallest expected cost, is one-to-one corresponding to the best arm, i.e., the arm that has the smallest expected cost in the MAB. Hence, without any further information about the MDP, any RL algorithm that aims to find the near-optimal policy must suffer the lower bound presented in~\pref{lem:best_arm_mab}, as otherwise one can solve the original MAB  by first converting the MAB to the MDP and then running an RL algorithm. Hence, we prove the first part of the proposition. 

For the second part, let us denote the sequence of the observations from the expert policy as $\{\tilde{x}_{h}\}_{h=1}^H$, i.e., the sequence of states corresponding to the optimal sequence of actions where the last state $\tilde{x}_{H}$ has the smallest expected cost. We design an IL algorithm as follows. 

We initialize a sequence of actions $\bm{a} = \emptyset$. At every level $h$, staring at $h = 1$, we try any sequence of actions with prefix $\bm{a}\circ l$ ($\bm a\circ a$ means we append action $a$ to end of the sequence $\bm{a}$), record the observed observation $x^l_{h+1}$; we then reset and try any sequence of actions with prefix $\bm{a}\circ r$, and record the observed observation $x^l_{h+1}$. If $x^l_{h+1} = \tilde{x}_{h+1}$, then we append $l$ to $\bm{a}$, i.e., $\bm{a} = \bm{a}\circ l$, otherwise, we append $r$, i.e., $\bm{a} = \bm{a}\circ r$. We continue the above procedure until $h = H-1$, and we output the final action sequence $\bm{a}$.

Due to the deterministic transition, by induction, it is easy to verify that the outputted sequence of actions $\bm{a}$ is exactly equal to the optimal sequence of actions executed by the expert policy. Note that in each level $h$, we only generate two trajectories from the MDP. Hence the total number trajectories before finding the optimal sequence of actions is at most $2(H-1)$. Hence we prove the proposition.

\section{Reduction to LP}
\label{app:proof_of_claim}

Let us denote a set $\{y_i\}_{i=1}^{2N}$ such that $\{y_1, \dots, y_N\} = \{x_1,\dots,x_N\}$, and $\{y_{N+1}\dots, y_{2N}\} = \{x'_1,\dots, x'_N\}$. Denote $d_{i,j} = \Dcal(y_i, y_j)$ for $i\neq j$, and $c_i = 1/N$ for $i\in [N]$ and $c_i = -1/N$ for $i \in [N+1, 2N]$. We formulate the following LP with $2N$ variables and $O(N^2)$ many constraints:
\begin{align}
\label{eq:LP_formulation}
&\max_{\alpha_1,\dots,\alpha_{2N}} \sum_{i=1}^{2N} c_i\alpha_{i}, \;\;\;\;  s.t.,  \forall i\neq j,  -L d_{i,j} \leq   \alpha_i - \alpha_j \leq L d_{i,j},  \;\;\; \forall i, -1 \leq \alpha_i \leq 1. 
\end{align} Denote the solution of the above LP as $\alpha_{i}^\star$. We will have the following claim:
\begin{claim}[LP Oracle]
Given $\Fcal$ in~\pref{eq:special_F},  $\{x_i\}_{i=1}^N$, and $\{x_i'\}_{i=1}^N$, denote $\{\alpha_{i}^\star\}_{i=1}^{2N}$  as the solution of the LP from~\pref{eq:LP_formulation}, we have:
$\sup_{f\in\Fcal} \left(\sum_{i=1}^N f(x_i)/N - \sum_{i=1}^N f(x_i')/N\right) = \sum_{i=1}^{2N}c_i\alpha_{i}^\star$. 
\label{claim:equl_lp}
\end{claim}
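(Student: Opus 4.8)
The plan is to prove the two scalar quantities are equal by establishing both inequalities separately, exploiting the fact that the supremum defining the empirical IPM depends on $f$ only through the $2N$ values $f(y_1),\dots,f(y_{2N})$. First I would rewrite the IPM objective: under the relabeling $\{y_i\}_{i=1}^{2N}$ and the coefficients $c_i$, we have $\tfrac1N\sum_{i=1}^N f(x_i) - \tfrac1N\sum_{i=1}^N f(x_i') = \sum_{i=1}^{2N} c_i\, f(y_i)$ for every $f$, so the whole problem becomes one about vectors of function values.

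For the direction $\sup_{f\in\Fcal}\bigl(\tfrac1N\sum_i f(x_i) - \tfrac1N\sum_i f(x_i')\bigr) \le \sum_i c_i\alpha_i^\star$, I would take an arbitrary $f\in\Fcal$ (with $\Fcal$ as in \eqref{eq:special_F}) and set $\alpha_i := f(y_i)$. The assumption $\|f\|_L\le L$ gives $|\alpha_i-\alpha_j| = |f(y_i)-f(y_j)| \le L\,d(y_i,y_j) = L\,d_{i,j}$, and $\|f\|_\infty\le 1$ gives $|\alpha_i|\le 1$; hence $(\alpha_i)$ is feasible for the LP \eqref{eq:LP_formulation}, and its LP objective $\sum_i c_i\alpha_i$ equals the IPM objective evaluated at $f$. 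Since $\sum_i c_i\alpha_i^\star$ is the LP maximum, taking the supremum over $f$ yields this inequality.

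The reverse inequality $\sum_i c_i\alpha_i^\star \le \sup_{f\in\Fcal}\bigl(\tfrac1N\sum_i f(x_i) - \tfrac1N\sum_i f(x_i')\bigr)$ requires realizing the LP optimizer by an admissible function, i.e.\ constructing $f\in\Fcal$ with $f(y_i)=\alpha_i^\star$ for all $i$; this is the main obstacle and I would resolve it with a Lipschitz (McShane--Whitney) extension. Define $g(x) := \max_{i\in[2N]}\bigl(\alpha_i^\star - L\,d(x,y_i)\bigr)$. The LP constraints $\alpha_i^\star-\alpha_j^\star \le L\,d_{i,j}$ force $g(y_j)=\alpha_j^\star$ for every $j$ (the $i=j$ term attains the max), and $g$ is $L$-Lipschitz as a max of $L$-Lipschitz functions. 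Since $g$ need not be bounded in $\|\cdot\|_\infty$, I would then clip it: $f(x):=\max\bigl(-1,\min(1,g(x))\bigr)$. Clipping onto $[-1,1]$ is a $1$-Lipschitz map on $\mathbb{R}$, so $f$ stays $L$-Lipschitz, and $\|f\|_\infty\le 1$, whence $f\in\Fcal$; crucially, because the LP constraint $|\alpha_i^\star|\le 1$ holds, clipping leaves the interpolated values intact, so $f(y_i)=\alpha_i^\star$. Evaluating the IPM objective at this $f$ gives exactly $\sum_i c_i\alpha_i^\star$, proving the inequality.

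Combining the two inequalities gives the claimed equality. The one point I would be careful to verify explicitly in the write-up is that clipping simultaneously preserves the Lipschitz bound and does not disturb the interpolation — both work precisely because the boundedness constraint $|\alpha_i^\star|\le 1$ is itself built into the LP, so the design of the constraint set in \eqref{eq:LP_formulation} is exactly what makes the extension argument close.
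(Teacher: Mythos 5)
Your proposal is correct and follows essentially the same route as the paper: both directions hinge on (i) the observation that the values $\{f(y_i)\}$ of any $f\in\Fcal$ form a feasible point of the LP, and (ii) a clipped Lipschitz-extension of the LP optimizer $\{\alpha_i^\star\}$ to a function in $\Fcal$ that interpolates those values. The only cosmetic differences are that you use the McShane (max) form of the extension with constant $L$ while the paper uses the inf form with the empirical constant $L^\star=\max_{i\neq j}|\alpha_i^\star-\alpha_j^\star|/d_{i,j}\leq L$, and you phrase the easy direction as a direct inequality where the paper argues by contradiction.
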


\begin{proof}
[Proof of~\pref{claim:equl_lp}]
Given the solutions $\{\alpha_{i}^{\star}\}_{i=1}^{2N}$, we first are going to construct a function $\hat{f}:\Xcal\to\mathbb{R}$, such that for any $y_i$, we have $\hat{f}(y_i) = \alpha_i^\star$, and $\hat{f}\in\Fcal$. 
Denote $L^\star = \max_{i\neq j} |\alpha_i^\star - \alpha_j^\star|/d_{i,j}$. Note that $L^\star\leq L$.
The function $\hat{f}$ is constructed as:
\begin{align*}
    \hat{f}(x) = \max\left(-1, \min\left( 1, \min_{i\in[2N]}L^\star \Dcal(y_i, x) + \alpha_i^\star \right)\right)
\end{align*} First of all, we show that for any $y_i$, we have $\hat{f}(y_i) = \alpha_i^\star$. For any $j\neq i$, we have:
\begin{align*}
    L^{\star}\Dcal(y_j, y_i) + \alpha_j^\star \geq |\alpha_j^\star - \alpha_i^\star| + \alpha_j^\star \geq \alpha_i^\star,
\end{align*} where the first inequality uses the definition of $L^\star$. Also we know that $-1\leq \alpha_i^\star\leq 1$. Hence we have that for $y_i$, $\max(-1, \min(1, \min_{j\in[2N]} L^\star \Dcal(y_j, y_i) + \alpha_j^\star)) = \alpha_i^\star$.  

Now we need to prove that $\hat{f}$ is $L$-Lipschitz continuous. Note that we just need to prove that $\min_{i} L^\star\Dcal(y_i, x) + \alpha_i^\star$ is $L$-Lipschitz continuous, since for any $L$-Lipschitz continuous function $f(x)$, we have $\max(1, f(x))$ and $\min(-1, f(x))$ to be $L$-Lipschitz continuous as well. 

Consider any two points $x$ and $x'$ such that $x\neq x'$. Denote $\hat{i}$ as $\arg\min_{i} L^\star\Dcal(y_i, x) + \alpha_i^\star$ and $\hat{i}' = \arg\min_{i} L^\star\Dcal(y_i, x') + \alpha_i^\star$.
We have:
\begin{align*}
    &\hat{f}(x) - \hat{f}(x') =  L^\star\Dcal(y_{\hat{i}}, x) + \alpha_{\hat{i}}^\star - ( L^\star\Dcal(y_{\hat{i}'} , x') + \alpha_{\hat{i}'}^\star) \\
    & \leq L^\star\Dcal(y_{\hat{i}'}, x) + \alpha_{\hat{i}'}^\star - ( L^\star\Dcal(y_{\hat{i}'} , x') + \alpha_{\hat{i}'}^\star) \\
    & \leq L^{\star}\Dcal(x, x'),
\end{align*} where for the first inequality we used the definition of $\hat{i}$, and the second inequality uses the triangle inequality. Similarly, one can show that
\begin{align*}
    \hat{f}(x) - \hat{f}(x')\geq -L^\star\Dcal(x, x').
\end{align*} Combine the above two inequalities and the fact that $L^\star \leq L$, we conclude that $\hat{f}$ is $L$-Lipschitiz continuous.

Now we have constructed $\hat{f}$ such that $\hat{f}(y_i) = \alpha_i^\star$ for all $i\in[2N]$ and $\hat{f}\in\Fcal$. Now suppose that there exists a function $f'\in\Fcal$, such that $|\sum_{i=1}^{N}f'(x_i)/N - \sum_{i=1}^N f'(x_i')/N| > |\sum_{i=1}^{N}\hat{f}(x_i)/N - \sum_{i=1}^N \hat{f}'(x_i')/N|$, then we must have for some $i\in [2N]$, $f'(y_i) \neq \alpha_i^\star$. However, since $f'\in\Fcal$, we must have that $\{f'(y_i)\}_{i=1}^{2N}$ satisfies all constrains in the LP in~\pref{eq:LP_formulation}. Hence the assumption that $\sum_{i=1}^{2N} c_i f'(y_i) > \sum_{i=1}^{2N} c_i \alpha_i^\star$ contradicts to the fact that $\{\alpha_i\}_{i=1}^{2N}$ is the maximum solution of the LP formulation in~\pref{eq:LP_formulation}. Hence we prove the claim. 

\end{proof}

\section{Proof of~\pref{corr:metric_MDP}}
\label{app:proof_of_metric_MDP}

Since in this setting, $\Fcal_h$ for all $h\in[H]$ contains infinitely many functions, we need to discretize $\Fcal_h$ before we can apply the proof techniques from the proof of~\pref{thm:ft_main_theorem}. We use covering number. 

Denote $\Ncal(\Xcal, \epsilon, \Dcal)$ as the $\epsilon$-cover of the metric space $(\Xcal, \Dcal)$. Namely, for any $x\in\Xcal$, there exists a $x'\in \Ncal(\Xcal,\epsilon,\Dcal)$ such that $D(x',x) \leq \epsilon$. Consider any function class $\Fcal = \{f: \Xcal\to \RR, \|f\|_L \leq L, \|f\|_{\infty} \leq 1\}$ with $L \in \RR^+ $. Below we construct the $\epsilon$-cover over $\Fcal$.

For any $f\in\Fcal$, denote $\bar{f} \in \RR^{\abr{\Ncal(\Xcal, \epsilon, \Dcal)}}$ with the i-th element $\bar{f}_i$ being the function value $f(x_i)$ measured at the $i$-th element $x_i$ from $\Ncal(\Xcal,\epsilon,\Dcal)$. Hence $\bar{\Fcal} \defeq \{\bar{f}: f\in\Fcal \} \in \RR^{\abr{\Ncal(\Xcal, \epsilon, \Dcal)}}$, and $\|\bar{f}\|_{\infty} \leq C$ for any $\bar{f}\in\bar{\Fcal}$. Denote $\bar{\Ncal}(\bar{\Fcal}, \alpha, \|\cdot\|_{\infty})$ as the $\alpha$-cover  of $\bar{\Fcal}$. Let us denote the set $\Ncal \defeq \{f \in \Fcal: \bar{f}\in\bar{\Ncal}(\tilde{\Fcal}, \alpha, \|\cdot\|_{\infty})\}$.
 
 \begin{claim}
With the above set up,  for $\Fcal$'s $(\alpha+2L\epsilon)$-cover, i.e., $\Ncal(\Fcal, \alpha+2L\epsilon, \|f\|_{\infty})$, we have 
\begin{align*}
\abr{\Ncal(\Fcal, \alpha+2L\epsilon, \|\cdot\|_{\infty})} \leq \abr{\bar{\Ncal}(\bar{\Fcal}, \alpha, \|\cdot\|
_{\infty})} \leq \left(\frac{1}{\alpha}\right)^{\abr{\Ncal(\Xcal,\epsilon, \Dcal)}}.
\end{align*}
 \end{claim}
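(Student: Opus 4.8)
The plan is to prove the two inequalities in the displayed chain independently, handling the substantive one first. The left inequality $\abr{\Ncal(\Fcal, \alpha + 2L\epsilon, \|\cdot\|_{\infty})} \le \abr{\bar\Ncal(\bar\Fcal, \alpha, \|\cdot\|_{\infty})}$ is the heart of the matter: it asserts that the set $\Ncal$ constructed in the statement is itself a valid $(\alpha + 2L\epsilon)$-cover of $\Fcal$ under $\|\cdot\|_{\infty}$ and that its size is controlled by $\abr{\bar\Ncal}$. The right inequality $\abr{\bar\Ncal(\bar\Fcal, \alpha, \|\cdot\|_{\infty})} \le (1/\alpha)^{\abr{\Ncal(\Xcal,\epsilon,\Dcal)}}$ is then a routine grid (volumetric) bound on the covering number of a bounded subset of a finite-dimensional cube.

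For the left inequality, abbreviate $d \defeq \abr{\Ncal(\Xcal, \epsilon, \Dcal)}$ and let $\{x_1, \dots, x_d\}$ enumerate the net. First I would note $\abr{\Ncal} \le \abr{\bar\Ncal}$: the restriction map $f \mapsto \bar f$ sends $\Ncal$ into $\bar\Ncal$ by construction, so keeping one representative $f \in \Fcal$ per vector in $\bar\Ncal$ yields a subset of $\Fcal$ of cardinality at most $\abr{\bar\Ncal}$; since the covering number is the minimum size of any cover, it suffices to exhibit this $\Ncal$ as a cover. To show it covers, fix any $f \in \Fcal$. Since $\bar f \in \bar\Fcal$ and $\bar\Ncal$ is an $\alpha$-cover of $\bar\Fcal$, there is $g \in \Ncal$ with $\max_{i \in [d]} \abr{f(x_i) - g(x_i)} = \|\bar f - \bar g\|_{\infty} \le \alpha$. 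The key estimate is then, for an arbitrary $x \in \Xcal$ with nearest net point $x_i$ (so $\Dcal(x, x_i) \le \epsilon$), the triangle-inequality split
\begin{align*}
\abr{f(x) - g(x)} &\le \abr{f(x) - f(x_i)} + \abr{f(x_i) - g(x_i)} + \abr{g(x_i) - g(x)} \\
&\le L\,\Dcal(x, x_i) + \alpha + L\,\Dcal(x_i, x) \le 2L\epsilon + \alpha,
\end{align*}
where I invoke $\|f\|_{L} \le L$ and $\|g\|_{L} \le L$ (both $f, g \in \Fcal$) for the two outer terms and the cover guarantee for the middle term. Taking the supremum over $x$ gives $\|f - g\|_{\infty} \le \alpha + 2L\epsilon$, so $\Ncal$ is the desired cover and the left inequality follows.

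For the right inequality, observe $\bar\Fcal \subseteq [-1,1]^d$, because $\|f\|_{\infty} \le 1$ forces $\|\bar f\|_{\infty} \le 1$. A product grid with centers spaced $2\alpha$ apart covers each coordinate interval $[-1,1]$ by at most $\lceil 1/\alpha\rceil$ balls of radius $\alpha$, hence covers $[-1,1]^d$, and a fortiori $\bar\Fcal$, in $\ell_\infty$ using at most $\lceil 1/\alpha\rceil^d \le (1/\alpha)^d$ centers; this bounds $\abr{\bar\Ncal}$ and closes the chain.

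The main obstacle is really just the covering step of the left inequality: one must verify that a cover of the finite-dimensional projection $\bar\Fcal$ lifts to a cover of the infinite function class $\Fcal$, which is possible only because Lipschitzness controls the values of $f$ and $g$ between net points --- precisely the source of the $2L\epsilon$ inflation of the radius. Two minor care points remain: choosing a representative $g \in \Fcal$ realizing each element of $\bar\Ncal$ (so that $\|g\|_{L} \le L$ is available in the estimate above, which relies on the convention that $\bar\Ncal \subseteq \bar\Fcal$), and the $\lceil\cdot\rceil$ rounding in the grid count, which I would absorb into the stated $(1/\alpha)^d$ bound consistently with the paper's convention of dropping such constants.
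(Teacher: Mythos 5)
Your proposal is correct and follows essentially the same route as the paper's proof: lift the $\alpha$-cover of the finite-dimensional projection $\bar\Fcal$ to a cover of $\Fcal$ by choosing representatives, control the off-net error via the same three-term triangle-inequality split using $L$-Lipschitzness of both functions (yielding the $\alpha + 2L\epsilon$ radius), and bound $\abr{\bar\Ncal}$ by coordinate-wise discretization of $[-1,1]^{\abr{\Ncal(\Xcal,\epsilon,\Dcal)}}$. Your explicit attention to choosing one representative in $\Fcal$ per cover point and to the internal-cover convention $\bar\Ncal \subseteq \bar\Fcal$ tightens details the paper leaves implicit, but the argument is the same.
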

\begin{proof}
By definition, we know that for any $\bar{f}\in\bar{\Fcal}$, we have that there exists a $\bar{f^\star}\in\bar{\Fcal}$ such that $\|\bar{f} - \bar{f}^\star\|_{\infty} \leq \alpha$.  Now consider $\|f - f^\star\|_{\infty}$. Denote $x^\star = \arg\max_{x} \abr{f(x) - f^\star(x)}$ and ${x^\star}'$ is its closest point in $\Ncal(\Xcal, \epsilon, \Dcal)$. We have:
\begin{align*}
&\sup_{x} \abr{f(x) - f^\star(x)}  = \abr{f(x^\star) - f^\star(x^\star)} \leq \abr{f(x^\star) - f({x^\star}')} + \abr{f({x^\star}') - f^\star(x^\star)} \\
& \leq \abr{f(x^\star) - f({x^\star}')} + \abr{f({x^\star}') - f^\star({x^\star}')} + \abr{f^\star({x^\star}') - f^\star(x^\star)} \\
& \leq L\epsilon + \alpha + L\epsilon = 2L\epsilon + \alpha,
\end{align*} where the last inequality comes from the fact that $f, f^\star$ are $L$-Lipschitz continuous, $\Dcal(x^\star, {x^\star}') \leq \epsilon$, $\|\bar{f} - \bar{f^\star}\|_{\infty} \leq \alpha$ and ${x^\star}' \in \Ncal(\Xcal,\epsilon, \Dcal)$.
Hence, we just identify a subset of $\Fcal$ such that it forms a $\alpha+2L\epsilon$ cover for $\Fcal$ under norm $\|\cdot\|_{\infty}$. 

Note that $\bar{\Ncal}$ is a $\alpha$-cover with $\|\cdot\|_{\infty}$ for $\bar{\Fcal}$ which is a subset of $\abr{\Ncal(\Xcal,\epsilon,\Dcal)}$-dimension space. By standard discretization along each dimension, we prove the claim. 
\end{proof}

From the above claim, setting up $\alpha$ and $\epsilon$ properly, we have:
\begin{align*}
\abr{\Ncal(\Fcal, \epsilon/K, \|\cdot\|_{\infty})} \leq \left(\frac{K}{\epsilon}\right)^{\abr{\Ncal(\Xcal, \epsilon/(3KL), \Dcal)}}.
\end{align*}

Extending the analysis of~\pref{thm:ft_main_theorem} simply results extending the concentration result in~\pref{lem:concentration_pi_f}. Specifically via Bernstein's inequality and a union bound over $\Pi\times \Ncal(\Fcal, \epsilon/K, \|\cdot\|_{\infty})$, we have that for any $\pi\in\Pi$, $\tilde{f}\in \Ncal(\Fcal, \epsilon/K, \|\cdot\|_{\infty})$, with probability at least $1-\delta$,
\begin{align*}
&\left\lvert \left(\frac{1}{N}\sum_{i=1}^N K\pi(a_h^i|x_h^i)\tilde{f}(x_{h+1}^i) - \frac{1}{N}\sum_{i=1}^N \tilde{f}(\tilde{x}^i_{h+1})\right)  - \left({\EE}_{(x,a,x')\sim \nu_h \pi P^{\star}}[\tilde{f}(x')] - {\EE}_{x\sim \mu^{\star}_{h+1}}[\tilde{f}(x)]\right) \right\rvert \nonumber\\
    & \leq 4\sqrt{\frac{2K \abr{\Ncal(\Xcal, \epsilon/(3KL), \Dcal)} \log(2|\Pi|K/\epsilon(\delta))}{N}} + \frac{8K \abr{\Ncal(\Xcal, \epsilon/(3KL), \Dcal)} \log(2|\Pi| K/(\epsilon\delta))}{N}.
\end{align*}
Now using the fact that $ \Ncal(\Fcal, \epsilon/K, \|\cdot\|_{\infty})$ is an $\epsilon$-cover under norm $\|\cdot\|_{\infty}$, we have that for any $\pi\in\Pi$, $f\in\Fcal$, with probability at least $1-\delta$,
\begin{align*}
\left\lvert \left(\frac{1}{N}\sum_{i=1}^N K\pi(a_h^i|x^i_h)f(x^i_{h+1}) - \frac{1}{N}\sum_{i=1}^N f(\tilde{x}_{h+1}^i)\right)  - \left({\EE}_{(x,a,x')\sim \nu_h\pi P^{\star}}[f(x')] - {\EE}_{x\sim \mu^{\star}_{h+1}}[f(x)]\right) \right\rvert \nonumber\\ 
\leq 4\sqrt{\frac{2K \abr{\Ncal(\Xcal, \epsilon/(3KL), \Dcal)} \log(2|\Pi|3C/\epsilon(\delta))}{N}} + \frac{8K \abr{\Ncal(\Xcal, \epsilon/(3KL), \Dcal)} \log(2|\Pi| 3C/(\epsilon\delta))}{N} + 2\epsilon.
\end{align*}

The rest of the proof is the same as the proof of~\pref{thm:ft_main_theorem}. 


\section{\fail{} in Interactive Setting}
\label{app:ifail}

\begin{algorithm}
\begin{algorithmic}[1]
\STATE Set $\bm\pi = \emptyset$
\FOR{$h = 1$ to $H-1$}
    \STATE $\Dcal = \emptyset, \tilde{\Dcal} = \emptyset$
    \FOR{$i = 1$ to $n$}
        \STATE Reset $x^{(i)}_1\sim \rho$ and from $x^{(1)}_i$ execute $\bm\pi=\{\pi_1,\dots,\pi_{h-1}\}$ to generate  $x_h^{(i)}$
        \STATE Execute $a\sim U(\Acal)$ to generate $x_{h+1}^{(i)}$ and add it to $\Dcal$ \label{line:collect_pi_interactive}
        \STATE Reset again $\tilde{x}_1^{(i)}\sim \rho$ and from $\tilde{x}_1^{(i)}$ execute $\bm{\pi} = \{\pi_1,\dots,\pi_{h-1}\}$ to generate $\tilde{x}_{h}^{(i)}$
        \STATE Ask expert to execute its policy at $\tilde{x}_h^{(i)}$ for one step, observe $\tilde{x}_{h+1}^{(i)}$, and add it to $\tilde{\Dcal}$ \label{line:collect_expert_interactive}
    \ENDFOR
    \STATE Set $\pi_h$ to be the return of~\pref{alg:l2m} with inputs $\left(\tilde{\Dcal}, \Dcal, \Pi, \Fcal, T\right)$ \label{line:call_l2m}
    \STATE Append $\pi_h$ to $\bm\pi$
\ENDFOR
\end{algorithmic}
\caption{ \ifail($\Pi$, $\Fcal$, $\epsilon$, n, T)}
\label{alg:main_alg_2}
\end{algorithm}

Recall that with $\{\pi_1, \dots, \pi_{h-1}\}$ being fixed, we denote $\nu_h$ as resulting observation distribution resulting at time step $h$. The interactiveness comes from the ability we can query expert to generate next observation conditioned on states sampled from $\nu_h$---the states that would be visited by learner at time step $h$. Let us define $d(\pi \vert \nu_h, \pi_h^\star)$ as:
\begin{align*}
    d_{\Fcal_{h+1}}(\pi \vert \nu_h, \pi_h^\star) \defeq \max_{f\in\Fcal_{h+1}} 
   \left( \mathbb{E}_{x\sim \nu_h}\mathbb{E}_{a\sim \pi, x'\sim P_{x,a}}[f(x')] - \mathbb{E}_{x\sim \nu_h}\mathbb{E}_{a\sim \pi^\star, x'\sim P_{x,a}}[f(x')]\right).
\end{align*} Note that different from $d_{\Fcal_{h+1}}(\pi\vert \nu_h, \mu^\star_{h+1})$, in $d_{\Fcal_{h+1}}(\pi \vert \nu_h, \pi_h^\star)$, the marginal distributions on $x$ are the same for both $\pi$ and $\pi_h^\star$ and we directly access $\pi_h^\star$ to generate epxert observations at $h+1$ rather than thorough the expert observation distribution $\mu^\star_{h+1}$. In other words, we use IPM to compare the observation distribution at time step $h+1$ after applying  $\pi$ and the observation distribution at time step $h+1$ after applying  $\pi^\star$, \emph{conditioned on the distribution $\nu_h$ generated by the previously learned policies $\{\pi_1,\dots, \pi_{h-1}\}$}. In~\pref{alg:main_alg_2}, at every time step $h$,   to find a policy $\pi_h$ that approximately minimizes $d_{\Fcal_{h+1}}(\pi \vert \nu_h, \pi_h^\star)$, we replace expectations in $d_{\Fcal_{h+1}}(\pi \vert \nu_h, \pi^\star_h)$ by proper samples (\pref{line:collect_pi_interactive} and~\pref{line:collect_expert_interactive}), and then call~\pref{alg:l2m} (\pref{line:call_l2m}). 


\begin{proof}

Recall the definition of $d(\pi \vert \nu_h, \pi_h^\star)$,
\begin{align*}
    d_{\Fcal_{h+1}}(\pi \vert \nu_h, \pi_h^\star) = \max_{f\in\Fcal_{h+1}}\left( 
    \mathbb{E}_{x\sim \nu_h}\mathbb{E}_{a\sim \pi,x'\sim P_{x,a}}[f(x')] - \mathbb{E}_{x\sim \nu_h}\mathbb{E}_{a\sim \pi^\star,x'\sim P_{x,a}}[f(x')]
    \right),
\end{align*} with $\nu_h$ being the distribution over $\Xcal_h$ resulting from executing policies $\{\pi_1,\dots,\pi_{h-1}\}$. 

We will use~\pref{lem:concentration_pi_f},~\pref{lem:no_regret_results}, and~\pref{lem:PDL} below. 

The Performance Difference Lemma (\pref{lem:PDL}) tells us that:
\begin{align}
 J(\bm\pi) - J(\bm\pi^\star) 
    & = \sum_{h=1}^H \mathbb{E}_{x\sim\mu^{\bm\pi}_h}\left[\mathbb{E}_{a\sim \pi_h, x'\sim P_{x,a}}\left[V^{\star}_{h+1}(x')\right] - \mathbb{E}_{a\sim \pi^\star_h, x'\sim P_{x,a}}\left[V^{\star}_{h+1}(x')\right]\right]\nonumber\\
    & \leq \sum_{h=1}^H \left\lvert  \mathbb{E}_{x\sim\mu^{\bm\pi}_h}\left[\mathbb{E}_{a\sim \pi_h, x'\sim P_{x,a}}\left[V^{\star}_{h+1}(x')\right] - \mathbb{E}_{a\sim \pi^\star_h, x'\sim P_{x,a}}\left[V^{\star}_{h+1}(x')\right]\right]   \right\rvert \nonumber\\
    & \leq \sum_{h=1}^H d_{\Fcal_{h+1}}(\pi_h \vert \mu_h^{\bm\pi}, \pi_h^\star),
    \label{eq:perf_ell}
\end{align} where the last inequality comes from the realizable assumption that $V_{h}^\star \in \Fcal_h$.

At every time step $h$, mapping to~\pref{thm:L2M_result} with $\nu_h= \mu^{\bm\pi}_h$, $T = 4K^2/\epsilon^2$, $n = K\log(|\Pi||\Fcal|/\delta)/\epsilon^2$, we have that with probability at least $1-\delta$:
\begin{align*}
    d_{\Fcal_{h+1}}(\pi_h \vert \mu_h^{\bm\pi}, \pi_h^\star) \leq \min_{\pi\in\Pi_h}d_{\Fcal_{h+1}}(\pi \vert \mu_h^{\bm\pi} \pi_h^\star), + \epsilon. 
\end{align*}
Note that $\min_{\pi\in\Pi_h} d_{\Fcal_{h+1}}(\pi \vert \mu_h^{\bm\pi}, \pi_h^\star) \leq d_{\Fcal_{h+1}}(\pi_h^\star \vert \mu_h^{\bm\pi}, \pi_h^\star) = 0$, since $\pi_h^\star\in\Pi_h$ by the realizable assumption. Hence, we have that:
\begin{align*}
    d_{\Fcal_{h+1}}(\pi_h \vert \mu_h^{\bm\pi}, \pi_h^\star) \leq \epsilon.
\end{align*}
Hence, using~\pref{eq:perf_ell}, and a union bound over all time steps $h\in [H]$, we have that with probability at least $1-\delta$,
\begin{align*}
    J(\bm\pi) - J(\bm\pi^\star) \leq H\epsilon, 
\end{align*} with $T = 4K^2/\epsilon^2$, and $N =K\log(H|\Pi||\Fcal|/\delta)/\epsilon^2$. Since in every round $h$, we need to draw $N$ many trajectories, hence, the total number of trajectories we need is at most $HK\log(H|\Pi||\Fcal|/\delta)/\epsilon^2$.

\end{proof}

\section{Relaxation of~\pref{ass:realizable}}
\label{app:relax}

Our theoretical results presented so far rely on the realizable assumption (\pref{ass:realizable}). While equipped with recent advances in powerful non-linear function approximators (e.g., deep neural networks), readability can be ensured,  in this section, we relax the realizable assumption and show that our algorithms' performance only degenerates mildly. We relax~\pref{ass:realizable} as follows:
\begin{assum}
[Approximate Realizability]
We assume $\Pi$ and $\Fcal$ is approximate realizable in a sense that for any $h\in[H]$, we have $\min_{\pi\in\Pi_h} \max_{x,a}\|\pi(a|x) - \pi^\star_h(a|x)\|\leq \epsilon_{\Pi}$ and $\min_{f\in\Fcal_h} \|f - V_{h}^\star\|_{\infty} \leq \epsilon_{\Fcal}$.
\label{ass:relaxed}
\end{assum}
The above assumption does not require $\Fcal$ and $\Pi$ to contain the exact $V_h^\star$ and $\pi^\star_h$, but assumes $\Fcal$ and $\Pi$ are rich enough to contain functions that can  approximate $V_h^\star$ and $\pi^\star_h$ uniformly well (i.e., $\epsilon_{\Fcal}$ and $\epsilon_{\Pi}$ are small). Without any further modification of~\pref{alg:main_alg} and~\pref{alg:main_alg_2} for non-interactive and interactive setting, we have the following corollary. 
\begin{corollary}
 Under~\pref{ass:relaxed}, for $\epsilon \in (0,1)$ and $\delta\in (0,1)$, with $T = \Theta(K/\epsilon^2)$, $n = \Theta(K\log(|\Pi||\Fcal|H/\delta)/\epsilon^2)$, with probability at least $1-\delta$,   (1) for non-interactive setting, \fail{} (\pref{alg:main_alg}) outputs a policy $\bm\pi$ with $J(\bm\pi) - J(\bm\pi^\star) \leq O\left( H^2(\epsilon_{\mathrm{be}} + \epsilon) + H(\epsilon_{\Fcal}+\epsilon_{\Pi})\right)$, and (2) for interactive setting, \ifail{} (\pref{alg:main_alg_2}) outputs a policy $\bm\pi$ with $J(\bm\pi) - J(\bm\pi^\star) \leq O\left(H\epsilon + H\epsilon_{\Fcal} + H\epsilon_{\Pi}\right)$, by using at most $\tilde{O}((HK/\epsilon^2)\log(|\Pi||\Fcal|/\delta))$ many trajectories under both settings.
 \label{corr:robust_extension}
\end{corollary}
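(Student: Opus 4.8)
The plan is to re-run the proofs of~\pref{thm:ft_main_theorem} and~\pref{thm:sample_complexity_interactive} almost verbatim, isolating the two (and only two) places where exact realizability (\pref{ass:realizable}) is invoked and replacing each by its approximate counterpart from~\pref{ass:relaxed}. These two places are: (i) the use of $V_{h+1}^\star\in\Fcal_{h+1}$ to upper bound a difference of value-function expectations by the IPM $d_{\Fcal_{h+1}}$ (as in~\pref{eq:link_max_to_j} for the non-interactive proof and in the first step of~\pref{eq:perf_ell} for the interactive proof); and (ii) the use of $\pi_h^\star\in\Pi_h$ to upper bound $\min_{\pi\in\Pi_h} d_{\Fcal_{h+1}}(\pi\vert\cdot,\cdot)$. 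Crucially,~\pref{thm:L2M_result} itself needs no change: it only certifies that~\pref{alg:l2m} returns a policy that is $\epsilon$-near-optimal \emph{within} $\Pi_h$, and never uses realizability. Hence the choices of $T$ and $n=n'$ and all the concentration (\pref{lem:concentration_pi_f}) and no-regret (\pref{lem:no_regret_results}) arguments carry over unchanged, which is why the sample complexity stays $\tilde O((HK/\epsilon^2)\log(\abr{\Pi}\abr{\Fcal}/\delta))$.

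For (i), I would fix any $h$ and pick $\tilde f\in\Fcal_{h+1}$ with $\|\tilde f - V_{h+1}^\star\|_\infty\leq\epsilon_{\Fcal}$, guaranteed by~\pref{ass:relaxed}. Wherever the original argument wrote $\abr{\EE_{P_1}[V_{h+1}^\star]-\EE_{P_2}[V_{h+1}^\star]}\leq d_{\Fcal_{h+1}}(\cdot)$, a triangle inequality against $\tilde f$ reproduces the same bound at the cost of an additive $2\epsilon_{\Fcal}$. The inherent-Bellman-error step needs the same patch: since $\Gamma_h$ is a conditional-expectation operator it is non-expansive in $\|\cdot\|_\infty$, so $\|\Gamma_h V_{h+1}^\star-\Gamma_h\tilde f\|_\infty\leq\epsilon_{\Fcal}$, and projecting $\Gamma_h V_{h+1}^\star$ onto $\Fcal_h$ now costs at most $\epsilon_{\mathrm{be}}+\epsilon_{\Fcal}$ (resp.\ $\epsilon'_{\mathrm{be}}+\epsilon_{\Fcal}$). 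Each such substitution contributes $O(\epsilon_{\Fcal})$ per time step, i.e.\ $O(H\epsilon_{\Fcal})$ after the Performance Difference Lemma sum.

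For (ii), I would pick $\tilde\pi\in\Pi_h$ with $\max_{x,a}\abr{\tilde\pi(a\vert x)-\pi_h^\star(a\vert x)}\leq\epsilon_{\Pi}$. Then $\min_{\pi\in\Pi_h} d_{\Fcal_{h+1}}(\pi\vert\cdot)\leq d_{\Fcal_{h+1}}(\tilde\pi\vert\cdot)\leq d_{\Fcal_{h+1}}(\pi_h^\star\vert\cdot)+K\epsilon_{\Pi}$, because for any discriminator with $\|f\|_\infty\leq1$ swapping $\pi_h^\star$ for $\tilde\pi$ perturbs the expectation by at most $\EE_x\|\tilde\pi(\cdot\vert x)-\pi_h^\star(\cdot\vert x)\|_1\leq K\epsilon_{\Pi}$ (the factor $K$ coming from summing the per-action error over $\Acal$; it may be folded into the $O(\cdot)$ constants or the explicit action-count dependence). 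In the interactive algorithm (\pref{alg:main_alg_2}) the two marginals over $x$ already coincide, so $d_{\Fcal_{h+1}}(\pi_h^\star\vert\cdot)=0$ and this error enters the per-step bound exactly once, summing to $O(HK\epsilon_{\Pi})$; combined with (i) this gives $J(\bm\pi)-J(\bm\pi^\star)\leq O(H\epsilon+H\epsilon_{\Fcal}+HK\epsilon_{\Pi})$, the interactive claim. In the non-interactive algorithm (\pref{alg:main_alg}) the same $K\epsilon_{\Pi}$ term must instead be inserted into the distribution-mismatch recursion, which becomes $\Delta_{h+1}\leq\Delta_h+2\epsilon'_{\mathrm{be}}+O(\epsilon)+O(K\epsilon_{\Pi})$ with $\Delta_0=0$; solving it and feeding the result through~\pref{eq:link_max_to_j} and the PDL sum reproduces the $H^2(\epsilon_{\mathrm{be}}+\epsilon)$ term together with the residual $\epsilon_{\Fcal},\epsilon_{\Pi}$ contributions.

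The main obstacle is precisely this last bookkeeping in the non-interactive case: unlike $\epsilon_{\Fcal}$, which enters only once per step in the final PDL sum, the policy-approximation error $\epsilon_{\Pi}$ enters the compounding recursion for $\Delta_h$, so I would need to check carefully how its accumulation behaves relative to the geometric-sum argument that already produces $H^2\epsilon'_{\mathrm{be}}$, and to verify it lands at the stated order rather than picking up an extra factor of $H$. Everything else is a mechanical re-run of the unchanged concentration and no-regret machinery, so once the two approximation-error insertions are tracked consistently across all $H$ steps the corollary follows.
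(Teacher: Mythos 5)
Your proposal follows the paper's own route almost exactly: the paper's proof likewise replaces $V_{h+1}^\star$ by its best $\|\cdot\|_\infty$-approximation $g_{h+1}\in\Fcal_{h+1}$ (costing $O(\epsilon_{\Fcal})$ per term of the performance-difference sum), leaves the entire concentration/no-regret machinery behind \pref{thm:L2M_result} untouched so the sample complexity is unchanged, and in the interactive case uses the in-class approximation of $\pi_h^\star$ to bound $\min_{\pi\in\Pi_h}d_{\Fcal_{h+1}}(\pi\vert\cdot)$, exactly as you do. Your treatment of the interactive bullet is complete and matches the paper's.

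The obstacle you flag in the non-interactive case is real, and you should know the paper does not resolve it either: its proof of bullet (1) simply ends with ``now repeat the same recursive analysis for $d_{\Fcal_{h+1}}(\pi_h \vert \mu_h^{\bm\pi}, \mu_h^\star)$,'' and if one actually repeats that recursion, the in-class surrogate $\tilde\pi_h$ for $\pi_h^\star$ must be inserted \emph{inside} it, giving
\begin{align*}
\Delta_{h+1}\;\leq\;\Delta_h+2\epsilon'_{\mathrm{be}}+O(K\epsilon_{\Pi})+O(\epsilon),
\qquad\text{hence}\qquad
\Delta_h=O\bigl(h(\epsilon'_{\mathrm{be}}+K\epsilon_{\Pi}+\epsilon)\bigr),
\end{align*}
and after the PDL sum the bound is $O\bigl(H^2(\epsilon_{\mathrm{be}}+\epsilon+K\epsilon_{\Pi})+H\epsilon_{\Fcal}\bigr)$. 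In other words, $\epsilon_{\Pi}$ compounds exactly like the inherent Bellman error, and the $H\epsilon_{\Pi}$ term claimed in bullet (1) is not what this argument yields; your suspicion of an extra factor of $H$ (and of $K$, which the paper silently absorbs in both bullets---its \pref{ass:relaxed} is stated entrywise, so the $\ell_1$ policy swap costs $K\epsilon_{\Pi}$) is correct. So treat your derivation as the faithful one: what the paper's own strategy establishes is the corollary with $H^2$ multiplying the $\epsilon_{\Pi}$ term in the non-interactive bound, and your write-up, carried through as you describe, proves precisely that.
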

 
The proof is deferred to~\pref{app:relaxation_proof}. 

\subsection{Proof of~\pref{corr:robust_extension}}
\label{app:relaxation_proof}
\begin{proof}[Proof of Corollary~\ref{corr:robust_extension}]

For any $h$, denote $g_h$ as
\begin{align*}
    g_h = \arg\min_{g\in \Fcal} \|g - V_{h}^\star\|_{\infty}
\end{align*}

Below we prove the first bullet in~\pref{corr:robust_extension}, i.e., the results for non-interactive setting.

\paragraph{Non-Interactive Setting}  
Using PDL (\pref{lem:PDL}), we have
\begin{align*}
    &J(\bm\pi) - J(\bm\pi^\star) \\ & \leq \sum_{h=1}^H \left\lvert\mathbb{E}_{x\sim\mu^{\bm\pi}_h}\left[\mathbb{E}_{a\sim \pi_h, x'\sim P_{x,a}}\left[V^{\star}_{h+1}(x')\right] - \mathbb{E}_{a\sim \pi^\star_h, x'\sim P_{x,a}}\left[V^{\star}_{h+1}(x')\right]\right]\right\rvert \\
    & \leq \sum_{h=1}^H \left\lvert \mathbb{E}_{x\sim\mu^{\bm\pi}_h}\left[\mathbb{E}_{a\sim \pi_h, x'\sim P_{x,a}}\left[V^{\star}_{h+1}(x')\right]\right]  - \mathbb{E}_{x\sim \mu_h^{\star}} \mathbb{E}_{a\sim \pi^\star_h, x'\sim P_{x,a}}\left[V^{\star}_{h+1}(x')\right]  \right\rvert \\
    &\;\;\;\; + \left\lvert
    \mathbb{E}_{x\sim \mu_h^{\star}} \mathbb{E}_{a\sim \pi^\star_h, x'\sim P_{x,a}}\left[V^{\star}_{h+1}(x')\right]
    - \mathbb{E}_{x\sim \mu_h^{\bm\pi}} \mathbb{E}_{a\sim \pi^\star_h, x'\sim P_{x,a}}\left[V^{\star}_{h+1}(x')\right] 
    \right\rvert \\
    & \leq \sum_{h=1}^H \left\lvert \mathbb{E}_{x\sim\mu^{\bm\pi}_h}\left[\mathbb{E}_{a\sim \pi_h, x'\sim P_{x,a}}\left[g_{h+1}(x')\right]\right]  - \mathbb{E}_{x\sim \mu_h^{\star}} \mathbb{E}_{a\sim \pi^\star_h, x'\sim P_{x,a}}\left[g_{h+1}(x')\right]  \right\rvert \\
    & \;\;\;\; + \left\lvert
    \mathbb{E}_{x\sim \mu_h^{\star}} \mathbb{E}_{a\sim \pi^\star_h, x'\sim P_{x,a}}\left[g_{h+1}(x')\right]
    - \mathbb{E}_{x\sim \mu_h^{\bm\pi}} \mathbb{E}_{a\sim \pi^\star_h, x'\sim P_{x,a}}\left[g_{h+1}(x')\right] 
    \right\rvert  + 4\epsilon_{\Fcal} \\
    & \leq \sum_{h=1}^H \left( d_{\Fcal_{h+1}}(\pi_h \vert \mu_h^{\bm\pi}, \mu_h^\star)+ \Delta_h + 4\epsilon_{\Fcal}\right).
\end{align*}
Now repeat the same recursive analysis for $d_{\Fcal_{h+1}}(\pi_h \vert \mu_h^{\bm\pi}, \mu_h^\star)$ as we did in proof of~\pref{thm:ft_main_theorem} in~\pref{app:proof_main_theorem}, we can prove the first bullet in the corollary.

Now we prove the second bullet in~\pref{corr:robust_extension}, i.e., the results for interactive setting.

\paragraph{Interactive Setting}
Again, using Performance Difference Lemma (\pref{lem:PDL}), we have
\begin{align*}
 J(\bm\pi) - J(\bm\pi^\star) 
    & = \sum_{h=1}^H \mathbb{E}_{x\sim\mu^{\bm\pi}_h}\left[\mathbb{E}_{a\sim \pi_h, x'\sim P_{x,a}}\left[V^{\star}_{h+1}(x')\right] - \mathbb{E}_{a\sim \pi^\star_h, x'\sim P_{x,a}}\left[V^{\star}_{h+1}(x')\right]\right]\nonumber\\
    & \leq \sum_{h=1}^H \left\lvert  \mathbb{E}_{x\sim\mu^{\bm\pi}_h}\left[\mathbb{E}_{a\sim \pi_h, x'\sim P_{x,a}}\left[V^{\star}_{h+1}(x')\right] - \mathbb{E}_{a\sim \pi^\star_h, x'\sim P_{x,a}}\left[V^{\star}_{h+1}(x')\right]\right]   \right\rvert \nonumber\\
    & \leq \sum_{h=1}^H \left\lvert  \mathbb{E}_{x\sim\mu^{\bm\pi}_h}\left[\mathbb{E}_{a\sim \pi_h, x'\sim P_{x,a}}\left[g_{h+1}(x')\right] - \mathbb{E}_{a\sim \pi^\star_h, x'\sim P_{x,a}}\left[g_{h+1}(x')\right]\right]   \right\rvert \\
    &+ \left\lvert \mathbb{E}_{x\sim\mu^{\bm\pi}_h}\mathbb{E}_{a\sim \pi_h, x'\sim P_{x,a}}[V_{h+1}^\star(x') - g_{h+1}(x')]  \right\vert + \left\lvert \mathbb{E}_{x\sim\mu^{\bm\pi}_h}\mathbb{E}_{a\sim \pi_h^\star, x'\sim P_{x,a}}[V_{h+1}^\star(x') - g_{h+1}(x')]  \right\vert \\
    & \leq \sum_{h=1}^H \max_{f\in\Fcal_{h+1}} \left\lvert  \mathbb{E}_{x\sim\mu^{\bm\pi}_h}\left[\mathbb{E}_{a\sim \pi_h, x'\sim P_{x,a}}\left[f(x')\right] - \mathbb{E}_{a\sim \pi^\star_h, x'\sim P_{x,a}}\left[f(x')\right]\right]   \right\rvert + 2\epsilon_{\Fcal}\\
    & = \sum_{h=1}^H\left( d_{\Fcal_{h+1}}(\pi_h \vert \mu_h^{\bm\pi}, \pi_h^\star) + 2\epsilon_{\Fcal}\right)
\end{align*}
Now  repeat the same steps from the proof of~\pref{thm:sample_complexity_interactive} after~\pref{eq:perf_ell} in proof of~\pref{thm:sample_complexity_interactive}, we can prove the second bullet in the corollary.

\end{proof}

\section{Missing Details on \ilo {} with State Abstraction}
\label{app:abstraction}

We consider the bisimulation model from~\pref{eq:bisimulation}. The following proposition summarizes the conclusion in this section. 
\begin{proposition}
Assume Bisimulation holds (Eq.~\ref{eq:bisimulation}) and set $\Fcal_{h} = \left\{f: \|f\|_{\infty}\leq 1, f(x) = f(x'), \forall x,x' \text{ s.t. } \phi(x) = \phi(x')\right\}, \forall h\in [H]$ to be piece-wise constant functions over the partitions induced from $\phi$. We have:
\begin{enumerate} 
    \item $V_h^\star$ is a piece-wise constant function for all $h\in [H]$, 
    \item $\epsilon_{\mathrm{be}} = 0$, 
    \item $\sup_{f\in\Fcal_h}(\sum_{i=1}^N f(x_i)/N - \sum_{i=1}^N f(x_i')/N)$ can be solved by LP, for all $h\in [H]$,  
    \item given any $\{x_i\}_{i=1}^N$, the Rademacher complexity of $\Fcal_h$ is in the order of $O(\sqrt{|\Scal|/N})$, i.e.,  $(1/N)\EE_{\sigma} [\sup_{f\in\Fcal_h}\sum_{i=1}^N \sigma_i f(x_i)] = O(\sqrt{|\Scal|}/N)$, with $\sigma_i$ being a Rademacher number. 
\end{enumerate}
\end{proposition}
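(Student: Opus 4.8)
The plan is to derive all four claims from a single structural observation: bisimulation lets me identify any $f\in\Fcal_h$ with a function $\bar f:\Scal\to[-1,1]$ through $f(x)=\bar f(\phi(x))$, and it guarantees that the expectation of any such lifted function under the expert's one-step dynamics depends on $x$ only through $\phi(x)$. Concretely, if $g(y)=\bar g(\phi(y))$ then
\[
\EE_{x'\sim P_{x,a}}[g(x')] = \sum_{s\in\Scal}\bar g(s)\sum_{x''\in\phi^{-1}(s)}P(x''|x,a),
\]
and the third bisimulation condition makes the inner aggregated transition probability identical for any $x,x'$ with $\phi(x)=\phi(x')$.

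For claim 1 I would run backward induction on $h$. The base case $V_H^\star=c$ is piecewise constant because $c(x)=c(x')$ whenever $\phi(x)=\phi(x')$. For the step, I expand $V_h^\star(x)=c(x)+\EE_{a\sim\pi_h^\star(\cdot|x),x'\sim P_{x,a}}[V_{h+1}^\star(x')]$: the cost term is cell-constant by bisimulation, the policy $\pi_h^\star(\cdot|x)$ is cell-constant by bisimulation, and the inductive hypothesis lets me apply the displayed lifting identity to $V_{h+1}^\star$, so the whole expression depends only on $\phi(x)$. Claim 2 reuses this verbatim: for any $g\in\Fcal_{h+1}$ the same computation shows $\Gamma_h g$ is cell-constant, and since $\Gamma_h$ is an averaging operator $\|\Gamma_h g\|_\infty\le\|g\|_\infty\le1$, so $\Gamma_h g\in\Fcal_h$; choosing $f=\Gamma_h g$ drives the inner minimization in the definition of $\epsilon_{\mathrm{be}}$ to zero.

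For claim 3 I would reparametrize $f$ by its per-cell values $v_s\defeq f(\phi^{-1}(s))$, turning $\|f\|_\infty\le1$ into the box $-1\le v_s\le1$ and the empirical IPM into the linear functional $\sum_{s}\alpha_s v_s$ with $\alpha_s\defeq\frac1N(|\{i:\phi(x_i)=s\}|-|\{i:\phi(x_i')=s\}|)$, an LP in $|\Scal|$ variables. For claim 4 I would compute the empirical Rademacher complexity directly: grouping the sum by cell gives $\sup_{f\in\Fcal_h}\sum_{i}\sigma_i f(x_i)=\sum_{s}|\sum_{i:\phi(x_i)=s}\sigma_i|$, since each $v_s$ may be chosen freely in $[-1,1]$; Jensen's inequality then bounds each term by $\sqrt{n_s}$ (with $n_s$ the cell count), after which Cauchy--Schwarz gives $\sum_s\sqrt{n_s}\le\sqrt{|\Scal|}\sqrt{\sum_s n_s}=\sqrt{|\Scal|N}$, so the complexity is at most $\sqrt{|\Scal|/N}$.

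The proof is essentially mechanical once the lifting identity is in hand, so there is no serious obstacle; the only quantitative care is in claim 4, where I would make sure the Jensen-then-Cauchy--Schwarz chain produces the advertised $O(\sqrt{|\Scal|/N})$ rate (the $O(\sqrt{|\Scal|}/N)$ written alongside it in the statement appears to be a typo, since $\sqrt{|\Scal|N}/N=\sqrt{|\Scal|/N}$).
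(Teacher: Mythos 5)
Your proposal is correct and follows essentially the same route as the paper's proof: backward induction with the cell-lifting identity for claims 1--2, per-cell reparametrization into a box-constrained LP for claim 3, and the group-by-cell argument with Jensen for claim 4. The one (welcome) refinement is the final step of claim 4, where your Cauchy--Schwarz bound $\sum_s \sqrt{n_s} \le \sqrt{|\Scal| N}$ rigorously justifies an inequality the paper asserts via the pointwise claim $\sqrt{c_s}\le\sqrt{N/|\Scal|}$ (false for individual cells, though the conclusion stands by concavity), and your reading of $O(\sqrt{|\Scal|}/N)$ as a typo for $O(\sqrt{|\Scal|/N})$ matches what the paper's proof actually establishes.
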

The above proposition states that by leveraging the abstraction, we can design discriminators to be piece-wise constant functions over the partitions induced by $\phi$, such that inherent Bellman error is zero, and the discriminator class has bounded statistical complexity.
Below we prove the above proposition. The first two points in the above proposition were studied in \cite{chen2019information}. For completeness, we simply prove all four points below. 

\paragraph{Piece-wise constant $V^\star$} First, we show that $V_h^\star(x)$ is piece-wise constant over the partitions induced from $\phi$. Starting from $H$, via~\pref{eq:bisimulation}, we know that $V_H^\star(x) = c(x)$, which is piece-wise constant over the partitions induced from $\phi$. Then let us assume that for $h+1$, we have $V_{h+1}^\star(x) = V_{h+1}^\star(x')$ for any $x,x'$ s.t. $\phi(x) = \phi(x')$. At time step $h$, via Bellman equation, we know:
\begin{align*}
    V_h^\star(x) =\EE_{a\sim \pi^\star(\cdot|x)}\EE_{x'\sim P_{x,a}} V_{h+1}^\star(x').
\end{align*} Hence for any two $x,x'$ with $\phi(x) = \phi(x')$, we have:
\begin{align*}
    V_h^\star(x) - V_h^\star(x')& =  \EE_{a\sim \pi^\star(\cdot|x),x''\sim P_{x,a}}V_{h+1}^\star(x'') - \EE_{a\sim \pi^\star(\cdot|x'),x''\sim P_{x',a}}V_{h+1}^\star(x'') \\
    & = \sum_{a} \pi^\star(a|x) \left( \sum_{s\in \Scal}\sum_{x''\in\phi^{-1}(s)}\left(P(x''|x,a) - P(x''|x',a)\right) V_{h+1}^\star(x'') \right)\\
    & = \sum_{a} \pi^\star(a|x) \left( \sum_{s\in \Scal} V_{h+1}^\star(s)\sum_{x''\in\phi^{-1}(s)}\left(P(x''|x,a) - P(x''|x',a)\right) \right) = 0, 
\end{align*} where the second and the last equality use~\pref{eq:bisimulation}. In the third equality above, we abuse the notation $V_{h+1}^\star(s)$ for $s\in\Scal$ to denote the value of $V_{h+1}^\star(x)$ for any $x$ such that $\phi(x) = s$.

\paragraph{Inherent Bellman Error} With $\Fcal_{h} = \left\{f: \|f\|_{\infty}\leq 1, f(x) = f(x'), \forall x,x' \text{ s.t. } \phi(x) = \phi(x')\right\}, \forall h\in [H]$, we can show $\epsilon_{\mathrm{be}} = 0$ as follows. For any $x,x'\in\Xcal$ with $\phi(x) = \phi(x'), f\in\Fcal_{h+1}$, we have:
\begin{align*}
    &\EE_{a\sim \pi^\star(\cdot|x)}\EE_{x''\sim P_{x,a}}f(x'') - \EE_{a\sim \pi^\star(\cdot|x')}\EE_{x''\sim P_{x',a}}f(x'')\\
    & = \sum_{a} \pi^\star(a|x) \left( \sum_{s\in\Scal} f(s)\sum_{x''\in\phi^{-1}(s)} (P(x''|x,a) - P(x''|x',a)) \right) = 0,
\end{align*} where again we abuse the notation $f(s)$ to denote that value $f(x)$ for any $x$ such that $\phi(x) = s$. Namely, $\Gamma_h f$ is also a piece-wise constant over the partitions induced from $\phi$. Since $\|f\|_{\infty}\leq 1$, it is also easy to see that $\|\Gamma_h f\|_{\infty}\leq 1$. Hence we have $\Gamma_h f \in \Fcal_h$.

\paragraph{Reduction to LP} Regarding evaluating $\sup_{f\in\Fcal_h} \left(\sum_{i=1}^N f(x_i)/N - \sum_{i=1}^N f(x_i')/N\right)$, we can again reduce it an LP. Denote $\alpha\in [-1,1]^{|\Scal|}$, where the i-th entry in $\alpha$ corresponds to the i-th element in $\Scal$. We denote $\alpha_s$ as the entry in $\alpha$ that corresponds to the state $s$ in $\Scal$.  Take $\{x_i\}_{i=1}^N$, and compute $c_s = \sum_{i=1}^N \one(\phi(x_i) = s) $ for every $s\in \Scal$ (i.e., $c_s$ is the number of points mapped to $s$). Take $\{x_i'\}_{i=1}^N$ and compute $c'_s = \sum_{i=1}^N \one(\phi(x'_i) = s)$. We solve the following LP:
\begin{align*}
    & \max_{\alpha\in\RR^{|\Scal|}} \sum_{s\in \Scal} \left( c_s\alpha_s/N - c'_s\alpha_s/N \right), \\
    & s.t.,  \alpha_s \in [-1,1], \forall s\in\Scal. 
\end{align*} Denote the solution of the above LP as $\alpha^\star$. Then $f^\star(x) = \alpha^\star_{\phi(x)}$. 

\paragraph{Complexity of Discriminators $\Fcal_h$} Regarding the complexity of $\Fcal_h$, note that $\Fcal_h$ essentially corresponds to a $\abr{\Scal}$-dim box: $[-1,1]^{|\Scal|}$. Again, consider a dataset $\{x_i\}_{i=1}^N$ and the counts $\{c_s\}_{s\in\Scal}$. For any $f$, and Rademacher numbers $\sigma\in \{-1,1\}^{N}$, we have \begin{align*}
\sum_{i=1}^N \sigma_i f(x_i) = \sum_{s\in\Scal} f_s \sum_{i\in \phi^{-1}(s)} \sigma_i \leq \sum_{s\in \Scal} \abr{\sum_{i\in\phi^{-1}(s)}\sigma_i}. \end{align*} Note that $(\EE_{\sigma} \abr{\sum_{i=1}^N \sigma_i})^2 \leq \EE_{\sigma} (\sum_{i=1}^N \sigma_i)^2  = N$, which implies that $\EE_{\sigma}|\sum_{i=1}^N \sigma_i| \leq \sqrt{N}$. Hence, \begin{align*}
\EE_{\sigma} \sum_{i=1}^N \sigma_i f(x_i) \leq \sum_{s\in\Scal} \EE_{\sigma}| \sum_{i\in\phi^{-1}(s)}\sigma_i| \leq \sum_{s\in\Scal} \sqrt{c_s} \leq \sum_{s\in\Scal}\sqrt{N/|\Scal|} = \sqrt{N|\Scal|}.
\end{align*} Now, we can show that the Rademacher complexity of $\Fcal_h$ is bounded as follows:
\begin{align*}
 \frac{1}{N}\EE_{\sigma}\left[\sup_{f\in\Fcal_h}\sum_{i=1}^N \sigma_i f(x_i)\right] \leq \sqrt{N|\Scal|}/N = \sqrt{\frac{|\Scal|}{N}}.
\end{align*}

\section{Additional Experiments}
\label{app:additional}

When we design the utility in~\pref{eq:utility}, we sample actions from $U(\Acal)$ and then perform importance weighting. This ensures that in analysis the variance will be bounded by $K$. In practice, we can use any reference policy to generate actions, and then perform importance weighting accordingly. Assume that we have a dataset $\Dcal = \{x_h^i, a_h^i, p_h^i, x_{h+1}^i \}_{i=1}^N$ and the expert's dataset $\Dcal^\star = \{\tilde{x}_{h+1}^i\}_{i=1}^{N'}$, where $p_h^i$ is the probability of action $a_h^i$ being chosen at $x_h^i$. We can form the utility as follows:
\begin{align}
\label{eq:new_utility}
    u(\pi,f) \defeq\sum_{i=1}^N (\pi(a_h^i|x_h^i)/p_h^i)f(x_{h+1}^i)/N -\sum_{i=1}^{N'}f(\tilde{x}_{h+1}^{i})/N'.
\end{align} As long as the probability of choosing any action at any state is lower bounded, then the variance of the above estimator is  upper bounded. This formulation also immediately extends \fail{} to continuous action space setting. For a parameterized policy $\pi_{\theta}$, given any $f$, we can compute $\nabla_{\theta}u(\pi_{\theta}, f)$ easily. If $a_h\sim \pi_{\theta}(\cdot|x)$ (i.e., on-policy samples), then for any fixed $f$, the policy gradient $\nabla_{\theta} u(\pi_{\theta}, f)$ can be estimated using the REINFORCE trick: 
\begin{align}
\label{eq:pg}
    \nabla_{\theta} u(\pi_{\theta},f)|_{\theta=\theta_0} = (1/N)\sum_{i=1}^N \nabla_{\theta}(\ln \pi_{\theta}(a_h^i|x_h^i)|_{\theta=\theta_0}) f(x_{h+1}^i).
\end{align} 

With the form of $\nabla_{\theta}u(\pi_{\theta},f)$, we can perform the min-max optimization in Alg.~\ref{alg:l2m} by iteratively finding the maximizer $f^n = \arg\max_{f}u(\pi_{\theta_n}, f)$ using LP oracle, and then perform gradient descent update $\theta^{n+1} = \theta^n - \eta^n \nabla_{\theta}u(\pi_{\theta^n}, f^n)$. See~\pref{alg:l2m_2} below.
\begin{algorithm}[t]
\begin{algorithmic}[1]
\FOR{$n = 0$ to $T$}
    \STATE 
    $f^n = \arg\max_{f\in\Fcal} u(\pi_{\theta^n}, f)$ (LP Oracle) 
    \STATE 
    $u^n = u(\pi_{\theta^n}, f^n)$
    \STATE 
    $\theta^{n+1} = \theta^n - \nabla_{\theta}u(\pi_{\theta^n},f^n) $
    (Policy Gradient)  
\ENDFOR
\STATE \textbf{Output}:  $\pi^{n^\star}$ with $n^\star = \arg\min_{n\in [T]} u^n$  \label{line:output_l2m}
\end{algorithmic}
\caption{Min-Max Game ($\Dcal^{\star}, \Dcal, \Pi, \Fcal, T, \theta_0$)}
\label{alg:l2m_2}
\end{algorithm} Note that in~\pref{alg:l2m_2} the dataset $\Dcal = \{x_h^i, a_h^i, p_h^i, x_{h+1}^i\}$ contains $p_h^i$ which is the probability of $a_h^i$ being chosen at $x_h^i$.  We can integrate~\pref{alg:l2m_2} into the forward training framework.

\begin{algorithm}
\begin{algorithmic}[1]
\STATE Set $\bm\pi = \emptyset$
\FOR{$h = 1$ to $H-1$}
    \STATE Initialize $\pi_h$
    \STATE Extract expert's data at $h+1$: $\tilde{\Dcal}_{h+1} = \{\tilde{x}_{h+1}^i\}_{i=1}^{n'}$ 
    \STATE $\Dcal_1 = \emptyset, \dots \Dcal_h = \emptyset$
    \FOR{$i = 1$ to $n$}
        \STATE Execute $ \{\pi_1,\dots,\pi_{h-1}\}$ to generate $\tau^i = \{x_1^i, a_1^i, p_1^i, x_2^i, \dots, x_{h-1}^i, a_{h-1}^i,p_{h-1}^i, x_h^i\}$ with $p_{t}^i = \pi_t(a_t^i|x_t^i)$
        \STATE For any $t\in [h-1]$, add $(x_t^i, a_t^i, p_t^i, x_{t+1}^i)$ to $\Dcal_t$
        \STATE Execute $a_h^i\sim U(\Acal)$ to generate $x_{h+1}^{i}$ and add $(x_h^i, a_h^i,p_h^i, x_{h+1}^i)$ to $\Dcal_{h}$ with $p_h^i$ being the probability corresponding to the uniform distribution over $\Acal$
    \ENDFOR
    \STATE For all $t\in [h]$, update $\pi_t$ to be the return of~\pref{alg:l2m_2} with inputs $\left(\tilde{D}_{t+1}, \Dcal_t, \Pi_h, \Fcal_{h+1}, T, \pi_t \right)$
\ENDFOR
\end{algorithmic}
\caption{{\fail$^*$} ($\{\Pi_h\}_h$, $\{\Fcal_h\}_h$, $\epsilon, n,n', T$)}
\label{alg:main_alg_2}
\end{algorithm}

In~\pref{alg:main_alg}, at every time step $h$, we execute the current sequence of policies $\bm{\pi} = \{\pi_1, \dots, \pi_{h-1}\}$ to collect samples at time step $h$, i.e., $x_h$. We then throw away all generated samples $\{x_1, \dots, x_{h-1}\}$ except $x_h$. While this simplifies the analysis, in practice, we could leverage these samples $\{x_1, \dots, x_{h-1}\}$ as well, especially now we can form the utility with on-policy samples and compute the corresponding policy gradient (\pref{eq:pg}). This leads us to Alg.~\ref{alg:main_alg_2}. Namely, in~\pref{alg:main_alg_2}, when training $\pi_h$, we also incrementally update $\pi_1, \dots, \pi_{h-1}$ using their on-policy samples (Line 11~\pref{alg:main_alg_2}).

\begin{figure}[h!]
\centering
\begin{subfigure}{.235\textwidth}
  \centering
  \includegraphics[width=1.0\linewidth]{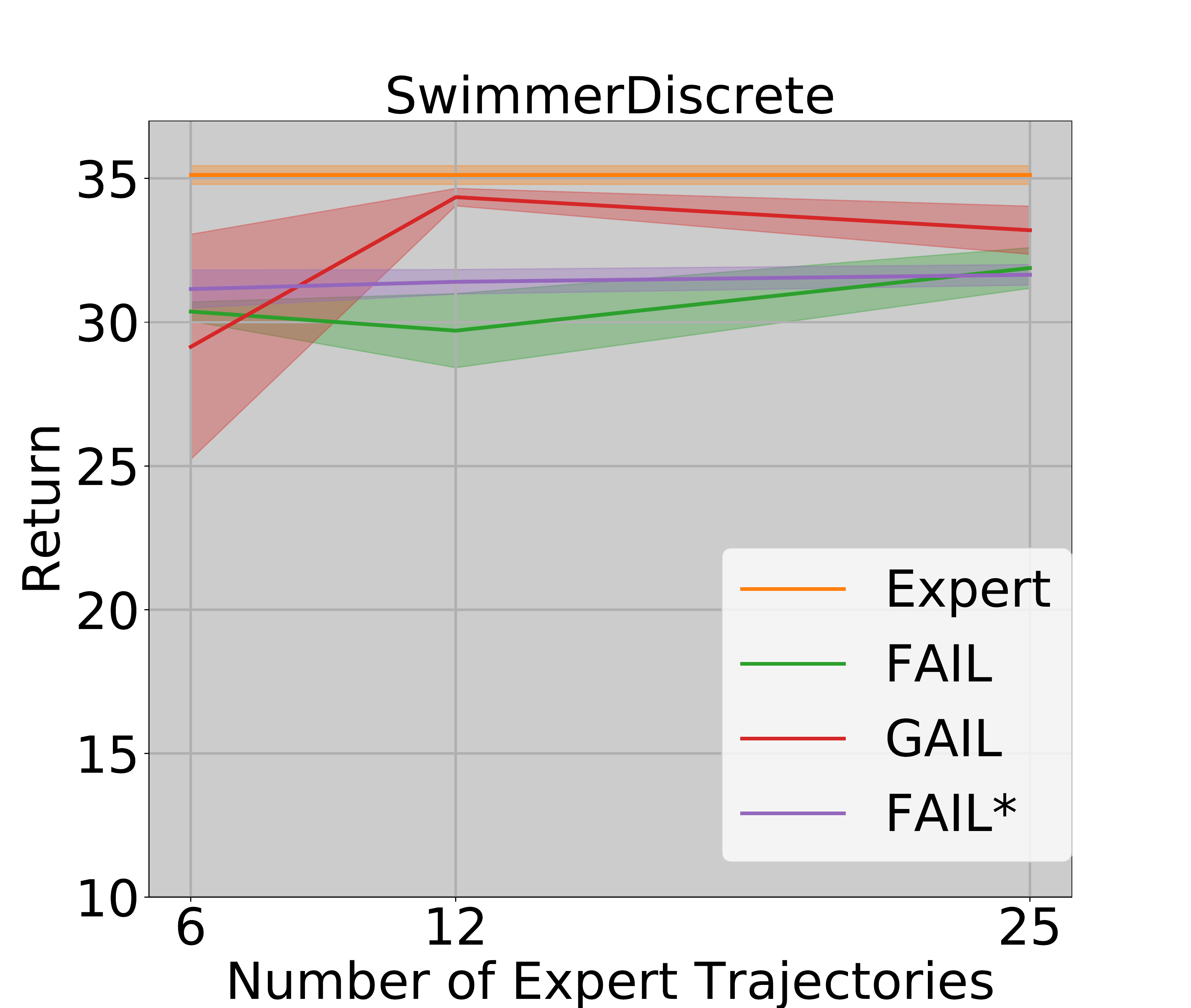}
  \caption{Swimmer (1m)}
  \label{fig:swimmer_new}
\end{subfigure}%
\begin{subfigure}{.235\textwidth}
  \centering
  \includegraphics[width=1.0\linewidth]{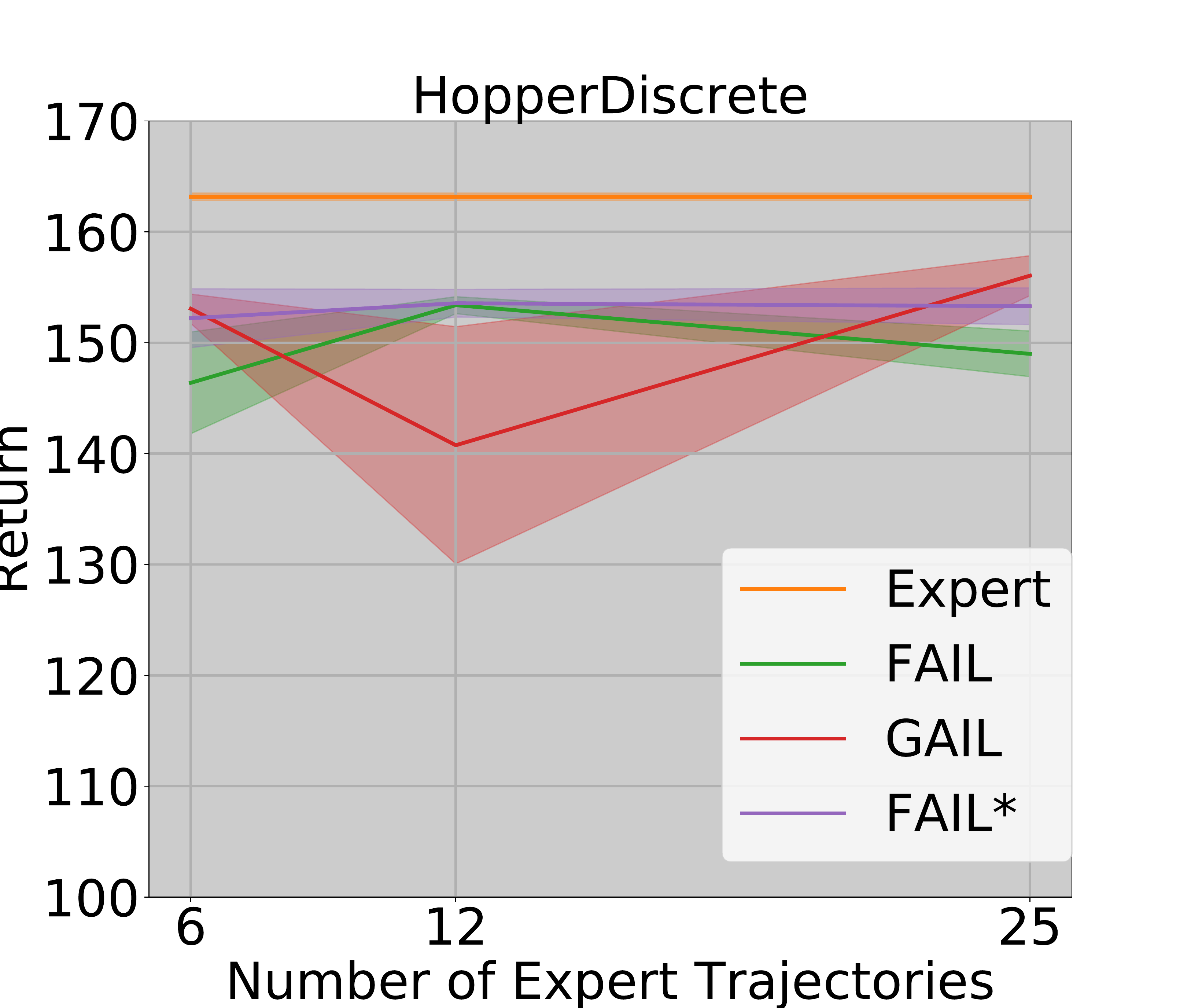}
  \caption{Hopper (1m)}
  \label{fig:hopper_new}
\end{subfigure}
\begin{subfigure}{.235\textwidth}
  \centering
  \includegraphics[width=1.0\linewidth]{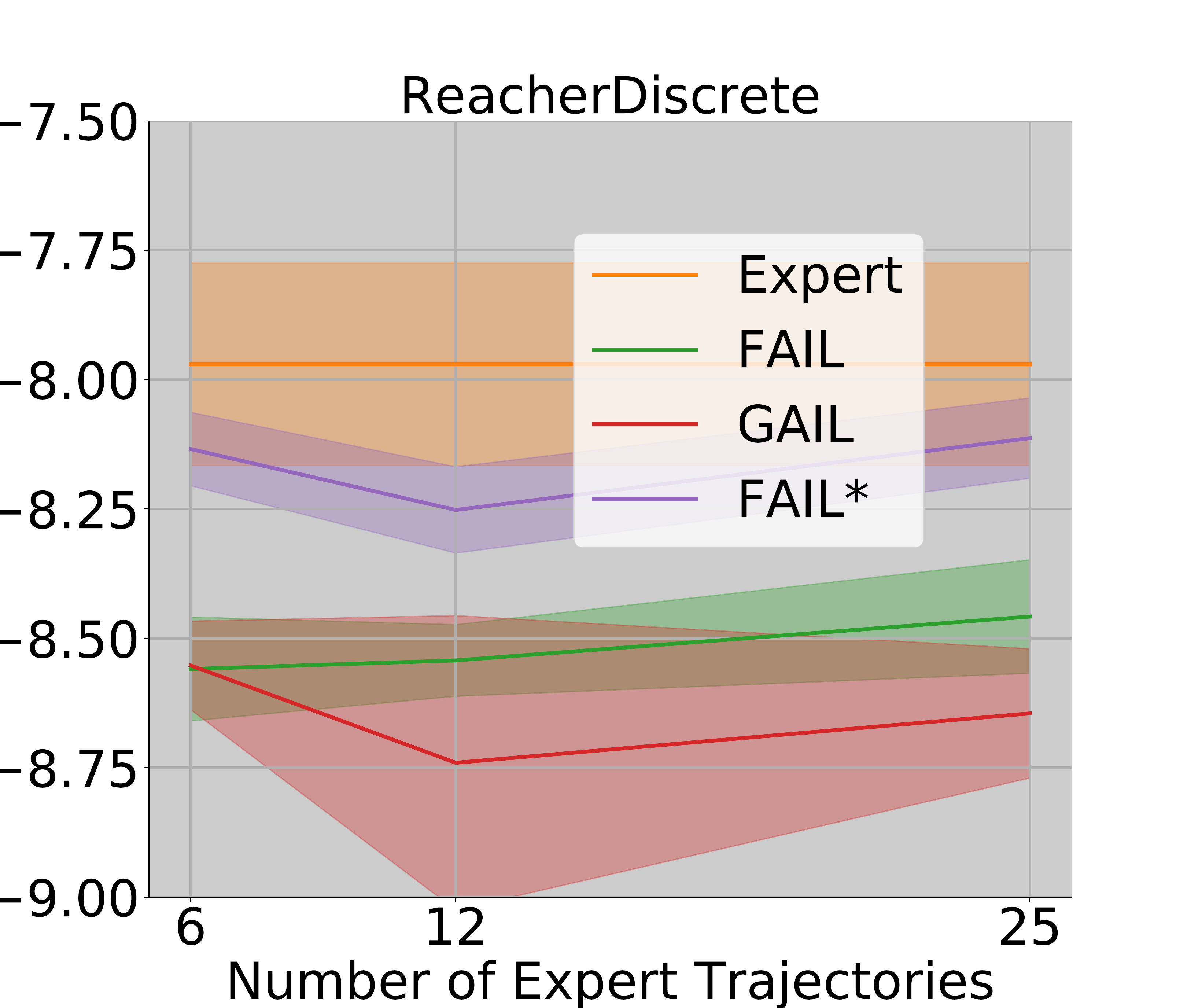}
  \caption{Reacher (1m)}
  \label{fig:reacher_new}
\end{subfigure}
\begin{subfigure}{.235\textwidth}
  \centering
  \includegraphics[width=1.0\linewidth]{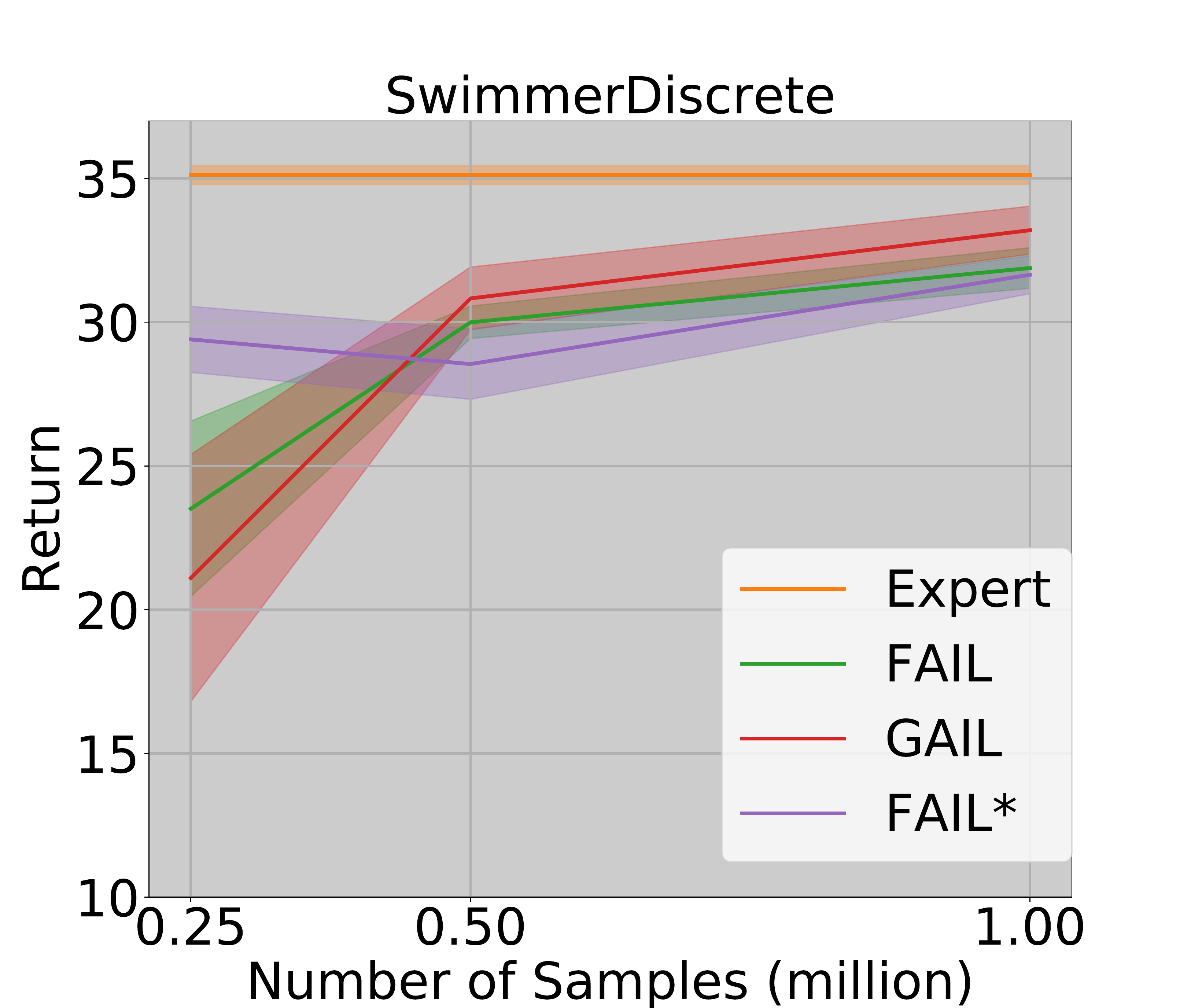}
  \caption{Swimmer (25)}
  \label{fig:swimmer_new_2}
\end{subfigure}%
\begin{subfigure}{.235\textwidth}
  \centering
  \includegraphics[width=1.0\linewidth]{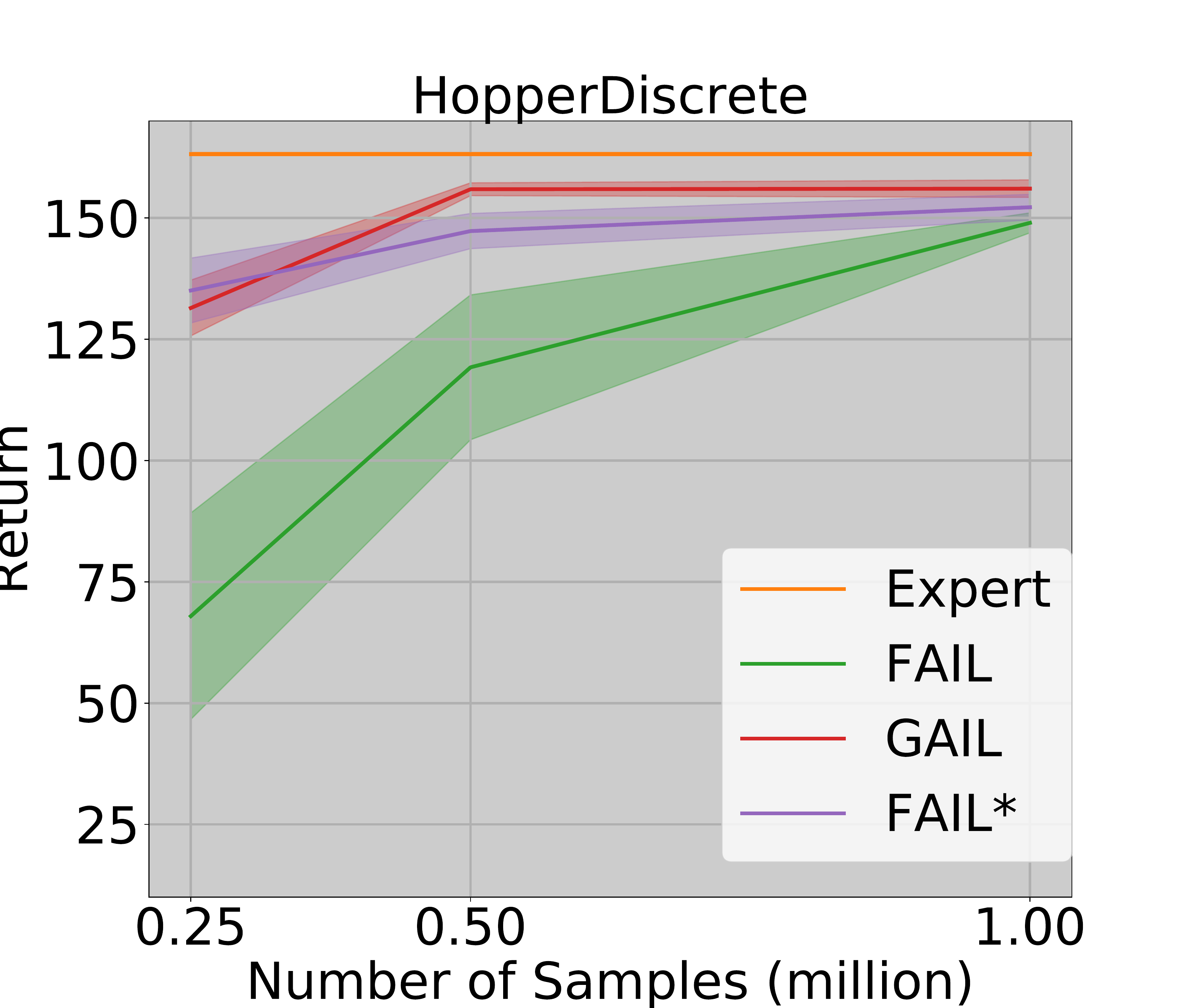}
  \caption{Hopper (25)}
  \label{fig:hopper_new_2}
\end{subfigure}
\begin{subfigure}{.235\textwidth}
  \centering
  \includegraphics[width=1.0\linewidth]{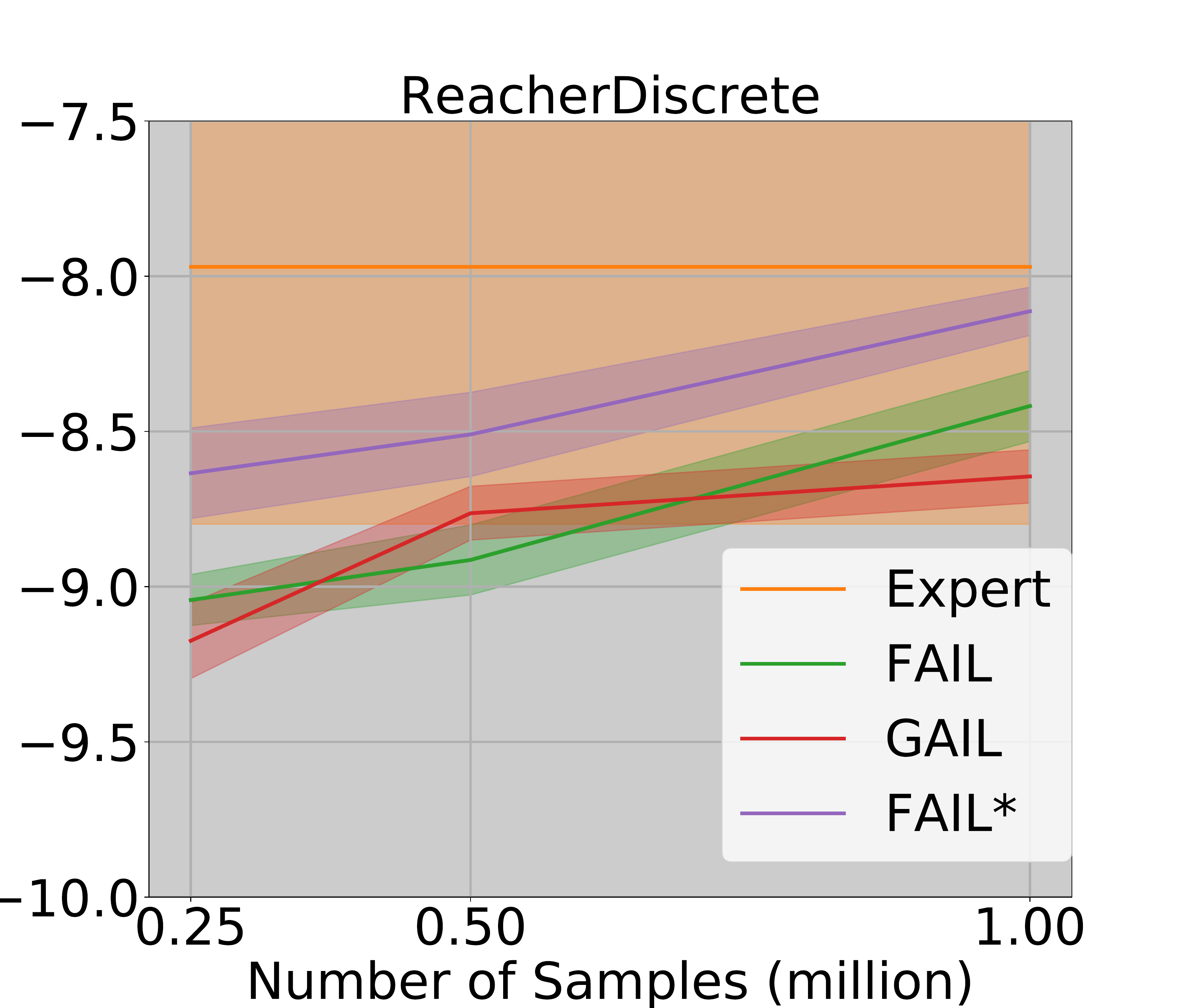}
  \caption{Reacher (25)}
  \label{fig:reacher_new}
\end{subfigure}
\caption{Performance of expert, \fail${^*}$ (\pref{alg:main_alg_2}), \fail (\pref{alg:main_alg}),  and GAIL (without actions) on three  control tasks. For the top line, we fix the number of training samples while varying the number of expert demonstrations (6, 12, 25). For the bottom line, we fix the number of expert demonstrations, while varying the number of training samples. All results are averaged over 10 random seeds.}
\label{fig:new_fail}
\vspace{-5pt}
\end{figure}

We test~\pref{alg:main_alg_2} on the same set of environments (\pref{fig:new_fail}) under 10 random rand seeds, with all default parameters. We observe that \fail${^*}$ can be more sample efficient especially in small data setting (e.g., $0.25$ million training samples).  Implementation of~\pref{alg:main_alg_2} and scripts for reproducing results can be found in \url{https://github.com/wensun/Imitation-Learning-from-Observation/tree/fail_star}.

\end{document}
